\title{Generalization Analysis of Message Passing Neural Networks on Large Random Graphs}
\date{}
\newcommand*{\email}[1]{\href{mailto:#1}{\nolinkurl{#1}} }
\author{Sohir Maskey \thanks{Equal contribution} \thanks{Department of Mathematics, LMU Munich, 80333 Munich, Germany
  (\email{maskey@math.lmu.de}, \email{ylee@math.lmu.de}, \email{levieron@technion.ac.il}, \email{kutyniok@math.lmu.de}).}
\and Ron Levie \footnotemark[1] \thanks{Faculty of Mathematics, Technion - Israel Institute of Technology
}
\and Yunseok Lee\footnotemark[2]
\and Gitta Kutyniok\footnotemark[2] \thanks{Department of Physics and Technology, University of Tromsø, 9019 Tromsø, Norway}
  }
\newcolumntype{M}[1]{>{\centering\arraybackslash}m{#1}}
\def\E{\mathrm{E}}
\def\L2L{L^2(\mathcal{J}) \rightarrow L^2(\mathcal{J})}
\def\cl{L_W}
\def\cmin{\mathrm{d}_{\mathrm{min}}}
\def\cmax{\|W\|_\infty}
\def\Lipf{L_f}
\def\Lipfl{L_{f^{(l)}}}
\def\LipPsi{L_{\Psi}}
\def\d{\mathrm{dist}}
\def\LipW{L_W}
\DeclareMathOperator{\Var}{Var}
\renewcommand{\P}{\mu}
\renewcommand{\E}{\mathbb{E}}
\renewcommand{\L}{\mathcal{L}}
\theoremstyle{plain}
\newtheorem{theorem}{Theorem}[section]
\newtheorem{definition}[theorem]{Definition}
\newtheorem{remark}[theorem]{Remark}
\newtheorem{lemma}[theorem]{Lemma}
\newtheorem{assumption}[theorem]{Assumption}
\newtheorem{approximation problem}[theorem]{Approximation problem}
\newtheorem{corollary}[theorem]{Corollary}
\newcounter{RonCounter}
\newcounter{SohirCounter}
\begin{document}
\maketitle

\begin{abstract}
Message passing neural networks (MPNN) have seen a steep rise in popularity since their introduction as generalizations of convolutional neural networks to graph structured data, and are now considered state-of-the-art tools for solving a large variety of graph-focused problems. We study the generalization error of MPNNs in graph classification and regression. We assume that graphs of different classes are sampled from different random graph models.
We show that, when training  a MPNN on a dataset sampled from such a distribution, the generalization gap increases in the complexity of the MPNN, and  decreases, not only with respect to the number of training samples, but also with the average number of nodes in the graphs. This shows how a MPNN with high complexity can generalize from a small dataset of graphs, as long as the graphs are large.
The generalization bound is derived from a uniform convergence result, that shows that any MPNN, applied on a graph, approximates the MPNN applied on the geometric model that the graph discretizes. 

\end{abstract}

\section{Introduction}
A graph is an abstract structure that represents a set of objects along with the connections that exist between those objects. 
In many important fields, such as chemistry, biology, social networks, or computer graphics, data can be described by graphs.
This has led to a tremendous interest in the development of
machine learning models for graph-structured  data in recent years.
A ubiquitous tool for processing such data
are graph convolutional neural networks (GCNNs), which extend standard Euclidean convolutional neural networks (CNNs) to graph-structured data.

Most GCNNs used in practice can be described using the general architecture of \emph{Message Passing Neural Networks (MPNNs)}.
MPNNs generalize the convolution operator to graph domains by a neighborhood aggregation or message passing scheme. By $\mathbf{f}_i^{(t-1)}$
denoting the feature of node $i$ in layer $t-1$ and $\mathbf{e}_{j,i}$ denoting edge features from node $j$ to $i$, one layer in a message passing graph neural network is given by
\begin{equation}
    \label{eq:gMPNN}
    \mathbf{f}_i^{(t)} = \Psi^{(t)} \Big(\mathbf{f}_i^{(t-1)}, \mathbf{AGG}  \big\{ \Phi^{(t)} (\mathbf{f}_i^{(t-1)}, 
    \mathbf{f}_j^{(t-1)}, \mathbf{e}_{j,i} )\big\}_{j \in \mathcal{N}(i)} \Big),
\end{equation}
where $\mathcal{N}(i)$ is the set of nodes connected to node $i$, $\mathbf{AGG}$ denotes a differentiable and permutation invariant function, e.g., sum, mean, or max, and  $\Psi^{(t)}$ and $\Phi^{(t)}$ denote differentiable functions such as MLPs (Multi-Layer Perceptrons) \cite{Fey/Lenssen/2019}.

MPNNs have shown state-of-the-art performance in many graph machine learning tasks such as  node or graph classification.
As such, MPNNs had a tremendous impact to the applied sciences, with promising achievements such as discovering a new class of antibiotics \cite{STOKES2020688}, and has impacted the industry with applications in social media, recommendation systems, and 3D reconstruction, among others  (see, e.g., \cite{10.1145/3219819.3219890, 10.1145/3219819.3219869,wang2018pixel2mesh, monti2019fake, 10.1145/3308558.3313488}).
The practical success of MPNNs led to a significant boost in research aimed at understanding the theoretical properties of MPNNs. See, e.g., the variational inference point of view of MPNNs \cite{10.5555/3045390.3045675}, and algorithmic alignment of MPNNs with combinatorial algorithms \cite{xu2018how, Morris_Ritzert_Fey_Hamilton_Lenssen_Rattan_Grohe_2019}.

In this paper we study the generalization capabilities of MPNNs  {with mean aggregation} in a graph classification task. 
{Previous works developed generalization bounds that do not depend on any model of the data, namely, graphs in these works can be generated and labeled in any arbitrary way \cite{scarselli2018vapnik, pmlr-v119-garg20c, liao2021a}. In this work, we consider a generative model for the graphs which is theoretically powerful and general on the one hand, and allows much tighter generalization bounds on  the other hand.}

Formally, we are given pairs of graphs and graph signals $\mathbf{x} = (G, \mathbf{f})$ and a target output $\mathbf{y}$, where $(\mathbf{x}, \mathbf{y})$ are jointly drawn from  a distribution $\mu_\mathcal{G}(\mathbf{x},\mathbf{y})$. The goal is to learn a MPNN $\Theta$ that approximates $\mathbf{y}$ by $\Theta(\mathbf{x})$. For this, one uses a loss function $\mathcal{L}$, 
which measures the discrepancy between the true label $\mathbf{y}$ and the output of the MPNN $\Theta(\mathbf{x})$.  The aim of a machine learning algorithm is to minimize the statistical loss (also called expected loss)
\[
R_{exp}(\Theta) = 
\E_{(\mathbf{x},\mathbf{y})\sim \mu_\mathcal{G}} \Big[ \mathcal{L}(\Theta(\mathbf{x}), \mathbf{y} ) \Big].
\]

In (data-driven) machine learning one has only access to a training set instead of knowing the distribution $\mu_\mathcal{G}$. Namely, we consider a multi-graph setting, where the training set 
$\mathcal{T} = (\mathbf{x}^i = (G^i, \mathbf{f}^i), \mathbf{y}^i)_{i=1}^m$ is a collection of $m$  samples drawn i.i.d. from the distribution $\mu_\mathcal{G}(\mathbf{x},\mathbf{y})$. Then, instead of minimizing the statistical loss,
one minimizes the empirical loss, given by
\[
R_{\mathrm{emp}}(\Theta) = \frac{1}{m} \sum_{i=1}^m \mathcal{L}(\Theta(\mathbf{x}^i), \mathbf{y}^i ).
\]
The optimized MPNN then depends on the dataset, and is hence denoted by $\Theta_{\mathcal{T}}$.
The \emph{generalization error} is defined to be 
\begin{equation}
\label{eq:genError}
GE(\Theta_{\mathcal{T}}) = | R_{\mathrm{exp}}(\Theta_{\mathcal{T}}) - R_{\mathrm{emp}}(\Theta_{\mathcal{T}})|.
\end{equation}
One then usually bounds (\ref{eq:genError}) by the \emph{uniform generalization error} 
\begin{equation}
\label{eq:genError2}
GE = \sup_{\Theta} | R_{\mathrm{exp}}(\Theta) - R_{\mathrm{emp}}(\Theta)|,
\end{equation}
where the supremum is taken over some space of MPNNs. 
Bounds of $GE$ typically take the form $GE^2\leq  \frac{C}{m}q(N)$, where $C$ is a constant that describes the complexity of the model class  (e.g., number of parameters), $m$ is the size of the training set, and $q(N)$ is a constant that depends on the (average) size of the graphs. For such bounds, see, e.g., VC-dimension based bounds \cite{scarselli2018vapnik}, Rademacher complexity based bounds \cite{pmlr-v119-garg20c}, and PAC-Bayesian based bounds \cite{liao2021a}.

While in previous bounds from the literature $q(N)$ either increases in $N$ or in the average degree,  
  in this paper we  develop a generalization bound that decays in the average number of nodes $N$. The idea is to treat the nodes of each graph as randomly sampled from some random graph model.
In this point of view, not only the different graphs $\mathbf{x}^i$ are seen as random samples, but the union of all nodes of all graphs comprise together the random samples of the empirical loss. 
 In the spirit of Monte Carlo theory, such a point of view should lead to a decay of the error between the empirical and statistical losses as $N$ increases. As opposed to graphs, nodes cannot be seen as independent, due to the correlations entailed by the graph structure. Hence, our analysis focuses on developing Monte Carlo error bounds in a correlated nodes regime. 



Since in our approach we model graphs as randomly sampled from underlying continuous models, 
 we
define the application of message passing neural networks, not only on graphs, but also on the underlying space from which graphs are sampled. We then formulate and prove the following convergence result, that we write here informally.
Let $\mathbf{x} = (G,\mathbf{f})$ be drawn from the model $\chi$, then with high probability, we have for all MPNNs $\Theta$ 
\[
    \|\Theta(\mathbf{x}) - \Theta(\chi)\| = O(N^{-\alpha}),
\]
where $N$ is the number of nodes in $\mathbf{x}$ and $\alpha>0$. 
Based on this convergence result, we are able prove a generalization bound that decays in $N$.

\subsection{{Validity of the Proposed Model}}
\label{SecNew1}

{The random graph models in our work are graphons \cite{lovasz} with associated graphon signals (see Definition  \ref{def:RGM}). The main assumption in our analysis is that graphs that are sampled from the same graphon belong to the same class. While this may seem like a limitation, it is actually a very mild and reasonable assumption. It is well known that equivalence classes of isomorphic graphs can be characterized by homomorphism densities \cite{lovasz67}. Namely, given two graphs $G_1,G_2$, if (and only if) for every simple graph $F$ the number of homomorphisms from $F$ to $G_1$ is equal to the number of homomorphisms from $F$ to $G_2$, then $G_1$ is isomorphic to $G_2$. Graphon analysis relaxes this observation to a continuous similarity measure. A sequence of graphs $\{G_j\}_{j\in\mathbb{N}}$ is said to converge in the graphon sense, if for every simple graph $F$ the homomorphism densities of $F$ in the graphs $\{G_j\}_{j\in\mathbb{N}}$ converge to some value.  Graphs from such a sequence can be thought of as being similar in some sense which relaxes the combinatorial notion of graph isomorphism. Moreover, for each such converging sequence, there is a unique (up to some symmetry) limit object, called a graphon. This graphon is also seen as a  generative model for graphs in the respective sequence, where graphs are generated by randomly sampling the graphon (see Definition \ref{def:RGM}). Now, since it is well known that MPNNs cannot distinguish between isomorphic graphs, it is also unreasonable to expect them to separate two graphs that are sampled from the same graphon. We hence assume that two graphs that are sampled from the same graphon belong to the same class (but not necessarily vice versa). This assumption allows us to derive a generalization bound that is much tighter than previously proposed bounds (see Figure \ref{tbl:Gen1} for comparison).} 

{From a practical stance, our graphon assumption is reasonable since many graph models are special cases of graphons, like Erdős–Rényi, stochastic block model, and random geometric graphs \cite{RGGPenrose}. Moreover, the decoder of a graph variational autoencoder \cite{KipfWellingGraphVAE} can be seen as a graphon.} 


\subsection{Related Work}
\label{subsec:relatedWork}
In this subsection we  briefly survey  different approaches for studying the convergence and  generalization capabilities  of GCNNs  that were introduced in previous contributions. We give a comparison with our results in Section \ref{sec:Main}. 

 
In \cite{levie2021transferability}, the authors introduce the notion of GCNN transferability -- the ability to transfer a GCNN between different graphs, which is closely related to generalization. 
For example, \cite{ LevieTransfSpectralGraphFiltersShort, Gama_2020, kenlay2021interpretable} show that the output of spectral-based GCNNs is linearly stable  with respect to perturbations of the input graphs. 
\cite{levie2021transferability} prove that spectral-based methods are transferable under graphs and graph signals that are sampled from the same latent space.
\cite{keriven2020convergence, 9356126,ruiz2020graphon, maskey2021transferability} show that spectral-based GCNNs are transferable under graphs that approximate the same limit object -- the so called graphon. 

 
In \cite{scarselli2018vapnik}, the authors provide generalization bounds that are comparable to VC-dimension bounds known for CNNs.
These bounds are improved in  \cite{pmlr-v119-garg20c}, which provides the first
data dependent generalization bounds for MPNNs with sum aggregation that  are comparable to Rademacher bounds for recurrent neural networks. \cite{liao2021a} derive a generalization bound via a PAC-Bayesian approach that is governed by the maximum node degree and spectral norms of the weights. \cite{StabGenofGCNN} consider generalization abilities of single-layer spectral GCNNs for node-classification task and provide a generalization bound that is directly proportional to the largest eigenvalue of the graph Laplacian.
Another paper of this flavor is
\cite{yehudai2020size}, showing that certain MPNNs (with sum aggregation) do not generalize from small to large graphs.

\subsection{Main Contributions}
\label{subsec:mainContr}

We follow the route of \cite{keriven2020convergence} and consider
graphs as discretizations of continuous spaces in our analysis, called random graph models (RGM, see Definition \ref{def:RGM}).  We introduce a continuous version of message passing neural networks -- the realization of MPNNs 
on random graph models, which we call cMPNNs. Such cMPNNs are seen as limit objects of graph MPNNs, when the number of graph nodes goes to infinity.
We prove, up to our knowledge, the first  convergence result  of the graph MPNN to the corresponding cMPNN as the number of nodes increases, which is uniform in the choice of the MPNN.  
 
For the generalization analysis, we assume that the data distribution $\mu_\mathcal{G}$ represents graphs which are randomly sampled from a collection of template RGMs, with a random number of nodes. Using our convergence 
results, we can then prove that the generalization error between the training set and the true  distribution is small.  
Here, we give the following informal  version of Theorem \ref{thm:mainGenError}.

\begin{theorem}[Informal version of Theorem \ref{thm:mainGenError}]   
Consider a graph classification task with $m$ training samples $\mathcal{T} = (\mathbf{x}^i = (G^i, \mathbf{f}^i), \mathbf{y}^i)_{i=1}^m$ drawn i.i.d. from the data distribution $\mu_\mathcal{G}(\mathbf{x},\mathbf{y})$ on a metric-measure space $\chi$ of dimension $D_\chi$. Suppose that the size $N$ of each graph in $\mathcal{T}$  is drawn from a distribution $\nu$.
Then 
\begin{align*}
 \E_{\mathcal{T}\sim \mu_\mathcal{G}^m}&\left[ \sup_\Theta\big(R_{emp}(\Theta) -   R_{exp}(\Theta)  \big)^2\right]   \leq \frac{C}{m}\E_{N \sim \nu}\big[N^{-\frac{1}{D_\chi+1}} \big].
\end{align*}
\end{theorem}

 The constant $C$  represents the complexity of the hypothesis space of the network, via the Lipschitz constants of the message and update functions and the depth of the MPNNs. 
 
 Theorem \ref{thm:mainGenError} shows how we can use fewer graphs $m$ than model complexity $C$ when training MPNNs if the graphs are sufficiently large.


\section{Preliminaries}
\label{sec:Prelimnaries}
A weighted graph $G=(V, \mathbf{W}, E)$ with $N$ nodes is a tuple, where $V=\{1, \ldots, N\}$  is the node set. The edge set is given by $E \subset V \times V$, where $(i,j) \in E$ if node $i$ and $j$ are connected by an edge. $\mathbf{W} = (w_{k,l})_{k,l}$  is the weight matrix, assigning the weight $w_{i,j}$ to the edge $(i,j) \in E$, and assigning zero if $(i,j)$ is not an edge. 
The degree $\mathrm{d}_i$ of a node $i$ is defined as $\mathrm{d}_i = \sum_{j=1}^N w_{i,j}$.
If $G$ is a simple graph, i.e., a weighted graph with $\mathbf{W} \in \{0,1\}^{N \times N}$, the degree $\mathrm{d}_i$ is the number of nodes connected to node $i$ by an edge.
We define a \emph{graph signal} $\mathbf{f}: V \rightarrow \mathbb{R}^F$ as a function that maps nodes to their features in $\mathbb{R}^F$, where $F\in\mathbb{N}$ is the feature dimension. The signal $\mathbf{f}$ can be represented by a matrix $\mathbf{f}=(\mathbf{f}_1,\ldots,\mathbf{f}_N)\in \mathbb{R}^{N\times F}$, where $\mathbf{f}_i \in \mathbb{R}^F$ is the feature at node $i$. 
We also call $\mathbf{f}$ a \emph{(graph) feature map}.

For a random variable $Y$ distributed according to $\kappa$, and a function $F$ of $Y$, we denote by $\E_{Y \sim \kappa}[F(Y)]$ the expected value of $F(Y)$. Similarly, we denote by $\Var_{Y \sim \kappa}[F(Y)]$  the variance of $F(Y)$.

\subsection{Message Passing Graph Neural Networks} 
\emph{Message passing graph neural networks (gMPNNs)} are defined by realizing an architecture of a \emph{message passing neural network (MPNN)} on a graph. MPNNs are defined independently of a particular graph.

\begin{definition}
\label{def:MPNN}
Let $T \in \mathbb{N}$ denote the number of layers. For $t=1, \ldots, T$, let  $\Phi^{(t)}: \mathbb{R}^{2 F_{t-1}} \to \mathbb{R}^{H_{t-1}}$ and $\Psi^{(t)}:\mathbb{R}^{F_{t-1} + H_{t-1}} \to \mathbb{R}^{F_{t}}$ be functions that we call the \emph{message} and \emph{update} functions, where $F_t \in \mathbb{N}$ is called the feature dimension of layer $t$. 
The corresponding \emph{message passing neural network (MPNN)} 
$\Theta$ is defined to be the sequence 
\[
\Theta = ((\Phi^{(t)})_{t=1}^T, (\Psi^{(t)})_{t=1}^T).
\]
\end{definition}

The message and the update function in Definition \ref{def:MPNN} are often defined as multi-layer-perceptrons (MLPs). 
In a MPNNs, messages are sent between nodes and aggregated. 
An \emph{aggregation scheme} is a permutation invariant function that takes the collection of features in the edges of each node and computes a new nodes feature. In this paper, we consider MPNNs with \emph{mean aggregation}.
Then, a gMPNN processes graph signals by realizing a MPNN on the graph as follows.

\begin{definition}
\label{def:gMPNN}
Let $G=(V,\mathbf{W})$ be a weighted graph and $\Theta$ be a MPNN, as defined in Definition \ref{def:MPNN}. For each $t \in \{ 1, \ldots, T \}$, we define the \emph{gMPNN} $\Theta^{(t)}_G$ as the mapping that maps input graph signals $\mathbf{f}=\mathbf{f}^{(0)}\in \mathbb{R}^{N\times F_0}$ to the features in the $t$-th layer by
\begin{equation*}
    \Theta^{(t)}_G: \mathbb{R}^{N \times F_0} \rightarrow \mathbb{R}^{N \times F_t}, \ \ \  \mathbf{f} \mapsto \mathbf{f}^{(t)} = (\mathbf{f}^{(t)}_i)_{i=1}^N,
\end{equation*}
where $\mathbf{f}^{(t)}\in\mathbb{R}^{N\times F_t}$ are defined sequentially by
\begin{equation*}
\label{eq:graphAgg}
    \begin{aligned}
& \mathbf{m}_i^{(t)} := \frac{1}{\mathrm{d}_i}\sum_{j=1}^N w_{i,j} \Phi^{(t)}(\mathbf{f}_i^{(t-1)}, \mathbf{f}_j^{(t-1)}) \\
& \mathbf{f}_i^{(t)} := \Psi^{(t)}(\mathbf{f}_i^{(t-1)}, \mathbf{m}_i^{(t)}),\\
\end{aligned}
\end{equation*}
for every $i\in V$.  
We call $\Theta_G := \Theta_G^{(T)}$ a \emph{message passing graph neural network (gMPNN)}.
\end{definition}

Given a MPNN $\Theta$ as defined in Definition \ref{def:MPNN}, the output $\Theta_G(\mathbf{f}) \in \mathbb{R}^{N  \times F_T}$ is a graph signal. In graph classification or regression, the network should output a single feature for the whole graph.
Hence, the output of a gMPNN after \emph{global pooling} is a single vector $\Theta_G^P(\mathbf{f}) \in \mathbb{R}^{F_T}$, defined by
\[
 \Theta_{G}^P(\mathbf{f}) =   \frac{1}{N} \sum_{i=1}^N \Theta_G(\mathbf{f})_i.
\]

For brevity, in this paper we typically do not distinguish between a MPNN and its realization on a graph.

\subsection{Random Graph Models}
Let $(\chi, d, \P)$ be a metric-measure space,   where $\chi$ is a set, $d$ is a metric and $\P$ is a probability Borel measure. 

A \emph{kernel} (also called a \emph{graphon}), is a measurable mapping $W: \chi \times \chi \to \mathbb{R}$. The points $x\in \chi$ of the metric space are seen as the nodes of a continuous model,  and the kernel is seen as a continuous version of a weight matrix. Kernels are treated as generative graph models using the following definition. 

\begin{definition}
\label{def:RGM}
A \emph{random graph model (RGM)}   on $(\chi, d, \mu)$ is defined as a pair $(W,f)$  of a kernel $W:\chi \times \chi \to \mathbb{R}$ and a measurable function  $f: \chi \to \mathbb{R}$ called a \emph{metric-space signal}. We define a \emph{random graph} with corresponding node features $(G, \mathbf{f})$ by sampling $N$  i.i.d. random points $X_1, \ldots, X_N$ from $\chi$, with probability density $\mu$, as the nodes of $G$. The weight matrix $\mathbf{W}=(w_{i,j})_{i,j}$ of $G$ is defined by $w_{i,j} = W(X_i, X_j)$ for $i,j =1, \ldots,N$. The graph signal $\mathbf{f}$ is defined by $\mathbf{f}_i = f(X_i)$. We say that $(G, \mathbf{f})$ is \emph{drawn} from $W$, and denote $(G, \mathbf{f}) \sim (W,f)$. 
\end{definition}

\subsection{Continuous Message Passing Neural Networks}
\label{subsec:cMPNN}
Given a MPNN, we define \emph{continuous message passing neural networks (cMPNNs)} that act on kernels and metric-space signals $f:\chi\rightarrow \mathbb{R}^F$, 
 by replacing the graph node features and the aggregation scheme in (\ref{eq:graphAgg}) by continuous counterparts.  
Let $W$ be a kernel.
We define the \emph{kernel degree} of $W$ at $x\in\chi$ by 
\begin{equation}
    \label{eq:kernelDegreeMain}
    \mathrm{d}_W(x) = \int_\chi W(x,y) d\P(y).
\end{equation}
Consider a message signal $U:\chi\times\chi\rightarrow\mathbb{R}^H$, where $U(x,y)$ is interpreted as a message sent from the point $y$ to the point $x$ in $\chi$.
We define the continuous mean aggregation of $U$ by 
\[
M_W(U)(x) = \int_\chi
\frac{W(x, y)}{ \mathrm{d}_W(x) }  U(x, y) d\P(y).
\]
Given the messages $U(x,y)=\Phi(f(x),f(y))$, where $\Phi:\mathbb{R}^{2F}\rightarrow\mathbb{R}^H$, we have
\[
M_W(U)(x) = M_W\Big(\Phi\big(f(\cdot),f(\cdot\cdot)\big)\Big)(x) 
=
\int_\chi
\frac{W(x, y)}{ \mathrm{d}_W(x) }  \Phi\big(f(x),f( y)\big) d\P(y).
\]
By abuse of notation, we often denote in short
 $\Phi(f,f):=\Phi\big(f(\cdot),f(\cdot\cdot)\big)$.


By replacing mean aggregation by continuous mean aggregation in Definition \ref{def:gMPNN}, the same  message and update functions that define a graph MPNN can also process metric-space signals. 

\begin{definition}
\label{def:cMPNN}
Let $W$ be a kernel and $\Theta$ be a MPNN, as defined in Definition \ref{def:MPNN}. For each $t \in \{ 1, \ldots, T \}$, we define $\Theta^{(t)}_W$  as the mapping that maps the input signal  to the signal in the $t$-th layer by
\begin{equation}
    \Theta^{(t)}_W: L^2(\chi) \rightarrow L^2(\chi), \ \ \ f \mapsto f^{(t)},
    \label{cMPNNdef}
\end{equation}
where $f^{(t)}$ are defined sequentially by
\begin{equation}
    \begin{aligned}
& g^{(t)}(x) = M_W \Big(\Phi^{(t)}\big(f^{(t-1)}, f^{(t-1)}\big)\Big) (x) \\
& f^{(t)}(x) = \Psi^{(t)}\Big(f^{(t-1)}(x), g^{(t)}(x)\Big)\\
\end{aligned}
\end{equation}
and $f^{(0)} = f : \chi\rightarrow \mathbb{R}^{F_0}$ is the input metric-space signal. We call $\Theta_W := \Theta_W^{(T)}$ a \emph{continuous message passing neural network (cMPNN)}.
\end{definition}

As with graphs, 
 the output of a cMPNN $\Theta_W$ on a metric-space signal $f:\chi \to \mathbb{R}^{F_0}$ is another metric-space signal $\Theta_W(f):\chi \to \mathbb{R}^{F_T}$.
The output of a cMPNN after \emph{global pooling} is a single vector $\Theta_W^P(f) \in \mathbb{R}^{F_T}$, defined by 
$
 \Theta_{W}^P(\mathbf{f}) =   \int_\chi \Theta_W(f)(x) d\P(x)$. 

\subsection{Data Distribution for  Graph Classification Tasks}
\label{subsec:GenDataDistr}

In the following, we consider  a training data  
$\mathcal{T} = \big(\mathbf{x}^i = (G^i, \mathbf{f}^i), \mathbf{y}^i\big)_{i=1}^m$ 
of graphs $G^i$, graph signals $\mathbf{f}^i$, and corresponding values $\mathbf{y}^i$ that can represent the classes of the graph-signal pairs. The training data is assumed to be drawn i.i.d. from a distribution $\mu_\mathcal{G}(\mathbf{x},\mathbf{y})$ that we describe next.

In this paper, we focus on classification tasks.
More precisely we have 
classes $j = 1, \ldots, \Gamma$, each represented by a RGM $(W^j, f^j)$ on a metric-measure space $(\chi^j, d^j, \P^j)$. In fact, we suppose that each class corresponds to a set of metric spaces. For example, a graph representing a chair can be sampled from a template of either an office chair, a garden chair, a bar stool, etc., and each of these is represented by a metric space. For simplicity of the exposition, we however treat every template metric space as its own class. This does not affect our analysis. 

The distribution
$\mu_{\mathcal{G}}(\mathbf{x},\mathbf{y})$  is defined via the following procedure of data sampling. 
For sampling one graph, first, choose a class with probability $\gamma_j$, i.e., for $(\mathbf{x},\mathbf{y}) \sim \mu_\mathcal{G}$ and $j = 1 , \ldots, \Gamma$,
 $
\gamma_j = \mathbb{P}(\mathbf{y} = j)
$. 
Independently of the choice of the class, choose the number of nodes $N \sim \nu$, where $\nu$ is a discrete distribution on 
    $ N \in \mathbb{N}$.
After choosing a class $\mathbf{y} \in \{1, \ldots, \Gamma \}$ and the graph size $N$, a random graph $(G, \mathbf{f}) \sim (W^{\mathbf{y}}, f^{\mathbf{y}})$ with $N$ nodes is drawn from the space $\chi^{\mathbf{y}}$  with probability density of the nodes $(\P^{\mathbf{y}})^N$.


The notation $\mathcal{T} \sim \mu_\mathcal{G}^m$ describes a dataset $\mathcal{T}$ consisting of $m$ samples $(\mathbf{x}^1, \mathbf{y}^1), \ldots, (\mathbf{x}^m, \mathbf{y}^m)$ drawn i.i.d. from $\mu_\mathcal{G}$. We refer to Subsection \ref{subsec:ProbSpaceDataset} in the appendix for a detailed definition of the distribution $\mu_\mathcal{G}$.

\section{Convergence and Generalization of MPNNs}
\label{sec:Main}


In this section, we provide our main results on   convergence (Subsection \ref{sec:Convergence}) and generalization (Subsection \ref{subsec:Generalization}) of MPNNs.  
For $z \in \mathbb{R}^F$, we define  $\|z\|_\infty = \max_{j=1, \ldots, F} |z_j|$. Given a metric space $(\mathcal{Y},d_\mathcal{Y})$, we define the infinity norm of a vector valued function $g:\mathcal{Y} \to \mathbb{R}^F$ by  $\|g\|_\infty = \max_{j=1, \ldots, F} {\rm ess}\sup_{y \in \mathcal{Y}} | (g(y) )_j|$. The function $g$
is called \emph{Lipschitz continuous} if there exists a constant  $L_g \in \mathbb{R} $ such that for all $y,y' \in \mathcal{Y}$, 
\[
\|g(y) - g(y')\|_\infty \leq L_g d_{\mathcal{Y}}(y,y').
\]
If the domain $\mathcal{Y}$ is Euclidean, we always endow it with the $L^{\infty}$-metric.

We measure the error between the output of a continuous MPNN and a gMPNN after pooling as follows. Given a graph signal $\mathbf{f}  \in \mathbb{R}^{N  \times F}$ and a metric-space signal $f: \chi \to \mathbb{R}^F$, both the graph and the continuous MPNN map to the same output space, i.e,   $\Theta_W^P(f) ,\Theta_G^P(\mathbf{f})\in \mathbb{R}^{F_T}$. Namely, the output dimension of $\Theta^P$ is independent of the random graph model it is realized on and also independent of the graph. Hence, we define the error to be the supremum norm $\|\Theta_W^P(f) -\Theta_G^P(\mathbf{f}) \|_{\infty}$.
 We define the $\varepsilon$-covering numbers of the metric space $\chi$, denoted by $\mathcal{C}(\chi, \varepsilon, d)$, as the minimal number of balls of radius $ \varepsilon$ required to cover $\chi$.

 For every $j=1, \ldots, \Gamma$, we make the following assumptions, which hold for the remainder of the paper.  We assume that there exist constants $C_{\chi^j}, D_{\chi^j} > 0$ such that
 \begin{equation}
 \label{eq:dim}
     \mathcal{C}(\chi^j, \varepsilon, d) \ \leq \  C_{\chi^j} \  \varepsilon^{-D_{\chi^j}}
 \end{equation} for every $\varepsilon > 0$. Denote $D_{\chi}= \max_j D_{\chi^j}$ and $C_{\chi}= \max_j C_{\chi^j}$ Such constants exist  for every metric space with finite Minkowski dimension (see Appendix \ref{AppendixA}). We assume that ${\rm diam}(\chi^j):=\sup_{x,y\in\chi^j}\{d(x,y)\} \leq 1$.  
 Further, we only consider kernels $W^j$ such that there exists a constant $\cmin > 0$ satisfying
\begin{equation}
\label{eq:d_W2}
{\rm d}_{W^j}(x) \geq \cmin,
\end{equation}
where the kernel degree ${\rm d}_{W^j}$ is defined in (\ref{eq:kernelDegreeMain}).
We moreover assume that $W^j(x, \cdot)$ and $W^j(\cdot, x)$  are Lipschitz continuous (with respect to its second and first variable, respectively) with Lipschitz constant $L_{W^j}$ for every $x \in \chi$. 
We also assume that the metric-space signal $f^j: \chi \to \mathbb{R}^{F}$ is  Lipschitz continuous. Since the diameter of $\chi^j$ is finite,  this means that $f^j \in L^\infty(\chi)$. 
We consider the following class of MPNNs  
\[
\begin{aligned}
& \mathrm{Lip}_{L, B} := \\
&\Big\{ \Theta = \big((\Phi^{(l)})_{l=1}^T, (\Psi^{(l)})_{l=1}^T\big) \; \Big| \;  \forall l=1,\ldots, T, \ \ 
\Phi^{(l)}:\mathbb{R}^{F_l} \to \mathbb{R}^{H_l} {\rm\ and\ }   \Psi^{(l)}:\mathbb{R}^{F_l + H_l}  \to \mathbb{R}^{F_{l+1}}  \\ 
& \quad \quad \quad \quad \text{ satisfy \ }   L_{\Phi^{(l)}}, L_{\Psi^{(l)}}  \leq L  \text{ and } \|\Phi^{(l)}(0,0)\|_\infty , \|\Psi^{(l)}(0,0)\|_\infty \leq B
\Big\}. 
\end{aligned}
\]

\subsection{Convergence}
\label{sec:Convergence}
In this subsection we show that the error between the cMPNN and the according gMPNN decays when the number of nodes increases. 

\begin{theorem}
\label{thm:MainInProb}
Let $W:\chi^2\rightarrow \mathbb{R}$ be a Lipschitz continuous kernel with Lipschitz constant $L_W$, where the metric space $\chi$ satisfies (\ref{eq:dim}) with respect to the constants $C_{\chi}, D_{\chi} > 0$, and $W$ satisfies (\ref{eq:d_W2}).
Consider a graph $(G, \mathbf{f}) \sim (W, f)$ with $N$ nodes $X_1, \ldots, X_N$ drawn i.i.d. from $\chi$ with probability density $\P$. 
Then,  for every Lipschitz continuous $f:\chi \to \mathbb{R}^F$, 
\[
\E_{X_1, \ldots, X_N \sim \mu^N} \left[ \sup_{\Theta \in \mathrm{Lip}_{L,B}} \big\| 
\Theta^P_G(\mathbf{f}) - 
\Theta^P_W(f) \big\|_\infty^2 \right] 
\leq
C'\big(1+\|f\|_{\infty}^2 + L_f^2\big) \frac{\log(N)}{N^{1/(D_{\chi}+1)}}+\mathcal{O}(N^{-1}),
\]
where  $C'$ is defined in Subsection \ref{subsec:conv} of the appendix. 
\end{theorem}

\begin{remark}
\label{rem:ThmMainProb}
The constant $C'$ in Theorem \ref{thm:MainInProb} 
depends  polynomially on the Lipschitz constants $L_{\Phi^{(l)}}$ and $L_{\Psi^{(l)}}$ of the message and update functions $\Phi^{(l)}$ and $\Psi^{(l)}$, on the  so called \emph{formal biases} $\|\Phi^{(l)}(0,0)\|_\infty$ and $\|\Psi^{(l)}(0,0)\|_\infty$, on $\|W\|_\infty$, on the Lipschitz constant $L_W$ of $W$, on $\sqrt{\log(C_\chi)}  + \sqrt{D_\chi}$,  and on $\frac{1}{{\rm d}_{\min}}$, where the degree of the polynomial is  $T$.  
A regularization of  these  constants can alleviate the exponential dependency of the bound on $T$.  
\end{remark}

The proof of Theorem \ref{thm:MainInProb} is given in Subsection \ref{subsec:conv} of the appendix.

\paragraph{Discussion and Comparison to other Convergence Results}
The work closest related to our convergence results is \cite{keriven2020convergence}, where the authors show convergence of  a fixed spectral GCNN to its continuous counterpart with comparable regularity assumptions as in Theorem \ref{thm:MainInProb}.  
Our result  holds for MPNNs, which are more general than spectral GCNNs. Moreover, our bound is uniform in the choice of the MPNN $\Theta$. This last property is essential for leveraging the convergence result to  derive a  generalization error.  Indeed, using the bound from \cite{keriven2020convergence}, for each MPNN $\Theta$ there is a different high probability event $\mathcal{E}_{\Theta}$ where the convergence error is small. However, the trained MPNN $\Theta=\Theta_{\mathcal{T}}$ depends on the dataset $\mathcal{T}$ and cannot be fixed in the analysis. Hence, we would need to intersect all events $\bigcap_{\Theta}\mathcal{E}_{\Theta}$ to guarantee a small convergence error of the trained network $\Theta_{\mathcal{T}}$, which would not result in an event of high probability.



\subsection{Generalization}
\label{subsec:Generalization}

In this subsection, we state the main result of our paper, which provides a non-asymptotic bound on the generalization error of MPNNs, as defined in (\ref{eq:genError2}). 
 We consider a graph classification task with a training set $\mathcal{T} = (\mathbf{x}^i = (G^i, \mathbf{f}^i), \mathbf{y}^i)_{i=1}^m $ and $\Gamma$ classes. The graphs and graph features in $\mathcal{T}$ are drawn i.i.d. from a probability distribution $\mu_\mathcal{G}(\mathbf{x},\mathbf{y})$ as described in Subsection \ref{subsec:GenDataDistr}. We recall that the distribution that samples the size of the graph  is denote by $\nu$. 

Given a MPNN with pooling, $\Theta^P$, and its output dimension $\mathbb{R}^{F_T}$, we consider a non-negative loss function 
$
\mathcal{L}: \mathbb{R}^{F_T}\times \{1, \ldots ,\Gamma\} \to [0,\infty)
$. 
Additionally, we assume that $\mathcal{L}$ is Lipschitz continuous with Lipschitz constant $L_\mathcal{L}$. 
Note that although the cross-entropy loss, a popular choice for loss function in classification tasks, is not Lipschitz-continuous, cross-entropy composed on softmax is.

\begin{theorem}
\label{thm:mainGenError}
There exists a constant $C>0$ such that 
\[
\begin{aligned} &   \E_{\mathcal{T}\sim p^m}\left[\sup_{\Theta \in \mathrm{Lip}_{L,B}} \Big(R_{emp}(\Theta^P)   - R_{exp}(\Theta^P) \Big)^2  \right]\leq   \frac{2^\Gamma8\|\mathcal{L}\|_\infty^2\pi}{m} \\ &+ \frac{2^\Gamma L_{\mathcal{L}}^2   C}{m} \sum_{j}  \gamma_j \big(1 + \|f^j\|^2_\infty + L_{f^j}^2\big) 
\cdot    \left( \E_{N \sim \nu} \left[ \frac{1}{N} + \frac{1 + \log(N)}{N^{1/(D_{\chi^j} + 1)}} + \mathcal{O}\left( \exp(-N)N^{3T-\frac{3}{2}} \right) \right] \right),
\end{aligned}
\]
where $C$ is specified in Subsection \ref{subsec:GenErrorProof} of the appendix. 
\end{theorem}
The proof of Theorem \ref{thm:mainGenError} is given in Subsection \ref{subsec:GenErrorProof} of the appendix.

\begin{remark}
\label{Rem34}
The constant $C$ in Theorem \ref{thm:mainGenError} represents the complexity of the class $\mathrm{Lip}_{L,B}$ and can be bounded similarly to the constant $C'$ from Theorem \ref{thm:MainInProb}, as described in Remark \ref{rem:ThmMainProb}. We summarize its dependencies on the parameters of the MPNN and the RGM by  $\sqrt{C}\lesssim B L^{2T}  \frac{1}{\mathrm{d}_{\mathrm{min}}^{T+1}} \max_{j=1,\ldots, \Gamma} \big(\sqrt{\log (C_{\chi^j})} +  \sqrt{D_{\chi^j}} \big) L_{W^j} \|W^j\|_\infty^{T}$ and refer to  Subsection \ref{subsec:disGenBound} of the appendix for more details. 
  %
  Similarly to Remark \ref{rem:ThmMainProb} the exponential dependency of the constant $C$ in Theorem \ref{thm:mainGenError} on the depth $T$ and the polynomial dependency on the uniform Lipschitz bound $L$ can be alleviated by regularizing the latter. We also note that the exponential dependency on the number of classes $\Gamma$   in Theorem \ref{thm:mainGenError} can be eliminated by assuming that the data is representative, i.e., if the number of training samples that fall into class $j=1, \ldots, \Gamma$ is deterministically $\gamma_j m$.
\end{remark} 



The term $\frac{2^\Gamma8\|\mathcal{L}\|_\infty^2\pi}{m}$ in Theorem \ref{thm:mainGenError} does not depend on the model complexity and is typically much smaller than the second term. Hence, it does not affect  bias–variance tradeoff considerations, and can be ignored in the  situation where $m \gg  C \mathbb{E}_{N\sim\nu}[\log(N)N^{-\frac{1}{D_{\chi}+1}}]\gg 1$. 
 Theorem \ref{thm:mainGenError} allows us to think not just about graphs as samples, but also about individual nodes as samples. However, nodes are correlated with their neighbors, and the higher the dimension $D_{\chi}$ is, the larger the neighborhoods are. This is why the dependency on the number of nodes is $N^{-\frac{1}{2(D_{\chi}+1)}}$ and not $N^{-1/2}$. 
  Still, this dependency of the bound on $N$ explains one way in which we train on less graphs than model complexity and still generalize well.  
{Another insight is that the generalization bound becomes smaller the smaller the Lipschitz constants of the message and update functions  (see Remark \ref{Rem34}). This indicates that regularization methods like weight decay promote generalization.}

\begin{table}[t]
\centering
\caption{Comparison of generalization bounds for GNNs. 
We consider the following formula for a generic generalization bound: $GE \leq  m^{-1/2}  A(d,N) B(h) C(L, T) + Em^{-1/2}$, where $m$ is the samples size, 
$T$ is the depth, $L$ is the bound of the Lipschitz constants of the message and update functions, $h$ is the maximum hidden dimension, $d$ is the average node degree and  $N$ is the graphs size and $E$ is a term that does not depend on the model complexity.  } 
\vskip 0.15in
\begin{tabular}{cccc} 
\midrule
   &  $A(d, N)$   & $B(h)$ & $C(L)$ \cr  
\midrule
\makecell{VC-Dimension  \cite{scarselli2018vapnik}}  &   $ \mathcal{O}(\log(N) N)$ & $\mathcal{O}(h^4)$ & -   \\
\makecell{Rademacher \\ Complexity   \cite{pmlr-v119-garg20c} } & $\mathcal{O}(d^{T-1} \sqrt{\log(d^{2T-3})} )$ & $\mathcal{O}(h\sqrt{\log(h)})$ & $\mathcal{O}(L^{2T})$   \\
\makecell{PAC-Bayesian \\ \cite{liao2021a} } & $\mathcal{O}(d^{T-1})$ & $\mathcal{O}(\sqrt{h\log(h)})$ & $\mathcal{O}(L^{2T})$  \\  Ours  & $\mathcal{O}(\E_{N \sim \nu}[\log(N)N^{-\frac{1}{2(D_\chi+1)}}]  )$ & $\mathcal{O}(1)$ & $\mathcal{O}(L^{2T})$      \\
\midrule
\end{tabular}
\label{table:comparison}
\end{table}

\paragraph{Comparison to other generalization bounds in graph classification} 
We compare our generalization bound with other generalization bounds derived by bounding the VC-dimension \cite{scarselli2018vapnik}, the Rademacher complexity \cite{pmlr-v119-garg20c}, and using a PAC-Bayesian approach \cite{liao2021a}. We do not compare with \cite{verma2019stability} since they derive generalization bounds for single-layered MPNNs in node-classification tasks. Hence, the role of depth is unexplored. Furthermore, their bound scales as $\mathcal{O}(\lambda^{2T}_{\mathrm{max}}/m)$, where $T$ is the number of SGD steps and $\lambda_{\mathrm{max}}$ is the largest eigenvalue of the graph Laplacian. Hence, the generalization bound can increase monotonically for increasing $T$ (see \cite{liao2021a} for more details). We summarize the comparison in Table \ref{table:comparison}  and provide more details, specially on the comparability, in Subsection \ref{subsec:comp} of the appendix.

 Our analysis derives a generalization bound on MPNNs  that has essentially the same dependency on the sample size $m$ (up to a logarithmic factor), but does not directly depend on the number of hidden units. 
 We emphasis that our bound  depends on negative moments of the expected node size $N$. In contrast, the VC-dimension based bound \cite{scarselli2018vapnik} scales as $\mathcal{O}(\log(N)N)$, the Rademacher complexity based bound \cite{pmlr-v119-garg20c} scales as  $\mathcal{O}(d^{T-1} \sqrt{\log(d^{2T-3})} )$, and the PAC-Bayesian  approach based bound \cite{liao2021a} scales as $\mathcal{O}(d^{T-1})$, where $d$ denotes the maximum node degree.  

\section{Numerical Experiments}
\label{sec:NumResults}
{We give empirical evaluations of our generalization bound in comparison to the PAC-Bayesian based bound \cite{liao2021a} and the Rademacher complexity based bound \cite{pmlr-v119-garg20c}. We note that the VC dimension bound of \cite{scarselli2018vapnik} is written in O notations and hence cannot be quantitatively evaluated. We experiment on a synthetic dataset of 100K random graphs of 50 nodes, sampled from three different RGMs: the Erdös-Rényi model (ERM) with edge probability $0.4$, a smooth version of a stochastic block model (SBM), based on the kernel  $K(x,y) = \sin(2\pi x )\sin(2\pi y)/2 \pi +0.25$ on $[0,1]^2$, and a geometric graph with kernel $K(x,y) = \exp(-|x-y|^2)$. The corresponding signals are given in Appendix \ref{APP:Dataset}. Each RGM represents one class in three binary classification problems, comparing all pairs of RGMs. 
For the MPNN we consider  GraphSAGE \cite{hamilton2017inductive} with mean aggregation, and number of layers $T=1,2$ or 3, implemented using Pytorch Geometric \cite{Fey/Lenssen/2019}.  We consider a maximal hidden dimension of $128$.
In Appendix \ref{appendix:numericalExp} we give more details and also consider synthetic data sampled from additional RGMs.}


{Our generalization bound becomes smaller the smaller the Lipschitz constants of the message and update functions are. To control the Lipschitz constants, we consider two learning settings. First, we train with weight decay regularization, which decreases the Lipschitz bounds, and second, we train with no regularization. For each setting (each choice of the number of layers and regularization) we train the MPNN, and read the resulting Lipschitz constants of the network. We then plug all constants into our generalization bound formula (see Theorem \ref{thm:reformulatedTheorem2} in the appendix for the full formula), and into the generalization bound formulas of the PAC-Bayes and Rademacher bounds (see Appendix \ref{subsec:comp} for the formulas). The results are reported in Figure \ref{tbl:Gen1}. We observe that our generalization bounds are orders of magnitude smaller than the other works.  In fact, theoretical generalization bounds typically teach us about the asymptotic behavior of generalization, and about the hyperparameters that affect generalization, but rarely give realistic numerical bounds (less than 1) that guarantee generalization. Nevertheless, in one of the scenarios (one layer MPNN)  our theory gives the bounds 0.08911 and 0.13299 (respectively in the two datasets of Figure \ref{tbl:Gen1}),   which guarantees generalization in practice.} 


\begin{figure}[t]
\begin{center}
    \begin{tabular}{M{61mm}M{61mm}}
  \includegraphics[width=.92\linewidth]{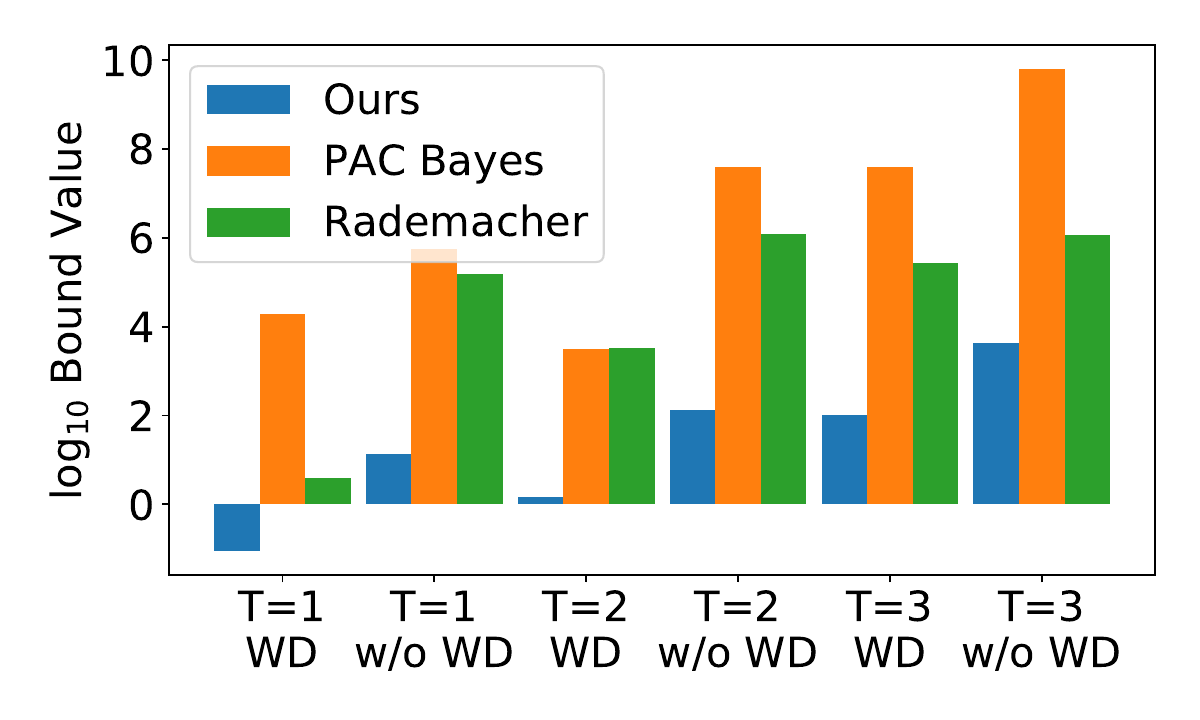} 
  &  
  \includegraphics[width=.92\linewidth]{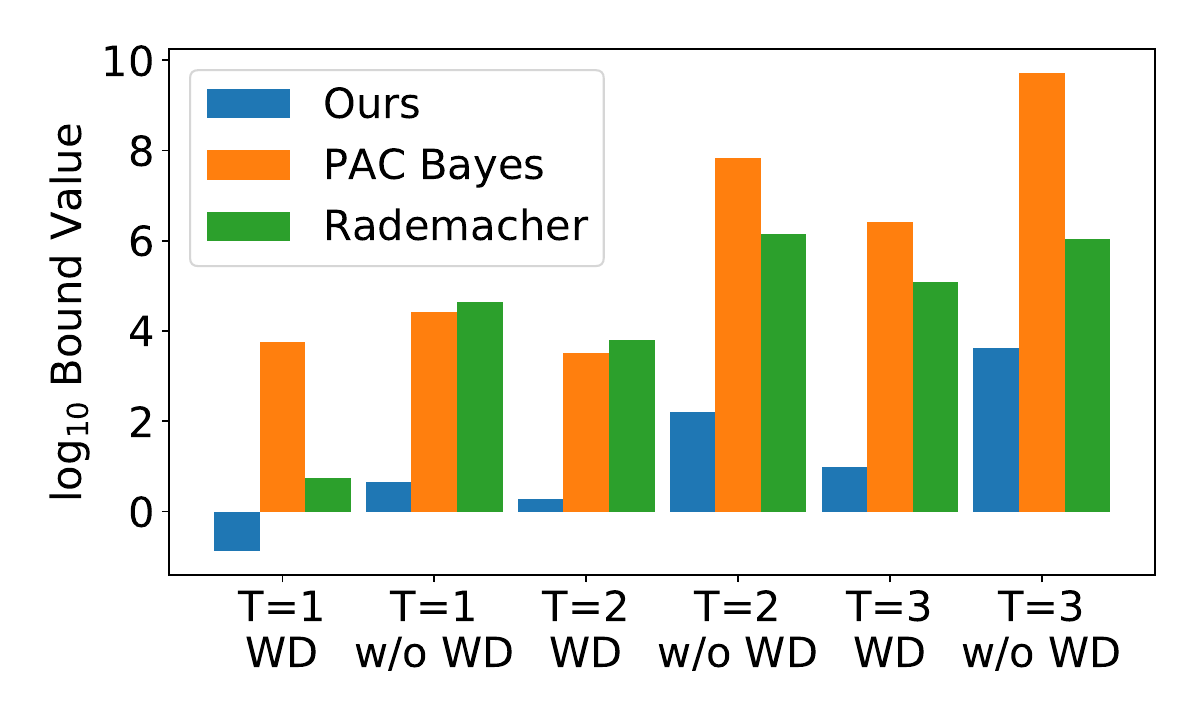}
  \end{tabular}
    \end{center}
    \caption{Generalization bounds given by our theory, PAC-Bayes \cite{liao2021a} and Rademacher complexity \cite{pmlr-v119-garg20c} on a binary classification problem over Erdös-Rényi and SBM (left) and   Erdös-Rényi and a geometric graph (right). Training is done with weight decay (WD) and without weight decay (w/o WD), and on three models with $T=1,2$ and $3$ layers.}
    \label{tbl:Gen1}
\end{figure}

\section{Conclusion}
\label{sec:conclusion}
In this paper we proved that MPNNs with mean aggregation generalize from training to test data in classification tasks, if the graphs are sampled from RGMs that represent the different classes. This follows from the fact that the MPNN on sampled graphs converges to the MPNN on the RGM when the number of nodes goes to infinity. 
Our generalization bounds become smaller the larger the graphs, which gives one explanation to how MPNNs with high complexity can generalize well from a relatively small dataset of large graphs.
We observe two main limitations of our current model. First, the dependency of the generalization bound on the size of the graph $N$ is $\mathcal{O}(N^{-\frac{1}{2(D_{\chi}+1)}})$, which is typically slower than the observed decay in experiments {(See Appendix \ref{Convergence Experiments})}. One potential future direction is to improve this dependency  using a more sophisticated models of the trained network and of the message and update functions. 
 Secondly, our model of the data is somewhat limited. One future direction is to allow deformations of the RGMs, to consider a continuum of RGMs instead of a finite set, and to consider sparse graphs. 

\bibliography{test}

\newcommand{\etalchar}[1]{$^{#1}$}
\begin{thebibliography}{WHZ{\etalchar{+}}18}

\bibitem[BBL{\etalchar{+}}17]{Bronstein_2017}
Michael~M. Bronstein, Joan Bruna, Yann LeCun, Arthur Szlam, and Pierre
  Vandergheynst.
\newblock Geometric deep learning: Going beyond euclidean data.
\newblock {\em IEEE Signal Processing Magazine}, 34(4):18–42, Jul 2017.

\bibitem[DDS16]{10.5555/3045390.3045675}
Hanjun Dai, Bo~Dai, and Le~Song.
\newblock Discriminative embeddings of latent variable models for structured
  data.
\newblock In {\em Proceedings of the 33rd International Conference on
  International Conference on Machine Learning - Volume 48}, ICML'16, page
  2702–2711. JMLR.org, 2016.

\bibitem[FL19]{Fey/Lenssen/2019}
Matthias Fey and Jan~E. Lenssen.
\newblock Fast graph representation learning with {PyTorch Geometric}.
\newblock In {\em ICLR Workshop on Representation Learning on Graphs and
  Manifolds}, 2019.

\bibitem[FML{\etalchar{+}}19]{10.1145/3308558.3313488}
Wenqi Fan, Yao Ma, Qing Li, Yuan He, Eric Zhao, Jiliang Tang, and Dawei Yin.
\newblock Graph neural networks for social recommendation.
\newblock In {\em The World Wide Web Conference}, WWW '19, page 417–426, New
  York, NY, USA, 2019. Association for Computing Machinery.

\bibitem[GBR20]{Gama_2020}
Fernando Gama, Joan Bruna, and Alejandro Ribeiro.
\newblock Stability properties of graph neural networks.
\newblock {\em IEEE Transactions on Signal Processing}, 68:5680–5695, 2020.

\bibitem[GJJ20]{pmlr-v119-garg20c}
Vikas Garg, Stefanie Jegelka, and Tommi Jaakkola.
\newblock Generalization and representational limits of graph neural networks.
\newblock In Hal~Daumé III and Aarti Singh, editors, {\em Proceedings of the
  37th International Conference on Machine Learning}, volume 119 of {\em
  Proceedings of Machine Learning Research}, pages 3419--3430. PMLR, 13--18 Jul
  2020.

\bibitem[HYL17]{hamilton2017inductive}
William~L Hamilton, Rex Ying, and Jure Leskovec.
\newblock Inductive representation learning on large graphs.
\newblock In {\em Proceedings of the 31st International Conference on Neural
  Information Processing Systems}, pages 1025--1035, 2017.

\bibitem[KBV20]{keriven2020convergence}
Nicolas Keriven, Alberto Bietti, and Samuel Vaiter.
\newblock Convergence and stability of graph convolutional networks on large
  random graphs.
\newblock {\em stat}, 1050:23, 2020.

\bibitem[KTD21]{kenlay2021interpretable}
Henry Kenlay, Dorina Thanou, and Xiaowen Dong.
\newblock Interpretable stability bounds for spectral graph filters.
\newblock In {\em Proceedings of the 38th International Conference on Machine
  Learning}. PMLR, 2021.

\bibitem[KW16]{KipfWellingGraphVAE}
Thomas~N Kipf and Max Welling.
\newblock Variational graph auto-encoders.
\newblock {\em arXiv preprint arXiv:1611.07308}, 2016.

\bibitem[LHB{\etalchar{+}}21]{levie2021transferability}
Ron Levie, Wei Huang, Lorenzo Bucci, Michael Bronstein, and Gitta Kutyniok.
\newblock Transferability of spectral graph convolutional neural networks.
\newblock {\em Journal of Machine Learning Research}, 22(272):1--59, 2021.

\bibitem[LIK19]{LevieTransfSpectralGraphFiltersShort}
Ron Levie, Elvin Isufi, and Gitta Kutyniok.
\newblock On the transferability of spectral graph filters.
\newblock In {\em 13th International conference on Sampling Theory and
  Applications (SampTA)}. IEEE, 2019.

\bibitem[Lov67]{lovasz67}
L.~Lov{\'a}sz.
\newblock Operations with structures.
\newblock {\em Acta Mathematica Academiae Scientiarum Hungarica},
  18(3):321--328, 1967.

\bibitem[Lov12]{lovasz}
László Lovász.
\newblock {\em Large networks and graph limits}.
\newblock Colloquium Publications, Budapest, 2012.

\bibitem[LUZ21]{liao2021a}
Renjie Liao, Raquel Urtasun, and Richard Zemel.
\newblock A {\{}pac{\}}-bayesian approach to generalization bounds for graph
  neural networks.
\newblock In {\em International Conference on Learning Representations}, 2021.

\bibitem[MFE{\etalchar{+}}19]{monti2019fake}
Federico Monti, Fabrizio Frasca, Davide Eynard, Damon Mannion, and Michael~M
  Bronstein.
\newblock Fake news detection on social media using geometric deep learning.
\newblock {\em arXiv preprint arXiv:1902.06673}, 2019.

\bibitem[MLK21]{maskey2021transferability}
Sohir Maskey, Ron Levie, and Gitta Kutyniok.
\newblock Transferability of graph neural networks: an extended graphon
  approach.
\newblock {\em arXiv preprint arXiv:2109.10096}, 2021.

\bibitem[MRF{\etalchar{+}}19]{Morris_Ritzert_Fey_Hamilton_Lenssen_Rattan_Grohe_2019}
Christopher Morris, Martin Ritzert, Matthias Fey, William~L. Hamilton, Jan~Eric
  Lenssen, Gaurav Rattan, and Martin Grohe.
\newblock Weisfeiler and leman go neural: Higher-order graph neural networks.
\newblock {\em Proceedings of the AAAI Conference on Artificial Intelligence},
  33(01):4602--4609, Jul. 2019.

\bibitem[Pen03]{RGGPenrose}
Mathew Penrose.
\newblock {\em Random Geometric Graphs}.
\newblock Oxford Scholarship Online, 2003.

\bibitem[RGR21]{9356126}
Luana Ruiz, Fernando Gama, and Alejandro Ribeiro.
\newblock Graph neural networks: Architectures, stability, and transferability.
\newblock {\em Proceedings of the IEEE}, 109(5):660--682, 2021.

\bibitem[RWR21]{ruiz2020graphon}
Luana Ruiz, Zhiyang Wang, and Alejandro Ribeiro.
\newblock Graphon and graph neural network stability.
\newblock In {\em International Conference on Acoustics, Speech and Signal
  Processing (ICASSP)}. IEEE, 2021.

\bibitem[STH18]{scarselli2018vapnik}
Franco Scarselli, Ah~Chung Tsoi, and Markus Hagenbuchner.
\newblock The vapnik--chervonenkis dimension of graph and recursive neural
  networks.
\newblock {\em Neural Networks}, 108:248--259, 2018.

\bibitem[SYS{\etalchar{+}}20]{STOKES2020688}
Jonathan~M. Stokes, Kevin Yang, Kyle Swanson, Wengong Jin, Andres
  Cubillos-Ruiz, Nina~M. Donghia, Craig~R. MacNair, Shawn French, Lindsey~A.
  Carfrae, Zohar Bloom-Ackermann, Victoria~M. Tran, Anush Chiappino-Pepe,
  Ahmed~H. Badran, Ian~W. Andrews, Emma~J. Chory, George~M. Church, Eric~D.
  Brown, Tommi~S. Jaakkola, Regina Barzilay, and James~J. Collins.
\newblock A deep learning approach to antibiotic discovery.
\newblock {\em Cell}, 180(4):688--702.e13, 2020.

\bibitem[Ver18]{vershynin_2018}
Roman Vershynin.
\newblock {\em High-Dimensional Probability: An Introduction with Applications
  in Data Science}.
\newblock Cambridge Series in Statistical and Probabilistic Mathematics.
  Cambridge University Press, 2018.

\bibitem[VW96]{Vaart}
Aad~W. Vaart and Jon~A. Wellner.
\newblock {\em Weak Convergence and Empirical Processes}.
\newblock Springer New York, NY, 1996.

\bibitem[VZ19a]{StabGenofGCNN}
Saurabh Verma and Zhi-Li Zhang.
\newblock Stability and generalization of graph convolutional neural networks.
\newblock In {\em Proceedings of the 25th ACM SIGKDD International Conference
  on Knowledge Discovery \& Data Mining}, pages 1539--1548, 2019.

\bibitem[VZ19b]{verma2019stability}
Saurabh Verma and Zhi-Li Zhang.
\newblock Stability and generalization of graph convolutional neural networks,
  2019.

\bibitem[WHZ{\etalchar{+}}18]{10.1145/3219819.3219869}
Jizhe Wang, Pipei Huang, Huan Zhao, Zhibo Zhang, Binqiang Zhao, and Dik~Lun
  Lee.
\newblock Billion-scale commodity embedding for e-commerce recommendation in
  alibaba.
\newblock In {\em Proceedings of the 24th ACM SIGKDD International Conference
  on Knowledge Discovery \& Data Mining}, KDD '18, page 839–848, New York,
  NY, USA, 2018. Association for Computing Machinery.

\bibitem[WZL{\etalchar{+}}18]{wang2018pixel2mesh}
Nanyang Wang, Yinda Zhang, Zhuwen Li, Yanwei Fu, Wei Liu, and Yu-Gang Jiang.
\newblock Pixel2mesh: Generating 3d mesh models from single rgb images.
\newblock In {\em Proceedings of the European Conference on Computer Vision
  (ECCV)}, pages 52--67, 2018.

\bibitem[XHLJ19]{xu2018how}
Keyulu Xu, Weihua Hu, Jure Leskovec, and Stefanie Jegelka.
\newblock How powerful are graph neural networks?
\newblock In {\em International Conference on Learning Representations}, 2019.

\bibitem[YFM{\etalchar{+}}21]{yehudai2020size}
Gilad Yehudai, Ethan Fetaya, Eli Meirom, Gal Chechik, and Haggai Maron.
\newblock From local structures to size generalization in graph neural
  networks.
\newblock In {\em Proceedings of the 38th International Conference on Machine
  Learning}, volume 139 of {\em Proceedings of Machine Learning Research},
  pages 11975--11986. PMLR, 18--24 Jul 2021.

\bibitem[YHC{\etalchar{+}}18]{10.1145/3219819.3219890}
Rex Ying, Ruining He, Kaifeng Chen, Pong Eksombatchai, William~L. Hamilton, and
  Jure Leskovec.
\newblock Graph convolutional neural networks for web-scale recommender
  systems.
\newblock In {\em Proceedings of the 24th ACM SIGKDD International Conference
  on Knowledge Discovery \& Data Mining}, KDD '18, page 974–983, New York,
  NY, USA, 2018. Association for Computing Machinery.

\end{thebibliography}
\bibliographystyle{alpha}

\appendix
\onecolumn
\section*{Appendix}
\section{Definitions and Notation}
\label{AppendixA}

We denote metric spaces by $(\chi,d)$, where $d:\chi\times\chi\rightarrow \left[0,\infty\right)$ denotes the metric in the space $\chi$. The ball around $x\in\chi$ of radius $\epsilon>0$ is defined to be $B_{\epsilon}(x) = \{y\in \chi\ | \  d(x,y)<\epsilon\}$.
Since, in our analysis, the nodes of the graph are taken as the sample points $X=(X_1,\ldots,X_N)$ in $\chi$, we identify node $i$ of the graph $G$ with the point $X_i$, for every $i=1,\ldots,N$. Moreover, since graph signals $\mathbf{f}=(\mathbf{f}_1,\ldots,\mathbf{f}_N)$ represent mappings from nodes in $V$ to feature values, we denote, by abuse of notation,  $\mathbf{f}(X_i) := \mathbf{f}_i$ for $i= 1,\ldots, N$. 

\begin{definition}[\cite{vershynin_2018}]
\label{def:epsCover}
Let $(\chi,d)$ be a compact metric space.
\begin{enumerate}
    \item 
    The $\varepsilon$-covering numbers of $\chi$, denoted by $\mathcal{C}(\chi, \varepsilon, d)$, is the minimal number of balls of radius $ \varepsilon$ required to cover $\chi$.
    \item
    The  \emph{Minkowski dimension} of $\chi$ is defined to be
    \[\mathrm{dim}(\chi) = \inf\{ D\geq 0 \ |\  \forall \varepsilon \in (0,1) \ \mathcal{C}(\chi, \varepsilon, d) \leq \varepsilon^{-D}\}. \]
\end{enumerate}
\end{definition}

Next, we define various notions of degree.

\begin{definition}
\label{def:degrees}
Let $W:\chi \times \chi \to \left[0,\infty\right)$ be a kernel , $X= (X_1, \ldots, X_N)$  sample points, and $G$ the corresponding sampled graph. 
\begin{enumerate}
    \item We define the \emph{kernel degree} of $W$ at $x\in\chi$ by
\begin{equation}
\label{eq:d_W}
 \mathrm{d}_W(x) = \int_{\chi} W (x,y ) d\P(y).    
\end{equation}
\item Given a point $x\in\chi$ that need not be in $X$, we define the \emph{graph-kernel degree} of $X$  at $x$ by
\begin{equation}
    \label{eq:d_X}
    \mathrm{d}_X(x) = \frac{1}{N} \sum_{i=1}^N W(x, X_i).
\end{equation}
\item The \emph{normalized degree} of $G$ at the node $X_c\in X$ is defined by
\begin{equation}
    \label{eq:d_G}
    \mathrm{d}_G(X_c) = \frac{1}{N} \sum_{i=1}^N W(X_c, X_i).
\end{equation}
\end{enumerate}
\end{definition}

When $x\notin X$, $d_X(x)$ is interpreted as the degree of the node $x$ in the graph $(x,X_1,\ldots,X_n)$ with edge weights sampled from $W$.

Based on the different version of degrees in Definition \ref{def:degrees}, we define the corresponding three versions of mean aggregation.

\begin{definition}
\label{def:contMeanAgg}
Given the kernel $W$, we define the \emph{continuous mean aggregation} of the metric-space message signal $U:\chi\times\chi\rightarrow\mathbb{R}^F$ by 
\[
M_W U = \int_\chi
\frac{W(\cdot, y)}{ \mathrm{d}_W(\cdot) }  U(\cdot, y) d\P(y).
\]
\end{definition}
In Definition \ref{def:contMeanAgg}, $U(x,y)$ represents a message sent from the point $y$ to the point $x$ in the metric space. Given a metric-space signal $f:\chi\rightarrow\mathbb{R}^{F'}$ and a message function $\Phi$, we have 
\[
M_W \Phi (f,f) = \int_\chi
\frac{W(\cdot, y)}{ \mathrm{d}_W(\cdot) }  \Phi\big(f(\cdot), f(y)\big) d\P(y).
\]

\begin{definition}
\label{def:graphkernelMeanAgg}
Let $W$ be a kernel $X= X_1, \ldots, X_N$ sample points.
For a metric-space message signal $U:\chi\times \chi \rightarrow \mathbb{R}^F$,
we define the \emph{graph-kernel mean aggregation} by 
\[
M_X U = \frac{1}{N} \sum_j
\frac{W(\cdot, X_j)}{ \mathrm{d}_X(\cdot) }  U(\cdot, X_j).
\]
\end{definition}

Note that in the definition of $M_X$, messages are sent from graph nodes to arbitrary points in the metric space. Hence, $M_X U: \chi \rightarrow \mathbb{R}^F$ is a metric-space signal. 

\begin{definition}
Let $G$ be a graph with nodes $X =  X_1, \ldots, X_N$.
For  a graph message signal $\mathbf{U}:X\times X \rightarrow \mathbb{R}^F$, where $\mathbf{U}(X_i,X_j)$ represents a message sent from the node $X_j$ to the node $X_i$, we define the \emph{mean aggregation}  by
\[
(M_G \mathbf{U})(X_i) =  \frac{1}{N} \sum_j
\frac{W(X_i, X_j)}{ \mathrm{d}_X(X_i) }  \mathbf{U}(X_i, X_j). 
\]
\end{definition}

Note that $M_G \mathbf{U}:X\rightarrow \mathbb{R}^F$ is a graph signal.

\begin{remark}
Given a graph signal $\mathbf{f}:X \to \mathbb{R}^F$, which can be written as a finite sequence $\mathbf{f} = (\mathbf{f}_i)_i$, and a message function $\Phi:\mathbb{R}^{2F} \rightarrow \mathbb{R}^{H}$, we define 
\[
\Phi( \mathbf{f}, \mathbf{f} ) := \big(\Phi( \mathbf{f}_i, \mathbf{f}_j)\big)_{i,j=1}^N.
\]
\end{remark}
Hence, given a graph signal $\mathbf{f}:X\rightarrow\mathbb{R}^{F}$ and the graph messages $\mathbf{U}(X_i,X_j)=\Phi(\mathbf{f}(X_i), \mathbf{f}(X_j))$, we have
\[
M_G \mathbf{U} = M_G \Phi(\mathbf{f},\mathbf{f}) = \frac{1}{N} \sum_j
\frac{W(\cdot, X_j)}{ \mathrm{d}_X(\cdot) }  \Phi\big(\mathbf{f}(\cdot), \mathbf{f}(X_j)\big).
\]

Next, we define the different norms used in our analysis.

\begin{definition}
$ $
\begin{enumerate}
\item For a vector $\mathbf{z}=(z_1,\ldots,z_F) \in \mathbb{R}^F$, we define as usual
\[
\|\mathbf{z}\|_\infty = \max_{ 1 \leq k \leq F } |z_k|.
\]
    \item 
For a function $g : \chi \to \mathbb{R}^F$,  we define
\[
\|g\|_\infty  = \max_{ 1 \leq k \leq F } \sup_{x \in \chi} \big| \big(g(x)\big)_k \big|,
\]
\item Given a graph with $N$ nodes, we define the norm $\| \mathbf{f} \|_{2;\infty}$ of graph feature maps $\mathbf{f}=(\mathbf{f}_1,\ldots,\mathbf{f}_N) \in \mathbb{R}^{N \times F}$, with feature dimension $F$, as the root mean square over the infinity norms of the node features, i.e.,
\[
\|\mathbf{f}\|_{2; \infty} = \sqrt{\frac{1}{N} \sum_{i=1}^N  \|\mathbf{f}_i\|_{\infty}^2}.
\]
\end{enumerate}
\end{definition}

\begin{definition}
For a metric-space signal $f: \chi \rightarrow \mathbb{R}^F$ and samples $X=(X_1, \ldots, X_N)$ in $\chi$, we define the sampling operator $S^X$ by
\[S^Xf = \big(f(X_i) \big)_{i=1}^N \in \mathbb{R}^{N \times F}. \]
\end{definition}

For a metric-space signal $f:\chi \to \mathbb{R}^F$ and a graph signal $\mathbf{f} \in \mathbb{R}^{N \times F}$, we define the distance ${\rm dist}$ 
as $\d(\mathbf{f}, f) = \| \mathbf{f} - S^Xf \|_{2;\infty}$., i.e,
\begin{equation}
\label{eq:distGraphMetric}
    \d(f, \mathbf{f} ) = \left(\frac{1}{N} \sum_{i=1}^N \|\mathbf{f}_i -  (S^Xf)_i \|^2_\infty\right)^{1/2}.
\end{equation}
Given a MPNN, we define the \emph{formal bias} of the update and message functions  by $\|\Psi^{(l)}(0,0)\|_\infty$ and
$\|\Phi^{(l)}(0,0)\|_\infty$ respectively. Furthermore, we say that a function $\Phi: \mathbb{R}^{F} \rightarrow \mathbb{R}^{H}$ is \emph{Lipschitz continuous} if there exists a $L_\Phi > 0$ such that for every $x,x' \in \mathbb{R}^H$, we have
\[
\|\Phi(x) - \Phi(x')\|_\infty \leq L_\Phi \| x- x'\|_\infty.
\]
Similarly, a function $f:\chi \to \mathbb{R}^F$ is Lipschitz continuous if there exists a $L_f > 0$ such that for every  $x,x' \in \chi$, we have
\[
\|\Phi(x) - \Phi(x')\|_\infty \leq L_f d(x,x').
\]

Next we introduce notations for the mappings between consecutive layers of a MPNN. 
\begin{definition}
\label{def:LayerMapping}
Let $\Theta = ((\Phi^{(l)})_{l=1}^T, (\Psi^{(l)})_{l=1}^T)$  be a MPNN with $T$ layers and  feature dimensions $(F_l)_{l=1}^T$. For $l=1, \ldots, T$, we define the mapping from the $(l-1)$'th layer to the $l$'th layer of the gMPNN as 
\[
\begin{aligned}
 \Lambda^{(l)}_{\Theta_G}: \mathbb{R}^{N  \times F_{l-1}} &\to \mathbb{R}^{N \times F_l} \\
\mathbf{f}^{(l-1)} & \mapsto \mathbf{f}^{(l)}.
\end{aligned}
\]
Similarly, we define $\Lambda_{\Theta_W}^{(l)}$ as  the mapping from the $(l-1)$'th layer to the $l$'th layer of the cMPNN $f^{(l-1)}\mapsto f^{(l)}$.
\end{definition}

Definition \ref{def:LayerMapping} leads to the following,
\[
\Theta^{(T)}_{G } =  \Lambda^{(T)}_{\Theta_G} \circ  \Lambda^{(T-1)}_{\Theta_G}\circ \ldots \circ  \Lambda^{(1)}_{\Theta_G}
\]
and 
\[
\Theta^{(T)}_{W } =  \Lambda^{(T)}_{\Theta_W} \circ  \Lambda^{(T-1)}_{\Theta_W}\circ \ldots \circ  \Lambda^{(1)}_{\Theta_W}
\]

Lastly, we formulate the following assumption on the space $\chi$, the kernel $W$, and the MPNN $\Theta$, to which we will refer often in Appendix \ref{AppendixB}.

\begin{assumption}
\label{ass:graphon}
Let $(\chi,d)$ be a metric space and
 $W: \chi \times \chi \rightarrow [0, \infty)$. 
 Let $\Theta$ be a MPNN with message and update  functions $\Phi^{(l)}: \mathbb{R}^{2F_l} \rightarrow \mathbb{R}^{H_l}$ and  $\Psi^{(l)}: \mathbb{R}^{F_l+H_l} \rightarrow \mathbb{R}^{F_{l+1}}$, $l=1,\ldots,T-1$. 
\begin{enumerate}
\item \label{ass:graphon1}
The space $\chi$ is compact, and there exist $D_\chi, C_\chi \geq 0 $ such that $\mathcal{C}(\chi, \varepsilon, d) \leq C_\chi \varepsilon^{-D_\chi} $ for every $\varepsilon>0$. \footnote{The  Minkowski dimension  $\mathrm{dim}(\chi)$ is a lower bound for all such possible $D_\chi$.}  
\item \label{ass:diamChi}
The diameter of $\chi$ is bounded by 1. Namely, $\mathrm{diam}(\chi):=\sup_{x,y\in\chi}d(x,y) \leq 1$.
    \item \label{ass:bddKernel}The kernel satisfies $\|W\|_\infty < \infty$. 
    \item\label{ass:graphon11}
    For every $y \in \chi$, the function $W(\cdot, y)$ is Lipschitz continuous (with respect to its first variable) with Lipschitz constant $L_W$. 
    \item \label{ass:GraphonLip2nd} For every $x \in \chi$, the function $W(x,\cdot)$ is Lipschitz continuous (with respect to its second variable) with Lipschitz constant $L_W$. 
    \item\label{ass:graphon12}There exists a constant $\cmin > 0$ such that for every $x \in \chi$, we have $d_W(x) \geq \cmin$.
    \item\label{ass:graphon4}  For every $l=1,\ldots,T$, the message function $\Phi^{(l)}$ and update function $\Psi^{(l)}$ are Lipschitz continuous with Lipschitz constants $L_{\Phi^{(l)}}$ and $L_{\Psi^{(l)}}$ respectively.
    \item\label{Ass:KernelDiagBounded}
There exists a constant ${\rm W}_{\mathrm{diag}}>0$ such that for every $x \in \chi$, we have
$
W(x,x) \geq {\rm W}_{\mathrm{diag}}>0
$. 

\end{enumerate}
\end{assumption}

\section{Convergence Analysis}
\label{AppendixB}
In this section we provide the proofs for Theorem \ref{thm:MainInProb} from Section \ref{sec:Main}.

\subsection{Preparation}
This section is a preparation for the upcoming proof of Theorem \ref{thm:MainInProb} from Section \ref{sec:Main}. An important goal of this section is to formulate and prove Lemma \ref{lemma:C2}, which provides a uniform concentration of measure of the uniform error between the continuous mean aggregation $M_W$ and the graph-kernel mean aggregation $M_X$. We then show in Corollary \ref{cor:Uniform3} that this uniform bound is preserved by application of an update function. 
We begin with the following concentration of error lemma which is a slight modification of \cite[Lemma 4]{keriven2020convergence}, and can be derived directly from \cite[Lemma 4]{keriven2020convergence}, by using the assumption $\mathcal{C}(\chi, \varepsilon, d) \leq C_\chi \varepsilon^{-D_\chi}$ instead of $\mathcal{C}(\chi, \varepsilon, d) \leq   \varepsilon^{-\dim(\chi)} $. 

\begin{lemma}[Lemma 4, \cite{keriven2020convergence}.]
\label{lemma:kerivenLemma4}
Let $(\chi,d, \P) $ be a metric-measure space and
$W$ be a kernel s.t.  Assumptions \ref{ass:graphon}.\ref{ass:graphon1}-\ref{ass:graphon11}. are satisfied. Consider a metric-space signal  $f: \chi \to \mathbb{R}$ with $\|f\|_\infty < \infty$. Suppose that $X_1, \ldots, X_N$ are drawn i.i.d. from $\P$ on $\chi$ and let $p \in (0,1)$. Then, with probability at least $1-p$, we have 
\begin{equation*}
   \begin{aligned}
& \left\|
\frac{1}{N} \sum_{i=1}^N W(\cdot,X_i)f(X_i) - \int_\chi W(\cdot, x) f(x) d\P(x)\right\|_\infty
 \\ & \leq  
\frac{\|f\|_\infty \Big(\zeta \cl (\sqrt{\log (C_\chi)} +  \sqrt{D_\chi}) + (\sqrt{2}\cmax+\zeta \cl) \sqrt{\log2/p} \Big) }{ \sqrt{N}},
\end{aligned}
\end{equation*}
where 
\begin{equation}
    \label{eq:zeta}
    \zeta  := \frac{2}{\sqrt{2}}e\Big(\frac{2}{\ln(2)} +1 \Big)\frac{1}{\sqrt{\ln(2)}} C
\end{equation}
 and $C$ is the universal constant from Dudley's inequality (see Theorem 8.1.6 \cite{vershynin_2018}).
\end{lemma}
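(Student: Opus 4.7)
I will view the quantity to bound as the supremum norm of a centered empirical process indexed by $\chi$. Set
\[
Z(x) := \frac{1}{N}\sum_{i=1}^N W(x,X_i)f(X_i) - \int_\chi W(x,y)f(y)\,d\mu(y), \qquad x\in\chi,
\]
and write $Z(x)=(Z_1(x),\ldots,Z_F(x))$ component-wise, so that $\|Z\|_\infty=\max_{k\le F}\sup_{x\in\chi}|Z_k(x)|$. Since each $Z_k$ involves only the bounded scalar signal $f_k$ with $|f_k|\le\|f\|_\infty$, it suffices to bound a single scalar process of the form $Z_k(x)=\frac{1}{N}\sum_i g_k(x,X_i)-\E g_k(x,X_1)$ where $g_k(x,y):=W(x,y)f_k(y)$ is bounded by $\|W\|_\infty\|f\|_\infty$ and $L_W\|f\|_\infty$-Lipschitz in $x$.

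Next I will split the supremum into a pivot value plus oscillation: fix any $x_0\in\chi$ and write
\[
\sup_{x\in\chi}|Z_k(x)| \le |Z_k(x_0)| + \sup_{x\in\chi}|Z_k(x)-Z_k(x_0)|.
\]
For the pivot, Hoeffding's inequality (Theorem~\ref{thm:Hoeffdings}) applied to the i.i.d.\ bounded summands $g_k(x_0,X_i)$ gives $|Z_k(x_0)|\lesssim \|W\|_\infty\|f\|_\infty\sqrt{\log(2/p)/N}$ with probability at least $1-p$. For the oscillation, each increment $Z_k(x)-Z_k(x')$ is an average of $N$ i.i.d.\ bounded random variables differing in $x$ only through $W(\cdot,X_i)$, hence (again by Hoeffding on increments) the centered process $Z_k$ is sub-Gaussian on $(\chi,d)$ with increment parameter of order $L_W\|f\|_\infty\, d(x,x')/\sqrt{N}$. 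I can then invoke Dudley's inequality (Theorem~\ref{thm:Dudleys}) for sub-Gaussian processes, which controls both $\E\sup|Z_k(x)-Z_k(x_0)|$ and its tail by an integral of the form $\frac{L_W\|f\|_\infty}{\sqrt{N}}\int_0^{\mathrm{diam}(\chi)}\sqrt{\log\mathcal{C}(\chi,\varepsilon,d)}\,d\varepsilon$.

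Using Assumption~\ref{ass:graphon} (compactness, $\mathrm{diam}(\chi)\le 1$, and $\mathcal{C}(\chi,\varepsilon,d)\le\varepsilon^{-\dim(\chi)}$), the Dudley integral reduces to $\int_0^1\sqrt{\dim(\chi)\log(1/\varepsilon)}\,d\varepsilon$, which is a finite constant multiple of $\sqrt{\dim(\chi)}$. The sub-Gaussian tail form of Dudley then yields an additional $\sqrt{\log(2/p)}$ term with the same $L_W\|f\|_\infty/\sqrt{N}$ prefactor. Combining the pivot bound with the oscillation bound, and absorbing the $\|W\|_\infty$ and $L_W$ factors into the definition of $\zeta$ in (\ref{eq:zeta}), gives the claimed inequality for one coordinate; finally, since the right-hand side does not depend on the coordinate index $k$, the same bound transfers to $\max_k\sup_x|Z_k(x)|$ (one may absorb the $F$-dependence into the failure probability by a union bound, or simply note that bounding $\|f_k\|_\infty\le\|f\|_\infty$ uniformly suffices and no union bound is required).

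\textbf{Main obstacle.} The non-trivial ingredient is the Dudley chaining step: one must verify that the correct sub-Gaussian metric is $d(x,x')$ up to the constant $L_W\|f\|_\infty/\sqrt{N}$, and then extract tail probabilities (not just expectations) from Dudley with the precise constant $\zeta$ given in (\ref{eq:zeta}). The rest is bookkeeping: merging the pivot Hoeffding failure event with the Dudley tail event via a union bound, and checking that the $\sqrt{\dim(\chi)}$ and $\sqrt{\log(2/p)}$ contributions combine exactly as displayed in the statement.
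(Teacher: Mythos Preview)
Your proposal is correct and follows essentially the same route the paper uses. Note that the paper does not actually prove this lemma---it is cited from \cite{keriven2020convergence}---but the paper's proof of the closely related Lemma~\ref{lemma1WithMessage} (which generalizes this statement to include a message function $\Phi$) proceeds exactly as you outline: define the centered process $Y_x$, split $\sup_x\|Y_x\|_\infty \le \|Y_{x_0}\|_\infty + \sup_{x,x'}\|Y_x - Y_{x'}\|_\infty$, bound the pivot via Hoeffding, and bound the oscillation via Dudley after verifying sub-Gaussian increments. The only minor difference is that the paper verifies the sub-Gaussian increment bound by chaining Lemmas~\ref{lemma:subGaussianNorm1}, \ref{lemma:centering}, and \ref{lemma:subGaussianNorm2} on the $\psi_2$-norm (which is precisely where the explicit factors in $\zeta$ come from), rather than re-invoking Hoeffding on increments as you suggest; both approaches are valid, but the former is what pins down the constant in (\ref{eq:zeta}).
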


As a consequence of Lemma \ref{lemma:kerivenLemma4}, we can derive a sufficient condition on the sample size $N$ which ensures that the graph-kernel degrees  
 are uniformly bounded from below.

\begin{lemma}
\label{lemma:LowerBoundDegree}
Let $(\chi,d, \P) $ be a metric-measure space and
$W$ be a kernel s.t.  Assumptions \ref{ass:graphon}.\ref{ass:graphon1}-\ref{ass:graphon11}. and \ref{ass:graphon}.\ref{ass:graphon12}. are satisfied. 
Suppose that $X_1, \ldots, X_N$ are drawn i.i.d. from $\P$ on $\chi$ and let $p \in (0,1)$.
Let
\begin{equation}
\label{eq:largeN}
    \sqrt{N} \geq  2 \Big(\zeta   \frac{\cl}{\cmin} \big(\sqrt{\log (C_\chi)} +  \sqrt{D_\chi}\big) +
    \frac{\sqrt{2} \cmax + \zeta \cl}{ \cmin } \sqrt{\log 2/p}\Big),
\end{equation}
where $\zeta$ is defined in (\ref{eq:zeta}). Then, with probability at least $1-p$ the following two inequalities hold: For every $x\in\chi$,
\begin{equation}
    \label{eq:d_XLowerBound}
\mathrm{d}_X( x) \geq \frac{\cmin}{2}
\end{equation}
and 
\begin{equation}
\label{eq:lemmab1-1}
     \begin{aligned}
& \left\|
\frac{1}{N} \sum_{i=1}^N W(\cdot,X_i)f(X_i) - \int_\chi W(\cdot, x) f(x) d\P(x)\right\|_\infty
 \\ & \leq  
\frac{\|f\|_\infty \Big(\zeta \cl (\sqrt{\log (C_\chi)} +  \sqrt{D_\chi}) + (\sqrt{2}\cmax+\zeta \cl) \sqrt{\log2/p} \Big) }{ \sqrt{N}}.
\end{aligned}
\end{equation}
\end{lemma}
\begin{proof}
By Lemma \ref{lemma:kerivenLemma4}, with $f=1$, with probability at least $1-p$ we have
\[\|\mathrm{d}_X(\cdot) - \mathrm{d}_W(\cdot)\|_\infty  \leq   
\frac{ \Big(\zeta\cl \big(\sqrt{\log (C_\chi)} +  \sqrt{D_\chi}\big) + \big(\sqrt{2}\cmax+\zeta\cl\big) \sqrt{\log2/p} \Big) }{ \sqrt{N}}.
\]
By using the lower bound (\ref{eq:largeN}) of $\sqrt{N}$, we have
$\|\mathrm{d}_X(\cdot) - \mathrm{d}_W(\cdot)\|_\infty \leq \frac{\cmin}{2}$. Let $x \in \chi$. By Assumption \ref{ass:graphon}.\ref{ass:graphon12}, we have $|\mathrm{d}_W(x)| \geq \cmin$, hence $|\mathrm{d}_X(x)| \geq \mathrm{\cmin}/2$.
\end{proof}

The following lemma is a uniform concentration of measure of the Monte Carlo approximation of Lipschitz functions. Related results about uniform law of large numbers for Lipschitz functions can be found in \cite[Chapter 8.2]{vershynin_2018}.  Our result holds for general metric spaces with finite  Minkowski dimension. 
\begin{lemma}
\label{lemma:NumAnaApprox}
Let $(\chi, d, \mu)$ be a metric-measure space s.t. Assumption \ref{ass:graphon}.\ref{ass:graphon1}.  is satisfied. Suppose that $X_1, \ldots, X_N$ are drawn i.i.d. from $\P$ on $\chi$. For every $p > 0$, there exists an event $\mathcal{E}_{\rm Lip}^p \subset \chi^N$  regarding the choice of $(X_1,\ldots, X_N)\in \chi^N$, with probability $\P^N(\mathcal{E}_{\rm Lip}^p) \geq 1-p$,  such that the following uniform bound is satisfied:  For every Lipschitz continuous function $F: \chi \to \mathbb{R}^F$ 
 with Lipschitz constant  $L_F$, we have 
\[
\begin{aligned}
 &  \left\| \frac{1}{N} \sum_{i=1}^N F(X_i) - \int_\chi F(x) d\mu(x)\right \|_\infty \\
 & \leq N^{-\frac{1}{2(D_\chi + 1)}}
\left(
2 L_F  + \frac{C_\chi}{\sqrt{2}} \| F \|_\infty \sqrt{ \log(C_\chi) + \frac{D_\chi}{2(D_\chi + 1)} \log(N) + \log(2/p)}
\right).
\end{aligned}
\]

\end{lemma}

For completion, we provide a proof of Lemma \ref{lemma:NumAnaApprox}.

\begin{proof}
Let $r > 0$.
By Assumption \ref{ass:graphon}.\ref{ass:graphon1}, there exists an open covering $(B_j)_{j\in \mathcal{J}}$ of $\chi$  by a family of  balls with radius $r$ 
such that $|\mathcal{J}| \leq C_\chi r^{-D_\chi}$.
For $j = 2, \ldots, |\mathcal{J}|$, we define $I_j := B_j \setminus \cup_{i < j} B_i$, and define $I_1=B_1$. Hence, $(I_j)_{j \in \mathcal{J}}$ is a family of measurable sets  such that $I_j \cap I_i = \emptyset$ for all $i\neq j \in \mathcal{J}$,  $\bigcup_{j \in \mathcal{J}} I_j = \chi$, and $\mathrm{diam}(I_j) \leq 2r$ for all $j \in \mathcal{J}$, where by convention $\mathrm{diam}(\emptyset)=0$. For each $j \in \mathcal{J}$, let $z_j$ be the center of the ball $B_j$.  

 Next, we compute a concentration of error bound on the difference between the measure of $I_j$ and its Monte Carlo approximation, which is uniform in $j\in\mathcal{J}$.  
Let $j \in \mathcal{J}$ and $q \in (0,1)$. By Hoeffding's inequality, there is an event $\mathcal{E}_j^q$ with probability  $\mu(\mathcal{E}_j)\geq 1-q$, in which
\begin{equation}
\label{eq:stepfctEvent1}
\left\| \frac{1}{N} \sum_{i=1}^N \mathbbm{1}_{I_j}(X_i) - \mu(I_k)\right\|_\infty \leq \frac{1}{\sqrt{2}}\frac{\sqrt{ \log(2/q)}}{\sqrt{N}}.
\end{equation}
Consider the event 
\[\mathcal{E}_{\rm Lip}^{|\mathcal{J}|q} = \bigcap_{j=1}^{|\mathcal{J}|}\mathcal{E}_j^q,\]
with probability  $\mu^N(\mathcal{E}_{\rm Lip}^{|\mathcal{J}|q})\geq 1- |\mathcal{J}|q $.
In this event,  (\ref{eq:stepfctEvent1}) holds for all $j\in\mathcal{J}$. We change the failure probability variable $p = |\mathcal{J}|q$, and denote $\mathcal{E}_{\rm Lip}^{p}= \mathcal{E}_{\rm Lip}^{|\mathcal{J}|q}$.

Next we bound uniformly the Monte Carlo approximation error of the integral of bounded Lipschitz continuous functions $F:\chi\rightarrow \mathbb{R}^F$.
Let $F: \chi \to \mathbb{R}^F$ be a bounded Lipschitz continuous function with Lipschitz constant $L_F$. We define the step function
\[
F^r(y)=  \sum_{j \in \mathcal{J}} F(z_j) \mathbbm{1}_{I_j}(y).
\]
Then, 
\begin{equation}
\label{eq:123uniformBound2}
\begin{aligned}
 \left\| \frac{1}{N }\sum_{i=1}^N F(X_i)
 - \int_\chi F(y) d\P(y)\right\|_\infty
 & \leq \left\|\frac{1}{N }\sum_{i=1}^N F(X_i) -  \frac{1}{N }\sum_{i=1}^N F^r(X_i)\right\|_\infty
  \\
 & + \left\|\frac{1}{N} \sum_{i=1}^N F^r(X_i) - \int_\chi F^r(y) d\mu(y) \right\|_\infty
 \\
 & + \left\| \int_\chi F^r(y)  d\mu(y) - \int_\chi F(y)  d\P(y) \right\|_\infty
 \\
 & =: (1) + (2) + (3).
\end{aligned}
\end{equation}

To bound (1), we define for each $X_i$ the unique index $j_i \in \mathcal{J}$ s.t. $X_i \in I_{j_i}$. We calculate,
\begin{align*}
 \left\|\frac{1}{N }\sum_{i=1}^N F(X_i) -  \frac{1}{N }\sum_{i=1}^N F^r(X_i)\right\|_\infty 
\leq & \frac{1}{N}\sum_{i=1}^N\left\| F(X_i)  - \sum_{j \in \mathcal{J}} F(z_j) \mathbbm{1}_{I_j}(X_i)\right\|_\infty \\
= & \frac{1}{N}\sum_{i=1}^N\left\|F(X_i)  -  F(z_{j_i})\right\|_\infty\\
\leq & r  L_F.
\end{align*}

We proceed by bounding (2). In the event of $\mathcal{E}_{\rm Lip}^p$, which holds with probability at least $1- p$, equation (\ref{eq:stepfctEvent1}) holds for all $j\in\mathcal{J}$. In this event, we get 
\[
\begin{aligned}
 \left\|\frac{1}{N} \sum_{i=1}^N F^r(X_i) - \int_\chi F^r(y) d\mu(y) \right\|_\infty 
& =  \left\|\sum_{j \in \mathcal{J}} \left( \frac{1}{N} \sum_{i=1}^N F(z_j) \mathbbm{1}_{I_j}(X_i) - \int_{I_j}F(z_j)   dy  \right) \right\|_\infty \\
& \leq \sum_{j \in \mathcal{J}} \|F\|_\infty \left| \frac{1}{N} \sum_{i=1}^N \mathbbm{1}_{I_j}(X_i) - \mu(I_j) \right| \\
& \leq |\mathcal{J}| \| F\|_\infty \frac{1}{\sqrt{2}}\frac{\sqrt{ \log(2|\mathcal{J}|/p)}}{\sqrt{N}}.
 \end{aligned}
\]
Recall that $|\mathcal{J}| \leq C_\chi r^{-D_\chi}$. 
Then, with probability at least $1-p$ 
\begin{align*}
& \left\|\frac{1}{N} \sum_{i=1}^N F^r(X_i) - \int_\chi F^r(y) d\mu(y) \right\|_\infty \\
& \leq C_\chi r^{-D_\chi} \| F\|_\infty \frac{1}{\sqrt{2}}\frac{\sqrt{\log(C_\chi)-D_\chi \log(r)+ \log(2/p)}}{\sqrt{N}}.
\end{align*}

To bound (3), we calculate
\[
\begin{aligned}
 \left\| \int_\chi F^r(y)  d\mu(y) - \int_\chi F(y)  d\P(y) \right\|_\infty 
& = \left\| \int_\chi \sum_{j \in \mathcal{J}} F(z_j) \mathbbm{1}_{I_j} d\mu(y) - \int_\chi F(y) d\P(y) \right\|_\infty \\
& \leq \sum_{j \in \mathcal{J}} \int_{I_j} \left\|F(z_j) - F(y)\right\|_\infty d\P(y) \\ 
& \leq r L_F.
\end{aligned}
\]

By plugging the bounds of $(1), (2)$ and $(3)$ into (\ref{eq:123uniformBound2}), we get
\[
\begin{aligned}
\left\| \frac{1}{N }\sum_{i=1}^N F(X_i)
 - \int_\chi F(y) d\P(y)\right\|_\infty
& \leq 
2r L_F
+ C_\chi r^{-D_\chi} \|F \|_\infty  \frac{1}{\sqrt{2}}\frac{\sqrt{\log(C_\chi)-D_\chi \log(r)+ \log(2/p)}}{\sqrt{N}}. 
\end{aligned}
\]

Lastly, choosing $r = N^{-\frac{1}{2(D_\chi+1)}}$ gives us an overall error of 
\[
\begin{aligned}
& \left\| \frac{1}{N }\sum_{i=1}^N F(X_i)
 - \int_\chi F(y) d\P(y)\right\|_\infty \\
& \leq N^{-\frac{1}{2(D_\chi+1)}}\Bigg(
2 L_F
+ C_\chi \|F \|_\infty \frac{1}{\sqrt{2}} \sqrt{\log(C_\chi) + \frac{D_\chi}{2(D_\chi+1)} \log(N) + \log(2/p)}  \Bigg)
\end{aligned}
\]
Since the event $\mathcal{E}_{\rm Lip}^p$ is independent of the choice of $F:\chi \to \mathbb{R}^F$, the proof is finished.
\end{proof}

The next lemma is based on Lemma \ref{lemma:NumAnaApprox}, and provides a uniform concentration of measure on
the $L^\infty$-error between a non-normalized version of the kernel mean aggregation from Definition
\ref{def:contMeanAgg} and a non-normalized version of the graph-kernel mean aggregation from Definition \ref{def:graphkernelMeanAgg}.

\begin{lemma}
\label{lemma1:Uniform}
Let $(\chi,d, \P) $ be a metric-measure space and
$W$ be a kernel s.t.  Assumptions \ref{ass:graphon}.\ref{ass:graphon1}-\ref{ass:bddKernel} and \ref{ass:graphon}.\ref{ass:GraphonLip2nd}. are satisfied. Let $p \in (0,1)$.  
Suppose that $X_1, \ldots, X_N$ are drawn i.i.d. from $\P$ on $\chi$ such that $(X_1, \ldots, X_N)\in\mathcal{E}_{\rm Lip}^p$, where the event $\mathcal{E}_{\rm Lip}^p$ is defined in Lemma \ref{lemma:NumAnaApprox}.
Then,  for every $x \in \chi $,  $f:\chi \to \mathbb{R}^{F}$ with  Lipschitz constant $L_f$, and $\Phi: \mathbb{R}^{2F} \to \mathbb{R}^{H}$ with   Lipschitz constant $L_\Phi$, 
we have 
\begin{equation}
\begin{aligned}
& \left\| \frac{1}{N }\sum_{i=1}^N W(x,X_i)\Phi\big(f(x),  f(X_i)\big)
 - \int_\chi W(x, y) \Phi\big(f(x),  f(y)\big) d\P(y) \right\|_\infty \\  & \leq N^{-\frac{1}{2(D_\chi+1)}}\Bigg(
2 \Big(
\|W\|_\infty L_\Phi L_f + \LipW \big( L_\Phi \|f\|_\infty + \|\Phi(0,0)\|_\infty \big)  \Big)
 \\
& + C_\chi \Big(\|W\|_\infty \big( L_\Phi \|f\|_\infty + \|\Phi(0,0)\|_\infty \big) \Big)
\frac{1}{\sqrt{2}} \sqrt{\log(C_\chi) + \frac{D_\chi}{2(D_\chi+1)} \log(N) + \log(2/p)}  \Bigg).
\end{aligned}
\end{equation}
\end{lemma}

\begin{proof}

For 
 any $x \in \chi $, $f:\chi \to \mathbb{R}^{F}$ and $\Phi: \mathbb{R}^{2F} \to \mathbb{R}^{H}$, we define the random variable  
\[
Y_{x; \Phi} = \frac{1}{N }\sum_{i=1}^N W(x,X_i)\Phi\big(f(x),  f(X_i)\big)
 - \int_\chi W(x, y) \Phi\big(f(x),  f(y)\big) d\P(y)
\]
on the sample space $\chi^N$. 
Applying Lemma \ref{lemma:NumAnaApprox} on the integrand $F_x(y) := W(x,y)\Phi\big(f(x), f(y)\big)$, uniformly on the choice of the parameter $x\in \chi$, yields in the event  $\mathcal{E}_{\rm Lip}^p$:
\begin{equation}
\| Y_{x;\Phi}\|_\infty \leq N^{-\frac{1}{2(D_\chi+1)}}\Bigg(
2 L_{F_x}
+ C_\chi \|F_x \|_\infty \frac{1}{\sqrt{2}} \sqrt{\log(C_\chi) + \frac{D_\chi}{2(D_\chi+1)} \log(N) + \log(2/p)}  \Bigg).
\end{equation}

So it remains to calculate the Lipschitz constant and the infinity-norm of $F_x$.
For this, calculate for $y,y' \in \chi$
\begin{align*}
\|F_x(y) - F_x(y')\|_\infty =& \|W(x,y)\Phi\big(f(x), f(y)\big) - W(x,y')\Phi\big(f(x), f(y')\big)\|_\infty\\
\leq&\|W(x,y)\Phi\big(f(x), f(y)\big) - W(x,y)\Phi\big(f(x), f(y')\big)\|_\infty\\
+ &\|W(x,y)\Phi\big(f(x), f(y')\big) - W(x,y')\Phi\big(f(x), f(y')\big)\|_\infty\\
\leq & \big(\|W\|_\infty  L_\Phi L_f + L_W(L_\Phi \|f\|_\infty + \|\Phi(0,0)\|_\infty) \big) d(y,y')
\end{align*}
and 
\begin{align*}
    \|F_x(\cdot)\|_\infty & = \| W(x,\cdot) \Phi\big( f(x), f(\cdot) \big) \|_\infty \\
    & \leq \|W\|_\infty (L_\Phi \|f\|_\infty + \|\Phi(0,0)\|_\infty).
\end{align*}
\end{proof}

The next lemma provides a uniform concentration of measure bound on the error between the
graph-kernel mean aggregation $M_X$ and the continuous mean aggregation $M_W$ .

\begin{lemma}
\label{lemma:C2}
Let $(\chi,d, \P) $ be a metric-measure space and
$W$ be a kernel s.t.  Assumptions   \ref{ass:graphon}.\ref{ass:graphon1}-\ref{ass:graphon12}. are satisfied.  Let $N \in \mathbb{N}$ satisfy (\ref{eq:largeN}). Let  $\mathcal{E}_{\rm Lip}^p$ be the event  defined in Lemma \ref{lemma:NumAnaApprox}.
There exists an event $\mathcal{F}_{\rm Lip}^p \subset \mathcal{E}_{\rm Lip}^p$ regarding the choice of i.i.d $X_1,\ldots, X_N$ from  $\P$ in $\chi$, with probability $\P(\mathcal{F}_{\rm Lip}^p) \geq 1-2p$, such that  condition (\ref{eq:d_XLowerBound}) together with (\ref{eq:lemmab5-12}) below are satisfied: for every $f:\chi \to \mathbb{R}^{F}$ with  Lipschitz constant $L_f$  and $\Phi: \mathbb{R}^{2F} \to \mathbb{R}^{H}$ with   Lipschitz constant $L_\Phi$ 
\begin{equation}
\begin{aligned}
\label{eq:lemmab5-12}
&  \| (M_X   - M_W) \big(\Phi (f,f)\big)  \|_\infty     \leq 4 \frac{\varepsilon_1}{\sqrt{N}\mathrm{d}_{min}^2} \|W\|_\infty(L_\Phi \|f\|_\infty + \|\Phi(0,0)\|_\infty) \\ & +  N^{-\frac{1}{2(D_\chi+1)}}\Bigg(
2\Big(
\frac{\|W\|_\infty}{\mathrm{d}_{min}} L_\Phi L_f +   \frac{\LipW}{\mathrm{d}_{min}}   \big( L_\Phi \|f\|_\infty + \|\Phi(0,0)\|_\infty \big)  \Big)
 \\
& + C_\chi \Big(\frac{\|W\|_\infty}{\mathrm{d}_{min}} \big( L_\Phi \|f\|_\infty + \|\Phi(0,0)\|_\infty \big) \Big)
\frac{1}{\sqrt{2}}\sqrt{\log(C_\chi) + \frac{D_\chi}{2(D_\chi+1)} \log(N) + \log(2/p)}  \Bigg),
\end{aligned}
\end{equation}
where 
\begin{equation}
    \label{eq:eps1-2}
   \varepsilon_1 =  \cl \big(\sqrt{\log (C_\chi)} +  \sqrt{D_\chi}\big) +\big(\sqrt{2}\cmax + \cl\big) \sqrt{\log 2/p}.
\end{equation}  
  \end{lemma}
  
\begin{proof}
By Lemma \ref{lemma:LowerBoundDegree}, we have  with probability at least $1-p$ 
\begin{equation}
\label{eq:lemmaC2Event1}
    \begin{aligned}
\|\mathrm{d}_X - \mathrm{d}_W\|_\infty & \leq \frac{\varepsilon_1}{\sqrt{N}}  =
\zeta   \frac{\cl \big(\sqrt{\log (C_\chi)} +  \sqrt{D_\chi}\big) +\big(\sqrt{2}\cmax + \cl\big) \sqrt{\log 2/p}}{\sqrt{N}} \\
& \leq 
\frac{\cmin}{2},
\end{aligned}
\end{equation}
where the second inequality follows from $(\ref{eq:largeN})$. Furthermore, in the same event we have
\[
|\mathrm{d}_X(x)|_\infty \geq \frac{\cmin}{2}
\]
for all $x \in \chi$. 
Moreover,  $|\mathrm{d}_W(x)|_\infty \geq \cmin$ by Assumption \ref{ass:graphon}.\ref{ass:graphon12}. Hence, for all $x \in \chi$, we have
\begin{equation}
\label{eq:lemmaC2-2}
    \begin{aligned}
\left| \frac{1}{\mathrm{d}_X(x)} - \frac{1}{\mathrm{d}_W(x)} \right| & = \frac{|\mathrm{d}_W(x) -\mathrm{d}_X(x)  |}{| \mathrm{d}_X(x) \mathrm{d}_W(x) |} \\
& \leq 4 \frac{\varepsilon_1}{\sqrt{N} \cmin^2}.
\end{aligned}
\end{equation} 
 
Denote that intersection of $\mathcal{E}_{\rm Lip}^p$ and the event in which (\ref{eq:lemmaC2Event1})  occur by $\mathcal{F}_{\rm Lip}^p$.  Let $(X_1,\ldots,X_N)$ be i.i.d samples in $\mathcal{F}_{\rm Lip}^p$.
Define
$\tilde{W}(x,y) = \frac{W(x,y)}{\mathrm{d}_W(x)}$. Next
we apply Lemma \ref{lemma1:Uniform} on the kernel $\tilde{W}$.  
 For this, note that for $x \in \chi$ the kernel
$\tilde{W}(x,\cdot)$ is Lipschitz continuous (with respect to the second variable) with Lipschitz constant $  L_{\tilde{W}} = \frac{\LipW}{\cmin} $, since for $y,y' \in \chi$, we have 
\[
\begin{aligned}
\left|\frac{W(x,y)}{\mathrm{d}_W(x)} - \frac{W(x,y')}{\mathrm{d}_W(x)}\right| & \leq \frac{\LipW}{\mathrm{d}_{min}} d(y,y').
\end{aligned}
\] 
Moreover, for all $y \in \chi$ we have $\|\tilde{W}(\cdot, y)\|_\infty \leq \frac{\cmax}{\cmin}$.

 Then, we use Lemma \ref{lemma1:Uniform} to obtain, for every $f:\chi \to \mathbb{R}^F$ and  $\Phi: \mathbb{R}^{2F}\to \mathbb{R}^H$ as specified in the lemma, 
\begin{equation}
\label{eq:lemmaC2-1}
\begin{aligned}
& \left\|\frac{1}{N }\sum_{i=1}^N \tilde{W}(\cdot ,X_i)\Phi\big(f(\cdot ),  f(X_i)\big)
 - \int_\chi \tilde{W}(\cdot , y) \Phi\big(f(\cdot ),  f(y)\big) d\P(y)\right\|_\infty 
 \\ & \leq N^{-\frac{1}{2(D_\chi+1)}}\Bigg(
2\Big(
\|\tilde{W}\|_\infty L_\Phi L_f +  L_{\tilde{W}} \big( L_\Phi \|f\|_\infty + \|\Phi(0,0)\|_\infty \big)  \Big)
 \\
& + C_\chi \Big(\|\tilde{W}\|_\infty \big( L_\Phi \|f\|_\infty + \|\Phi(0,0)\|_\infty \big) \Big)
\frac{1}{\sqrt{2}}\sqrt{\log(C_\chi) + \frac{D_\chi}{2(D_\chi+1)} \log(N) + \log(2/p)}  \Bigg) \\
& \leq N^{-\frac{1}{2(D_\chi+1)}}\Bigg(
2\Big(
\frac{\cmax}{\cmin} L_\Phi L_f + \frac{\LipW}{\cmin} \big( L_\Phi \|f\|_\infty + \|\Phi(0,0)\|_\infty \big)  \Big)
 \\
& + C_\chi \Big(\frac{\cmax}{\cmin} \big( L_\Phi \|f\|_\infty + \|\Phi(0,0)\|_\infty \big) \Big)
\frac{1}{\sqrt{2}} \sqrt{\log(C_\chi) + \frac{D_\chi}{2(D_\chi+1)} \log(N) + \log(2/p)}  \Bigg).
\end{aligned}
\end{equation}
Then, by (\ref{eq:lemmaC2-2}) and (\ref{eq:lemmaC2-1}), for every  $f:\chi \to \mathbb{R}^F$ and  $\Phi: \mathbb{R}^{2F}\to \mathbb{R}^H$ as specified in the lemma,
\[
\begin{aligned}
& \left\| (M_X - M_W)\Phi(f,f)  \right\|_\infty 
 \\
& =\left\|  \frac{1}{N} \sum_{i=1}^N \frac{W(\cdot, X_i)}{\mathrm{d}_X(\cdot)}\Phi\big(f(\cdot),f(X_i)\big) - \int_\chi \frac{W(\cdot, x)}{\mathrm{d}_W(\cdot)}\Phi\big(f(\cdot),f(x)\big) d\P(x) \right\|_\infty \\
  & \leq   \frac{1}{N}
  \sum_{i=1}^N \big\|W(x,X_i) \Phi\big(f(\cdot),f(X_i)\big)\big\|_\infty  \left\|
  \frac{1}{\mathrm{d}_X(\cdot)} - \frac{1}{\mathrm{d}_W(\cdot)}
  \right\|_\infty \\
  & + \left\| \frac{1}{N} \sum_{i=1}^N \tilde{W}(\cdot, X_i)\Phi\big(f(\cdot),f(X_i)\big)  - \int_\chi  \tilde{W}(\cdot, x)\Phi\big(f(\cdot),f(x)\big)  d\P(x)  \right\|_\infty
\\
& \leq 4 \frac{\varepsilon_1}{\sqrt{N}\mathrm{d}_{min}^2} \|W\|_\infty(L_\Phi \|f\|_\infty + \|\Phi(0,0)\|_\infty)\\
&+  N^{-\frac{1}{2(D_\chi+1)}}\Bigg(
2\Big(
\frac{\|W\|_\infty}{\mathrm{d}_{min}} L_\Phi L_f +  \frac{\LipW}{\mathrm{d}_{min}}  \big( L_\Phi\|f\|_\infty + \|\Phi(0,0)\|_\infty \big)  \Big)
 \\
& + C_\chi \Big(\frac{\|W\|_\infty}{\mathrm{d}_{min}} \big( L_\Phi \|f\|_\infty + \|\Phi(0,0)\|_\infty \big) \Big)
\frac{1}{\sqrt{2}} \sqrt{\log(C_\chi) + \frac{D_\chi}{2(D_\chi+1)} \log(N) + \log(2/p)}  \Bigg).
\end{aligned}
\]
\end{proof}

The next corollary shows that Lemma \ref{lemma:C2} is preserved by the application of an update function.

\begin{corollary}
\label{cor:Uniform3}
Let $(\chi,d, \P) $ be a metric-measure space and
$W$ be a kernel s.t.  Assumptions \ref{ass:graphon}.\ref{ass:graphon1}-\ref{ass:graphon12}. are satisfied. Let $p>0$ and $N \in \mathbb{N}$ satisfy (\ref{eq:largeN}). Suppose that $X_1, \ldots, X_N$ are drawn i.i.d. from $\P$ on $\chi$. 
If the event $\mathcal{F}_{\rm Lip}^p$ from Lemma \ref{lemma:C2} occurs, then condition (\ref{eq:d_XLowerBound}) together with (\ref{eq:lemmab6-12}) below  are satisfied:  for every $f:\chi \to \mathbb{R}^{F}$ with  Lipschitz constant $L_f$, $\Phi: \mathbb{R}^{2F} \to \mathbb{R}^{H}$ with   Lipschitz constant $L_\Phi$ and $\Psi: \mathbb{R}^{F + H} \to \mathbb{R}^{F'}$ with   Lipschitz constant $L_\Psi$ 
\begin{equation}
\label{eq:lemmab6-12}
   \begin{aligned}
& \left\| \Psi\Big(f(\cdot), M_X\big(\Phi (f,f)\big)(\cdot) \Big) - \Psi \Big(f(\cdot) , M_W \big(\Phi (f,f) \big) (\cdot) \Big)\right\|_\infty \\ & \leq L_\Psi \Bigg(
   4 \frac{\varepsilon_1}{\sqrt{N}\mathrm{d}_{min}^2} \|W\|_\infty(L_\Phi\|f\|_\infty + \|\Phi(0,0)\|_\infty) \\
   &+  N^{-\frac{1}{2(D_\chi+1)}}\Bigg(
2\Big(
\frac{\|W\|_\infty}{\mathrm{d}_{min}} L_\Phi L_f + \ \frac{\LipW}{\mathrm{d}_{min}}     \big( L_\Phi\|f\|_\infty + \|\Phi(0,0) \|_\infty \big)  \Big)
 \\
& + \frac{C_\chi}{\sqrt{2}} \Big(\frac{\|W\|_\infty}{\mathrm{d}_{min}} \big( L_\Phi \|f\|_\infty + \|\Phi(0,0)\|_\infty \big) \Big)
\sqrt{\log(C_\chi) + \frac{D_\chi}{2(D_\chi+1)} \log(N) + \log(2/p)}  \Bigg)
\Bigg),
 \end{aligned} 
\end{equation}
where $\varepsilon_1$ is defined in (\ref{eq:eps1-2}).
\end{corollary}
\begin{proof}
We calculate,
\[
\begin{aligned}
&\left\| \Psi \Big(f(\cdot), M_X\big(\Phi (f,f) \big)(\cdot) \Big) - \Psi \Big(f(\cdot) , M_W \big(\Phi (f,f) \big) (\cdot) \Big)\right\|_\infty  \\
\leq & \LipPsi \left\| M_X\big(\Phi (f,f)\big)(\cdot) -  M_W\big(\Phi (f,f) \big) (\cdot) \right\|_\infty,
\end{aligned}
\]
and apply Lemma \ref{lemma:C2} to the right-hand-side.
\end{proof}

We continue by providing three lemmas which capture deterministic properties of cMPNNs and gMPNNs. We start by showing that the infinity norm of the output  of the $l$-th layer of a cMPNN $f^{(l)}$ can be bounded in terms of the infinity norm of its input $f$.
 
\begin{lemma}
\label{lemma:RecRelNorm}
Let $(\chi,d, \P) $ be a metric-measure space,
$W$ be a kernel and $\Theta = ((\Phi^{(l)})_{l=1}^T, (\Psi^{(l)})_{l=1}^T)$  be a MPNN s.t.  Assumptions \ref{ass:graphon}.\ref{ass:graphon1}-\ref{ass:graphon4}. are satisfied. Consider a 
metric-space signal  $f: \chi \to \mathbb{R}^F$ with $\|f\|_\infty < \infty$. 
Then, for $l=0, \ldots, T-1$, the cMPNN output $f^{(l+1)}$ satisfies 
\[
\|f^{(l+1)}\|_\infty \leq B_1^{(l+1)} + \|f\|_\infty B_2^{(l+1)},
\]
where
\begin{equation}
    \label{eq:B'}
    B_1^{(l+1)} = \sum_{k=1}^{l+1}  \big(
L_{\Psi^{(k)}} \frac{\cmax}{\cmin}\|\Phi^{(k)}(0,0)\|_\infty+ \|\Psi^{(k)}(0,0)\|_\infty \big) \prod_{l' = k+1}^{l+1}  L_{\Psi^{(l')}} \big( 1 + \frac{\cmax}{\cmin}  L_{\Phi^{(l')}} \big) 
\end{equation}
and
\begin{equation}
    \label{eq:B''}
    B_2^{(l+1)} = \prod_{k = 1}^{l+1} L_{\Psi^{(k)}} \left(1  + \frac{\cmax}{\cmin}  L_{\Phi^{(k)}} \right).
\end{equation}
\end{lemma}
\begin{proof}
Let $l=0, \ldots, T-1$. Then, for $k =0, \ldots, l$, we have
\[
\begin{aligned}
\|f^{(k+1)}(\cdot)\|_\infty & =  \Big\| \Psi^{(k+1)} \Big(f^{(k)}(\cdot), M_W \big( \Phi^{(k+1)}( f^{(k)},f^{(k)}) \big) (\cdot) \Big) \Big\|_\infty \\ 
& \leq \Big\| \Psi^{(k+1)} \Big(f^{(k)}(\cdot), M_W \big(\Phi^{(k+1)} ( f^{(k)},f^{(k)}) \big) (\cdot) \Big) - \Psi^{(k+1)}(0,0) \Big\|_\infty  +   \|\Psi^{(k+1)}(0,0)\|_\infty
\\
& \leq L_{\Psi^{(k+1)}} \Big(\| f^{(k)} \|_\infty
+ \big\| M_W\big( \Phi^{(k+1)} (f^{(k)},f^{(k)} )\big) (\cdot) \big\|_\infty \Big) + \|\Psi^{(k+1)}(0,0)\|_\infty.
\end{aligned}
\]
For the message term, we have
\[
\begin{aligned}
\big\| M_W \big( \Phi^{(k+1)} (f^{(k)},f^{(k)} )\big) (\cdot) \big\|_\infty & = 
\left\| \int_\chi \frac{W(\cdot, y)}{\mathrm{d}_W(\cdot)} \Phi^{(k+1)} \big(f^{(k)}(\cdot), f^{(k)}(y) \big)  d\P(y)\right\|_\infty \\
& \leq \frac{\cmax}{\cmin}(L_{\Phi^{(k+1)}}\|f^{(k)}\|_\infty + \|\Phi^{(k+1)}(0,0)\|_\infty).
\end{aligned}
\]

Hence,
\[
\begin{aligned}
& \|f^{(k+1)}(\cdot)\|_\infty \\ & \leq   L_{\Psi^{(k+1)}} \Big(\| f^{(k)} \|_\infty
+ \frac{\cmax}{\cmin}(L_{\Phi^{(k+1)}}\|f^{(k)}\|_\infty + \|\Phi^{(k+1)}(0,0)\|_\infty) \Big) + \|\Psi^{(k+1)}(0,0)\|_\infty,
\end{aligned}
\]
which we can reorder to
\[
\begin{aligned}
& \|f^{(k+1)}(\cdot)\|_\infty \\ & \leq  L_{\Psi^{(k+1)}}\Big(  1
+  \frac{\cmax}{\cmin}L_{\Phi^{(k+1)}} \Big) \| f^{(k)} \|_\infty + L_{\Psi^{(k+1)}} \frac{\cmax}{\cmin}\|\Phi^{(k+1)}(0,0)\|_\infty  + \|\Psi^{(k+1)}(0,0)\|_\infty.
\end{aligned}
\]
We apply Lemma \ref{lemma:RecRecGen} to solve this recurrence relation which finishes the proof.
\end{proof}

In the following, we denote by $\Lipfl$  the Lipschitz constant of $f^{(l)}$.
The next lemma bounds $L_{f^{(l+1)}}$ in terms of $ L_f $.

\begin{lemma}
\label{lemma:RecRelLipschitz}
Let $(\chi,d, \P) $ be a metric-measure space,
$W$ be a kernel and $\Theta = ((\Phi^{(l)})_{l=1}^T, (\Psi^{(l)})_{l=1}^T)$  be a MPNN s.t.  Assumptions \ref{ass:graphon}.\ref{ass:graphon1}-\ref{ass:graphon4}. are satisfied. Consider a Lipschitz continuous metric-space signal  $f: \chi \to \mathbb{R}^F$ with $\|f\|_\infty < \infty$ and Lipschitz constant $L_f$. Then, for $l=0, \ldots, T-1$, the  cMPNN output  $f^{(l+1)}$ is Lipschitz continuous with Lipschitz constant $L_{f^{(l+1)}}$ satisfying 
 \[
\begin{aligned}
L_{f^{(l+1)}} & \leq \sum_{k=1}^{l+1} \Bigg( \Big( 
L_{\Psi^{(k)}}\frac{\cl}{\cmin}( \|\Phi^{(k)}(0,0)\|_\infty +  L_{\Phi^{(k)}} \|f^{(k-1)}\|_\infty ) + L_{\Psi^{(k)}}\cmax ( \|\Phi^{(k)}(0,0)\|_\infty 
\\ 
& +   L_{\Phi^{(k)}} \|f^{(k-1)}\|_\infty )\frac{\cl}{\cmin^2}
\Big)  \prod_{l' = k+1}^{l+1} L_{\Psi^{(l')}}  \Big(1+\frac{\cmax}{\cmin} L_{\Phi^{(l')}}  \Big) \Bigg)  \\ & + 
  L_{f}  \prod_{k=1}^{l+1}
L_{\Psi^{(k)}} \Big(1+\frac{\cmax}{\cmin} L_{\Phi^{(k)}}\Big).
\end{aligned}
\]

\end{lemma}
\begin{proof}
Let $l=0, \ldots, T-1$ and consider $k =0, \ldots, l$.
For $x,x' \in \chi$, we have 
\begin{equation}
    \label{eq:lemmarecrellip-1}
\begin{aligned}
&\|f^{(k+1)}(x) - f^{(k+1)}(x') \|_\infty 
\\
& =  \Big\| \Psi^{(k+1)} \Big(f^{(k)}(x), M_W \big(\Phi^{(k+1)} (f^{(k)},f^{(k)}) \big)(x) \Big) 
\\
& - \Psi^{(k+1)}\Big(f^{(k)}(x'), M_W \big(\Phi^{(k+1)} (f^{(k)},f^{(k)})\big)(x')\Big) \Big\|_\infty \\
& \leq L_{\Psi^{(k+1)}} \Big(
\Big\|f^{(k)}(x) - f^{(k)}(x')\Big\|_\infty 
\\
& + \Big\| M_W \big(\Phi^{(k+1)} (f^{(k)},f^{(k)})\big)(x) - M_W \big(\Phi^{(k+1)} (f^{(k)},f^{(k)})\big)(x')  \Big\|_\infty
\Big) \\ & \leq 
 L_{\Psi^{(k+1)}} \Big( L_{f^{(k)}}
d(x, x') + \|M_W \big(\Phi^{(k+1)} (f^{(k)},f^{(k)})\big)(x) - M_W \big(\Phi^{(k+1)} (f^{(k)},f^{(k)})\big)(x') \|_\infty
\Big).
\end{aligned}
\end{equation}

For the second term, we have
\begin{equation}
    \label{eq:RecRelLip-1}
\end{equation}

For $(A)$, we have 
\[
\begin{aligned}
(A) & =
\int_\chi \Big\| \frac{W(x,y)}{\mathrm{d}_W(x)} \Phi^{(k+1)} \big(f^{(k)}(x),f^{(k)}(y) \big)   - \frac{W(x',y)}{\mathrm{d}_W(x)} \Phi^{(k+1)} \big(f^{(k)}(x),f^{(k)}(y) \big) \Big\|_\infty d\P(y) \\
& = \int_\chi \frac{|W(x,y) - W(x',y)|}{\mathrm{d}_W(x)} \Big\| \Phi^{(k+1)} \big(f^{(k)}(x),f^{(k)}(y) \big)   \Big\|_\infty d\P(y)  \\
& \leq  L_W\frac{d(x,x')}{\cmin} \int_\chi\Big\| \Phi^{(k+1)} \big(f^{(k)}(x),f^{(k)}(y) \big)    \Big\|_\infty d\P(y) \\
& \leq \frac{\cl}{\cmin}\big( \|\Phi^{(k+1)}(0,0)\|_\infty +  L_{\Phi^{(k+1)}} \|f^{(k)}\|_\infty \big)d(x,x').
\end{aligned}
\]
For $(B)$, we have
\[
\begin{aligned}
(B)
& = 
\int_\chi \Big\| \frac{W(x',y)}{\mathrm{d}_W(x)}\Phi^{(k+1)} \big(f^{(k)}(x),f^{(k)}(y) \big) - \frac{W(x',y)}{\mathrm{d}_W(x)} \Phi^{(k+1)}\big(f^{(k)}(x'),f^{(k)}(y) \big)\Big\|_\infty d\P(y) \\
&
=
\int_\chi \frac{|W(x',y)|}{|\mathrm{d}_W(x)|}\Big\| \Phi^{(k+1)} \big(f^{(k)}(x),f^{(k)}(y) \big) -  \Phi^{(k+1)}\big(f^{(k)}(x'),f^{(k)}(y) \big)\Big\|_\infty d\P(y)
\\
& \leq 
\frac{\cmax}{\cmin}  L_{\Phi^{(k+1)}}
\int_\chi \big\| \big(f^{(k)}(x),f^{(k)}(y) \big) -  \big(f^{(k)}(x'),f^{(k)}(y) \big)\big\|_\infty d\P(y)
\\
& \leq 
\frac{\cmax}{\cmin}  L_{\Phi^{(k+1)}}
\| f^{(k)}(x) - f^{(k)}(x')\big)\|_\infty 
\\
&  \leq \frac{\cmax}{\cmin} L_{\Phi^{(k+1)}}  L_{f^{(k)}} d(x,x').
\end{aligned}
\]
For $(C)$, we have
\[
\begin{aligned}
 (C) & = \int_\chi \Big\| \frac{W(x',y)}{\mathrm{d}_W(x)} \Phi^{(k+1)} \big(f^{(k)}(x'),f^{(k)}(y) \big) - \frac{W(x',y)}{\mathrm{d}_W(x')} \Phi^{(k+1)} \big(f^{(k)}(x'),f^{(k)}(y) \big) \Big\|_\infty d\P(y) \\
& = \int_\chi |W(x',y)| \Big|\frac{1}{\mathrm{d}_W(x)} - \frac{1}{\mathrm{d}_W(x')}\Big| \Big\| \Phi^{(k+1)} \big(f^{(k)}(x'),f^{(k)}(y) \big) \Big\|_\infty d\P(y) 
\\
&\leq \cmax ( \|\Phi^{(k+1)}(0,0)\|_\infty +  L_{\Phi^{(k+1)}} \|f^{(k)}\|_\infty )\frac{\cl}{\cmin^2}d(x,x'),
\end{aligned}
\]
where the last inequality holds since
\[
\begin{aligned}
\left|\frac{1}{\mathrm{d}_W(x)}  - \frac{1}{\mathrm{d}_W(x')} \right| 
& \leq
 \frac{|\mathrm{d}_W(x') - \mathrm{d}_W(x)|}{|\mathrm{d}_W(x)\mathrm{d}_W(x')|} \\
& \leq 
\frac{1}{ \cmin^2} |\mathrm{d}_W(x') - \mathrm{d}_W(x)| \\
& \leq 
\frac{1}{ \cmin^2} \int_\chi|W(x',y) - W(x,y)| d\mu(y) \\
& \leq 
\frac{1}{ \cmin^2} \int_\chi L_W d(x, x') d\mu(y) \\
&  \leq 
\frac{\cl}{ \cmin^2}  d(x,x').
\end{aligned}
\]
Hence, by plugging (\ref{eq:RecRelLip-1}) and our bounds for $(A), (B) $ and $(C)$ into  (\ref{eq:lemmarecrellip-1}), we have 
\[
\begin{aligned}
&\|f^{(k+1)}(x) - f^{(k+1)}(x') \|_\infty  \\
 & \leq 
 L_{\Psi^{(k+1)}} \Big( L_{f^{(k)}}
d(x, x') + \|M_W \big(\Phi^{(k+1)} (f^{(k)},f^{(k)})\big)(x) - M_W \big(\Phi^{(k+1)} (f^{(k)},f^{(k)})\big)(x') \|_\infty
\Big)  \\
& \leq 
 L_{\Psi^{(k+1)}} \Big( L_{f^{(k)}}
d(x, x') + (A) + (B) +(C)
\Big)  \\
& \leq 
L_{\Psi^{(k+1)}} \Big( L_{f^{(k)}}
 + \frac{\cl}{\cmin}\big( \|\Phi^{(k+1)}(0,0)\|_\infty +  L_{\Phi^{(k+1)}} \|f^{(k)}\|_\infty \big) \\
&  + \frac{\cmax}{\cmin} L_{\Phi^{(k+1)}}  L_{f^{(k)}}  + \cmax ( \|\Phi^{(k+1)}(0,0)\|_\infty +  L_{\Phi^{(k+1)}} \|f^{(k)}\|_\infty )\frac{\cl}{\cmin^2}
\Big) d(x, x').
\end{aligned}
\]
Hence,
\[
\begin{aligned}
L_{f^{(k+1)}} & \leq
L_{\Psi^{(k+1)}} \frac{\cl}{\cmin}( \|\Phi^{(k+1)}(0,0)\|_\infty +  L_{\Phi^{(k+1)}} \|f^{(k)}\|_\infty ) + L_{\Psi^{(k+1)}} \left(1+\frac{\cmax}{\cmin} L_{\Phi^{(k+1)}}\right)  L_{f^{(k)}}  \\ & + L_{\Psi^{(k+1)}} \cmax ( \|\Phi^{(k+1)}(0,0)\|_\infty +  L_{\Phi^{(k+1)}} \|f^{(k)}\|_\infty )\frac{\cl}{\cmin^2}.
\end{aligned}
 \]
 We finish the proof by solving the recurrence relation with Lemma \ref{lemma:RecRecGen}.
\end{proof}

\begin{corollary}
Consider the same setting as in Lemma \ref{lemma:RecRelLipschitz}. Then, for $l=0, \ldots, T-1$,
\[
 \Lipfl  \leq Z_1^{(l)} + Z_2^{(l)}\|f\|_\infty + Z_3^{(l)} \Lipf ,
\]
where $Z_1^{(l)}$, $Z_2^{(l)}$ and $Z_3^{(l)}$ are independent of $f$ and defined as 
\begin{equation}
\label{eq:z1z2z3}
  \begin{aligned}
&  Z_1^{(l)} = \sum_{k=1}^{l}  \Bigg(\Big(
L_{\Psi^{(k)}}\frac{\cl}{\cmin}  \|\Phi^{(k)}(0,0)\|_\infty  +
L_{\Psi^{(k)}}\cmax
\|\Phi^{(k)}(0,0)\|_\infty 
 \frac{\cl}{\cmin^2}\Big)  \\
 & +  B_1^{(k-1)} \Big(
L_{\Psi^{(k)}}\frac{\cl}{\cmin}  L_{\Phi^{(k)}}   +
L_{\Psi^{(k)}}\cmax
 L_{\Phi^{(k)}} 
 \frac{\cl}{\cmin^2}\Big) \Bigg)  \prod_{l' = k+1}^{l} L_{\Psi^{(l')}}  \Big(1+\frac{\cmax}{\cmin} L_{\Phi^{(l')}}  \Big), \\
 & Z_2^{(l)} = \sum_{k=1}^{l}   B_2^{(k-1)}   \Big(
L_{\Psi^{(k)}}\frac{\cl}{\cmin}  L_{\Phi^{(k)}}   +
L_{\Psi^{(k)}}\cmax
 L_{\Phi^{(k)}} 
 \frac{\cl}{\cmin^2}\Big)  \prod_{l' = k+1}^{l} L_{\Psi^{(l')}}  \Big(1+\frac{\cmax}{\cmin} L_{\Phi^{(l')}}  \Big), \\
 & Z_3^{(l)} =     \prod_{k=1}^{l}
L_{\Psi^{(k)}} \Big(1+\frac{\cmax}{\cmin} L_{\Phi^{(k)}}\Big),
\end{aligned}
\end{equation}
where $B_1^{(k)}$ and $B_2^{(k)}$ are defined in (\ref{eq:B'}) and (\ref{eq:B''}).
\end{corollary}
\begin{proof}
By Lemma \ref{lemma:RecRelLipschitz}, we have 
\[
\begin{aligned}
L_{f^{(l)}} & \leq \sum_{k=1}^{l} \Bigg( \Big( 
L_{\Psi^{(k)}}\frac{\cl}{\cmin}( \|\Phi^{(k)}(0,0)\|_\infty +  L_{\Phi^{(k)}} \|f^{(k-1)}\|_\infty ) + L_{\Psi^{(k)}}\cmax ( \|\Phi^{(k)}(0,0)\|_\infty 
\\ 
& +   L_{\Phi^{(k)}} \|f^{(k-1)}\|_\infty )\frac{\cl}{\cmin^2}
\Big)  \prod_{l' = k+1}^{l} L_{\Psi^{(l')}}  \Big(1+\frac{\cmax}{\cmin} L_{\Phi^{(l')}}  \Big) \Bigg)  \\ & + 
  L_{f}  \prod_{k=1}^{l}
L_{\Psi^{(k)}} \Big(1+\frac{\cmax}{\cmin} L_{\Phi^{(k)}}\Big) \\
& = \sum_{k=1}^{l}  \Big(
L_{\Psi^{(k)}}\frac{\cl}{\cmin}  \|\Phi^{(k)}(0,0)\|_\infty  +
L_{\Psi^{(k)}}\cmax
\|\Phi^{(k)}(0,0)\|_\infty 
 \frac{\cl}{\cmin^2}\Big)  \\ & \prod_{l' = k+1}^{l} L_{\Psi^{(l')}}  \Big(1+\frac{\cmax}{\cmin} L_{\Phi^{(l')}}  \Big) \\
 & + \sum_{k=1}^{l} \|f^{(k-1)}\|_\infty \Big(
L_{\Psi^{(k)}}\frac{\cl}{\cmin}  L_{\Phi^{(k)}}   +
L_{\Psi^{(k)}}\cmax
 L_{\Phi^{(k)}} 
 \frac{\cl}{\cmin^2}\Big) \\ &  \prod_{l' = k+1}^{l} L_{\Psi^{(l')}}  \Big(1+\frac{\cmax}{\cmin} L_{\Phi^{(l')}}  \Big)  \\ & + 
  L_{f}  \prod_{k=1}^{l}
L_{\Psi^{(k)}} \Big(1+\frac{\cmax}{\cmin} L_{\Phi^{(k)}}\Big) \\
& \leq \sum_{k=1}^{l}  \Big(
L_{\Psi^{(k)}}\frac{\cl}{\cmin}  \|\Phi^{(k)}(0,0)\|_\infty  +
L_{\Psi^{(k)}}\cmax
\|\Phi^{(k)}(0,0)\|_\infty 
 \frac{\cl}{\cmin^2}\Big) \\ &  \prod_{l' = k+1}^{l} L_{\Psi^{(l')}}  \Big(1+\frac{\cmax}{\cmin} L_{\Phi^{(l')}}  \Big) \\
 & + \sum_{k=1}^{l} (B_1^{(k-1)} + B_2^{(k-1)} \|f\|_\infty) \Big(
L_{\Psi^{(k)}}\frac{\cl}{\cmin}  L_{\Phi^{(k)}}   +
L_{\Psi^{(k)}}\cmax
 L_{\Phi^{(k)}} 
 \frac{\cl}{\cmin^2}\Big) \\ &  \prod_{l' = k+1}^{l} L_{\Psi^{(l')}}  \Big(1+\frac{\cmax}{\cmin} L_{\Phi^{(l')}}  \Big)  \\ & + 
  L_{f}  \prod_{k=1}^{l}
L_{\Psi^{(k)}} \Big(1+\frac{\cmax}{\cmin} L_{\Phi^{(k)}}\Big),
\end{aligned}
\]
where the last inequality holds by Lemma \ref{lemma:RecRelNorm}.
\end{proof}

We continue with the following simple lemma which bounds the infinity norm of the output of a gMPNN.

\begin{lemma}
\label{lemma:DeterministicMPNNBound}
Let $(\chi,d, \P) $ be a metric-measure space,
$W$ be a kernel and $\Theta = \big((\Phi^{(l)})_{l=1}^T, (\Psi^{(l)})_{l=1}^T \big)$ be a MPNN s.t.  Assumptions \ref{ass:graphon}.\ref{ass:graphon1}-\ref{Ass:KernelDiagBounded}. are satisfied. Consider a   metric-space signal  $f: \chi \to \mathbb{R}^F$ with $\|f\|_\infty < \infty$.
 Consider a graph $(G, \mathbf{f}) \sim (W, f)$ with $N$ nodes and corresponding graph features.  
Then, 
\[
 \| \Theta_G(\mathbf{f})\|_{2;\infty}^2  \leq N^{2T} (A' + A'' \|f\|_\infty^2),
\]
where
\[
\begin{aligned}
A' & = \sum_{l=1}^{T} \Big(
2 (L_{\Psi^{(l)}})^2   \frac{2}{{\rm W}_{\mathrm{diag}}^2}\|W\|_\infty^2  \|\Phi^{(l )}(0,0)\|_\infty^2  + 2\|\Psi^{(l )}(0,0)\|_\infty^2
 \Big) \\
& \prod_{l' = l+1}^T 
   2 (L_{\Psi^{(l')}})^2
   \big( \frac{2}{{\rm W}_{\mathrm{diag}}^2}\|W\|_\infty^2 (L_{\Phi^{(l')}})^2+ 1\big)
   \end{aligned}
\]
and
\[
A'' = \prod_{l=1}^T 
 2 (L_{\Psi^{(l)}})^2  
   \big(\frac{2}{{\rm W}_{\mathrm{diag}}^2}\|W\|_\infty^2 (L_{\Phi^{(l)}})^2+ 1\big)
\]
\end{lemma}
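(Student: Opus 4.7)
The plan is to establish a layer-wise recurrence of the form
\[
\|\mathbf{f}^{(l)}\|_{2;\infty}^2 \;\leq\; c_l\, N^2\, \|\mathbf{f}^{(l-1)}\|_{2;\infty}^2 + d_l,
\]
where $c_l, d_l$ depend only on the Lipschitz constants $L_{\Psi^{(l)}}, L_{\Phi^{(l)}}$, the formal biases $\|\Psi^{(l)}(0,0)\|_\infty, \|\Phi^{(l)}(0,0)\|_\infty$, and on $\|W\|_\infty/\mathrm{d}_{\min}$, but not on $N$. Iterating this recurrence $l \leq T$ times and bounding $N^{2l} \leq N^{2T}$, with the base case $\|\mathbf{f}^{(0)}\|_{2;\infty}^2 \leq \|f\|_\infty^2$ (valid since $\mathbf{f}_i = f(X_i)$), will yield a bound of the stated form $A' N^{2T} + A'' \|f\|_\infty^2$.

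To derive the per-layer bound, I first bound $\|\mathbf{m}_i^{(l)}\|_\infty$ for each node $i$. By the triangle inequality and the Lipschitz property of $\Phi^{(l)}$ (Assumption~\ref{ass:graphon}.\ref{ass:graphon4}),
\[
\|\mathbf{m}_i^{(l)}\|_\infty \leq \tfrac{1}{\mathrm{d}_i}\sum_{j} w_{i,j}\big(L_{\Phi^{(l)}}(\|\mathbf{f}_i^{(l-1)}\|_\infty+\|\mathbf{f}_j^{(l-1)}\|_\infty) + \|\Phi^{(l)}(0,0)\|_\infty\big).
\]
Here Assumption~\ref{ass:graphon}.\ref{Ass:KernelDiagBounded} plays a crucial role: the diagonal-kernel lower bound yields $\mathrm{d}_i = \sum_j w_{i,j} \geq W(X_i,X_i) \geq \mathrm{d}_{\min}$, so the denominator never degenerates. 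Using also $w_{i,j} \leq \|W\|_\infty$ and the Cauchy--Schwarz inequality in the form $\sum_j \|\mathbf{f}_j^{(l-1)}\|_\infty \leq N\|\mathbf{f}^{(l-1)}\|_{2;\infty}$, this becomes
\[
\|\mathbf{m}_i^{(l)}\|_\infty \leq L_{\Phi^{(l)}}\|\mathbf{f}_i^{(l-1)}\|_\infty + \tfrac{\|W\|_\infty L_{\Phi^{(l)}} N}{\mathrm{d}_{\min}}\|\mathbf{f}^{(l-1)}\|_{2;\infty} + \|\Phi^{(l)}(0,0)\|_\infty.
\]
The factor $N$ per layer enters precisely at this Cauchy--Schwarz step. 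Substituting into the Lipschitz estimate $\|\mathbf{f}_i^{(l)}\|_\infty \leq L_{\Psi^{(l)}}(\|\mathbf{f}_i^{(l-1)}\|_\infty + \|\mathbf{m}_i^{(l)}\|_\infty) + \|\Psi^{(l)}(0,0)\|_\infty$, squaring via two applications of $(a+b)^2 \leq 2(a^2+b^2)$ (which is where the factor $16$ comes from), and averaging $\tfrac{1}{N}\sum_i$ produces the claimed recurrence on $\|\mathbf{f}^{(l)}\|_{2;\infty}^2$.

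The recurrence is then solved by Lemma~\ref{lemma:RecRecGen} with $\eta^{(l)} = \|\mathbf{f}^{(l)}\|_{2;\infty}^2$, $a^{(l)} = c_l N^2$, and $b^{(l)} = d_l$, yielding
\[
\|\mathbf{f}^{(l)}\|_{2;\infty}^2 \leq \|f\|_\infty^2 \prod_{l'=1}^l (c_{l'} N^2) + \sum_{k=1}^l d_k \prod_{l'=k+1}^l (c_{l'} N^2),
\]
which regroups into $A' N^{2T} + A'' \|f\|_\infty^2$ after absorbing $N^{2l}, N^{2(l-k)} \leq N^{2T}$. The main conceptual obstacle is the following: because $\mathbf{m}_i^{(l)}$ is a convex combination of $\Phi^{(l)}$-outputs, a tighter $L^\infty$-only bound independent of $N$ is in fact available, but it is phrased in terms of $\max_j\|\mathbf{f}_j\|_\infty$ rather than $\|\mathbf{f}\|_{2;\infty}$ and would not compose cleanly with the root-mean-square metric used throughout the paper. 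The $N^{2T}$ growth is the price paid for a deterministic bound expressed purely in $\|\cdot\|_{2;\infty}$; in the intended application this polynomial blow-up is absorbed by the exponentially small probability of the complement event, which is why the lemma is formulated in this seemingly loose form.
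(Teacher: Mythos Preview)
Your proposal is correct and follows essentially the same route as the paper: both arguments use the Lipschitz bounds on $\Phi^{(l)},\Psi^{(l)}$, invoke Assumption~\ref{ass:graphon}.\ref{Ass:KernelDiagBounded} to get $\mathrm{d}_i\geq \mathrm{d}_{\min}$, introduce the factor $N$ via a Cauchy--Schwarz step relating $\sum_j\|\mathbf{f}_j\|_\infty$ to $\|\mathbf{f}\|_{2;\infty}$, and then iterate the resulting recurrence exactly as in Lemma~\ref{lemma:RecRecGen}. The only cosmetic difference is that the paper squares first and applies Cauchy--Schwarz to $\|\mathbf{m}_i\|_\infty^2$ directly, whereas you bound $\|\mathbf{m}_i\|_\infty$ and then square; your closing remark on why the crude $N^{2T}$ growth is tolerable is also exactly the intended use of the lemma.
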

\begin{proof}
Let $l=0,\ldots, T-1$. We have
\[
\|\mathbf{f}^{(l+1)}\|_{2; \infty}^2 = \frac{1}{N} \sum_{i=1}^N \|\mathbf{f}_i^{(l+1)}\|_\infty^2,
\]
where $ \mathbf{f}_i^{(l+1)} = \Psi^{(l+1)} ( \mathbf{f}_i^{(l)}, \mathbf{m}^{(l+1)}_i ) $ with $ \mathbf{m}^{(l+1)}_i = M_G \big(  \Phi^{(l+1)}( \mathbf{f}^{(l)}, \mathbf{f}^{(l)} )\big)(X_i) $. By using the Lipschitz continuity of $\Psi^{(l+1)}$, we get  
\begin{equation}
    \label{eq:lemmab19-1}
    \begin{aligned}
\|\mathbf{f}_i^{(l+1)}\|_\infty^2 & 
 \leq 2\big( \|\Psi^{(l+1)} ( \mathbf{f}_i^{(l)}, \mathbf{m}^{(l+1)}_i ) - \Psi^{(l+1)} (0,0 )\|_\infty^2
 + \|\Psi^{(l+1)} (0,0 )\|_\infty^2 \big) \\
& \leq 
 2\big( (L_{\Psi^{(l+1)}})^2(\| \mathbf{f}_i^{(l)} \|_\infty^2 + \|\mathbf{m}_i^{(l+1)}\|_\infty^2) 
 + \|\Psi^{(l+1)} (0,0 )\|_\infty^2 \big) 
\end{aligned}
\end{equation}

For the message term we calculate
\[
\begin{aligned}
  \| \mathbf{m}^{(l+1)}_i \|_\infty^2 & = \left\|  \frac{1}{ \sum_{j=1}^N W(X_i, X_j) } \sum_{j=1}^N W(X_i, X_j) \Phi^{(l+1)}(  \mathbf{f}_i^{(l)}, \mathbf{f}_j^{(l)} )  \right\|_\infty^2 \\
 & \leq \left|\frac{1}{ \sum_{j=1}^N W(X_i, X_j) }\right|^2 \sum_{j=1}^N |W(X_i, X_j)|^2 \sum_{j=1}^N  \|\Phi^{(l+1)}(  \mathbf{f}_i^{(l)}, \mathbf{f}_j^{(l)} )  \|_\infty^2,
\end{aligned}
\]
where the inequality follows from Cauchy-Schwarz inequality. 
 Per assumption, we have $| W(X_i, X_i) | \geq {\rm W}_{\mathrm{diag}}$ and for every $i=1, \ldots, N$,
 \[
 \begin{aligned}
 \|\Phi^{(l+1)}(  \mathbf{f}_i^{(l)}, \mathbf{f}_j^{(l)} )  \|_\infty^2 & = \|\Phi^{(l+1)}(  \mathbf{f}_i^{(l)}, \mathbf{f}_j^{(l)} ) - \Phi^{(l+1)}(  0,0) + \Phi^{(l+1)}(  0,0) \|_\infty^2 \\
 & \leq 2\Big(\|\Phi^{(l+1)}(  \mathbf{f}_i^{(l)}, \mathbf{f}_j^{(l)} ) - \Phi^{(l+1)}(  0,0) \|_\infty^2+ \|\Phi^{(l+1)}(  0,0) \|_\infty^2 \Big) \\ 
   & \leq 2\Big( 
   (L_{\Phi^{(l+1)}})^2  ( \| \mathbf{f}_i^{(l)} \|_\infty^2 + \|\mathbf{f}_j^{(l)}\|_\infty^2) +  \|\Phi^{(l+1)}(  0,0) \|_\infty^2.
   \Big)
 \end{aligned}
 \]
 Hence,
\begin{equation}
    \label{eq:lemmab19-2}
    \begin{aligned}
    \|\mathbf{m}_i^{(l+1)}\|_\infty^2 &
 \leq \frac{2}{{\rm W}_{\mathrm{diag}}^2} \|W\|_\infty^2 N \sum_{j=1}^N \Big(  (L_{\Phi^{(l+1)}})^2  ( \| \mathbf{f}_i^{(l)} \|_\infty^2 + \|\mathbf{f}_j^{(l)}\|_\infty^2) +  \|\Phi^{(l+1)}(  0,0) \|_\infty^2
   \Big)
    \\
    & \leq \frac{2}{{\rm W}_{\mathrm{diag}}^2} \|W\|_\infty^2 N^2\Big(
(L_{\Phi^{(l+1)}})^2   \|\mathbf{f}_i^{(l)}\|_\infty^2 + (L_{\Phi^{(l+1)}})^2  \|\mathbf{f}^{(l)}\|_{2;\infty}^2 +   \|\Phi^{(l+1)}(  0,0) \|_\infty^2
\Big).
    \end{aligned}
\end{equation}
By (\ref{eq:lemmab19-1}) and (\ref{eq:lemmab19-2}), we have
\[
\begin{aligned}
 \|\mathbf{f}^{(l+1)}\|_{2; \infty}^2 & \leq  \frac{1}{N} \sum_{i=1}^N 2\Big( 
   (L_{\Psi^{(l+1)}})^2  ( \| \mathbf{f}_i^{(l)} \|_\infty^2 + \|\mathbf{m}_i^{(l+1)} \|_\infty^2) +  \|\Psi^{(l+1)}(  0,0) \|_\infty^2
   \Big) \\
   &\leq  
   \frac{1}{N} \sum_{i=1}^N 2\Bigg( 
   (L_{\Psi^{(l+1)}})^2  \Big( \| \mathbf{f}_i^{(l)} \|_\infty^2 +  N^2\frac{2}{{\rm W}_{\mathrm{diag}}^2} \|W\|_\infty^2\big(
(L_{\Phi^{(l+1)}})^2   \|\mathbf{f}_i^{(l)}\|_\infty^2 \\ & + (L_{\Phi^{(l+1)}})^2   \|\mathbf{f}^{(l)}\|_{2;\infty}^2  +   \|\Phi^{(l+1)}(  0,0) \|_\infty^2
\big) \Big) +  \|\Psi^{(l+1)}(  0,0) \|_\infty^2
   \Bigg) \\
    & = 2 (L_{\Psi^{(l+1)}})^2 \Big( \frac{1}{N}\sum_{i=1}^N \|\mathbf{f}^{(l)}_i\|_{\infty}^2 + N^2 \frac{2}{{\rm W}_{\mathrm{diag}}^2}\|W\|_\infty^2\big( 
    (L_{\Phi^{(l+1)}})^2  \frac{1}{N}\sum_{i=1}^N  \|\mathbf{f}^{(l)}_i\|_{\infty}^2  
   \\ & +
    (L_{\Phi^{(l+1)}})^2 \|\mathbf{f}^{(l)}\|_{2;\infty}^2
     + \|\Phi^{(l+1)}(0,0)\|_\infty^2
    \big) 
     \Big) +
   2\|\Psi^{(l+1)}(0,0)\|_\infty^2
   \\ 
   & = 2 (L_{\Psi^{(l+1)}})^2 \Big(\|\mathbf{f}^{(l)}\|_{2;\infty}^2 + N^2 \frac{2}{{\rm W}_{\mathrm{diag}}^2}\|W\|_\infty^2 \big( (L_{\Phi^{(l+1)}})^2 \|\mathbf{f}^{(l)}\|_{2;\infty}^2 + \|\Phi^{(l+1)}(0,0)\|_\infty^2 \big)\Big) 
   \\ & +
   2\|\Psi^{(l+1)}(0,0)\|_\infty^2 
   \\ & =
     2 (L_{\Psi^{(l+1)}})^2  
   \big(N^2\frac{2}{{\rm W}_{\mathrm{diag}}^2}\|W\|_\infty^2 (L_{\Phi^{(l+1)}})^2+ 1\big) 
    \|\mathbf{f}^{(l)}\|_{2;\infty}^2 
   \\ & +
    2 (L_{\Psi^{(l+1)}})^2  N^2\frac{2}{{\rm W}_{\mathrm{diag}}^2}\|W\|_\infty^2  \|\Phi^{(l+1)}(0,0)\|_\infty^2  + 2\|\Psi^{(l+1)}(0,0)\|_\infty^2
\end{aligned}
\]

 Hence, by $\|\mathbf{f}\|_{2;\infty}^2 \leq \|f\|_\infty^2$ and Lemma \ref{lemma:RecRecGen}, we have
 \[
 \begin{aligned}
  \|\mathbf{f}^{(T)}\|_{2; \infty}^2
 & \leq \sum_{l=1}^{T} \Big(
2 (L_{\Psi^{(l)}})^2  N^2\frac{2}{{\rm W}_{\mathrm{diag}}^2}\|W\|_\infty^2  \|\Phi^{(l )}(0,0)\|_\infty^2  + 2\|\Psi^{(l )}(0,0)\|_\infty^2
 \Big) \\ &
 \prod_{l' = l+1}^T 
   2 (L_{\Psi^{(l')}})^2
   \big(N^2\frac{2}{{\rm W}_{\mathrm{diag}}^2}\|W\|_\infty^2 (L_{\Phi^{(l')}})^2+ 1\big)   \\
& +  \|f\|_\infty^2\prod_{l=1}^T \Big(
 2 (L_{\Psi^{(l)}})^2  
   \big(N^{2}\frac{2}{{\rm W}_{\mathrm{diag}}^2}\|W\|_\infty^2 (L_{\Phi^{(l)}})^2+ 1\big)
 \Big) \\
 & \leq 
N^{2T} \sum_{l=1}^{T} \Big(
2 (L_{\Psi^{(l)}})^2  \frac{2}{{\rm W}_{\mathrm{diag}}^2}\|W\|_\infty^2  \|\Phi^{(l )}(0,0)\|_\infty^2  + 2\|\Psi^{(l )}(0,0)\|_\infty^2
 \Big) \\ &
 \prod_{l' = l+1}^T 
   2 (L_{\Psi^{(l')}})^2
   \big(\frac{2}{{\rm W}_{\mathrm{diag}}^2}\|W\|_\infty^2 (L_{\Phi^{(l')}})^2+ 1\big)   \\
& +  \|f\|_\infty^2 N^{2T} \prod_{l}^T \Big(
 2 (L_{\Psi^{(l)}})^2  
   \big(\frac{2}{{\rm W}_{\mathrm{diag}}^2}\|W\|_\infty^2 (L_{\Phi^{(l)}})^2+ 1\big)
 \Big).
 \end{aligned}
 \]
\end{proof}

We finish this subsection with the following easily verifiable lemma that  provides  a general solution for certain recurrence relations.

\begin{lemma}
 \label{lemma:RecRecGen}
 Let $(\eta^{(l)})_{l=0}^T$ be a sequence of real numbers satisfying $\eta^{(l+1)} \leq a^{(l+1)}\eta^{(l)} + b^{(l+1)}$ for $l = 0,\ldots, T-1$,  for some real numbers $a^{(l)}, b^{(l)}$, $l = 1, \ldots, T$. Then
 \[
 \eta^{(T)} \leq \sum_{l=1}^T b^l\prod_{l' = l+1}^Ta^{(l')} + \eta^{(0)}\prod_{l=1}^Ta^{(l)},
 \]
 where we define the product $\prod_{T+1}^{T}$ as $1$.
 \end{lemma}

\subsection{Proof of Theorem \ref{thm:MainInProb}}
\label{subsec:conv}
The idea of the Proof of Theorem \ref{thm:MainInProb} is as follows. We first use Corollary  \ref{cor:Uniform3} to bound the error between a cMPNN and a gMPNN layer-wise, when the input of layer $l$ of the gMPNN is exactly the
sampled graph signal from the output of layer $l - 1$ of the cMPNN. This is shown in Corollary \ref{cor:C5}. Then, we use this to provide a recurrence relation for the true error between a cMPNN and the
corresponding gMPNN in Lemma \ref{lemma:C5}. We solve this recurrence relation in Corollary \ref{cor:c6}, where
we have an error bound that depends only on the parameters of the MPNN, the regularity of the
kernel and the regularity of the continuous output metric-space signals of the layers of the cMPNN.
We remove the last dependency in  Theorem \ref{thm:C7}. We then analyze the additional error by a final pooling layer, leading to the formulation and final proof of Theorem \ref{thm:MainInProb}, rewritten as Theorem \ref{thm:unifExpValue}.

\begin{corollary}
\label{cor:C5}
Let $(\chi,d, \P) $ be a metric-measure space and
$W$ be a kernel
s.t. Assumptions \ref{ass:graphon}.\ref{ass:graphon1}-\ref{ass:graphon12} are satisfied. Let $p \in (0, \frac{1}{2})$.  
 Consider a graph $(G, \mathbf{f}) \sim (W, f)$ with $N$ nodes and corresponding graph features,  where $N$ satisfies  (\ref{eq:largeN}). If the event $\mathcal{F}_{\rm Lip}^p$ from Lemma \ref{lemma:C2} occurs, then condition (\ref{eq:d_XLowerBound}) together with (\ref{eq:lemmac4-1})   below   are satisfied: 
 For every MPNN $\Theta$ satisfying Assumption \ref{ass:graphon}.\ref{ass:graphon4}. and $f:\chi \to \mathbb{R}^F$ with Lipschitz constant $L_f$, 
  we have 
\begin{equation}
\label{eq:lemmac4-1}
  \d\left( \Lambda_{\Theta_G}^{(l+1)}(S^X f^{(l)}),  \Lambda_{\Theta_W}^{(l+1)}(f^{(l)})\right)
    \leq Q^{(l+1)}  
\end{equation}
for all $l = 0, \ldots, T-1$, where $f^{(l)}=\Theta_W^{(l)}f$ as defined in (\ref{cMPNNdef}), and $\Lambda_{\Theta_G}^{(l+1)}$ and $\Lambda_{\Theta_W}^{(l+1)}$ are defined in Definition \ref{def:LayerMapping}.  Here,
 \begin{equation}
 \label{eq:defDl2}
 \begin{aligned}
& Q^{(l+1)}  = L_{\Psi^{(l+1)}} \Bigg(
   4 \frac{\varepsilon_1}{\sqrt{N}\mathrm{d}_{min}^2} \|W\|_\infty(L_{\Phi^{(l+1)}}\|f^{(l)}\|_\infty + \|\Phi^{(l+1)}(0,0)\|_\infty) \\
   &+  N^{-\frac{1}{2(D_\chi+1)}}\Bigg(
2\Big(
\frac{\|W\|_\infty}{\mathrm{d}_{min}} L_{\Phi^{(l+1)}} L_{f^{(l)}} + \ \frac{\LipW}{\mathrm{d}_{min}}     \big( L_{\Phi^{(l+1)}}\|f^{(l)}\|_\infty + \|\Phi^{(l+1)}(0,0)\|_\infty \big)  \Big)
 \\
& + \frac{C_\chi}{\sqrt{2}} \Big(\frac{\|W\|_\infty}{\mathrm{d}_{min}} \big( L_{\Phi^{(l+1)}} \|f^{(l)}\|_\infty + \|\Phi^{(l+1)}(0,0)\|_\infty \big) \Big) \\
& \cdot \sqrt{\log(C_\chi) + \frac{D_\chi}{2(D_\chi+1)} \log(N) + \log(2/p)}  \Bigg)
\Bigg),
 \end{aligned}
 \end{equation}
 and  $\d$ is defined in (\ref{eq:distGraphMetric}).
\end{corollary}

\begin{proof}

Let $l=0,\ldots, T-1$. We have, 
\[
\begin{aligned}
& \Big(\d\big( \Lambda_{\Theta_G}^{(l+1)}(S^X f^{(l)}),  \Lambda_{\Theta_W}^{(l+1)}(f^{(l)})\big)\Big)^2 \\
& =  \| \Lambda_{\Theta_G}^{(l+1)}(S^X f^{(l)}) - S^X \Lambda_{\Theta_W}^{(l+1)}(f^{(l)})  \|_{2;\infty}^2  \\ & = 
 \frac{1}{N} \sum_{i=1}^N \| \Lambda_{\Theta_G}^{(l+1)}(S^X f^{(l)}) (X_i) - S^X \Lambda_{\Theta_W}^{(l+1)}(f^{(l)}) (X_i)\|_\infty^2 \\
& = \frac{1}{N} \sum_{i=1}^N  \Big\| \Psi^{(l+1)} \Big(f^{(l)}(X_i), M_G \big(\Phi^{(l+1)} (S^Xf^{(l)},S^Xf^{(l)})  \big)  (X_i)\Big) \\ & -  \Psi^{(l+1)} \Big(f^{(l)}(X_i) , M_W\big(\Phi^{(l+1)} (f^{(l)},f^{(l)}) \big) (X_i)  \Big) \Big\|_\infty^2  \\
& =  \frac{1}{N} \sum_{i=1}^N \Big\| \Psi^{(l+1)} \Big(f^{(l)}(X_i), M_X\big(\Phi^{(l+1)} (f^{(l)},f^{(l)} ) \big) (X_i)\Big)   \\ & - \Psi^{(l+1)} \Big(f^{(l)} (X_i) , M_W\big(\Phi^{(l+1)} (f^{(l)},f^{(l)}) \big) (X_i) \Big) \Big\|_\infty^2  \\
& \leq L_{\Psi^{(l+1)}}^2\Bigg(
   4 \frac{\varepsilon_1}{\sqrt{N}\mathrm{d}_{min}^2} \|W\|_\infty(L_{\Phi^{(l+1)}}\|f^{(l)}\|_\infty + \|\Phi^{(l+1)}(0,0)\|_\infty)
   \\
   & +  N^{-\frac{1}{2(D_\chi+1)}}\Bigg(
2\Big(
\frac{\|W\|_\infty}{\mathrm{d}_{min}} L_{\Phi^{(l+1)}} L_{f^{(l)}} + \ \frac{\LipW}{\mathrm{d}_{min}}     \big( L_{\Phi^{(l+1)}}\|f^{(l)}\|_\infty + \|\Phi^{(l+1)}(0,0)\|_\infty \big)  \Big)
 \\
& + \frac{C_\chi}{\sqrt{2}} \Big(\frac{\|W\|_\infty}{\mathrm{d}_{min}} \big( L_{\Phi^{(l+1)}} \|f^{(l)}\|_\infty + \|\Phi^{(l+1)}(0,0)\|_\infty \big) \Big)
\\
& \cdot  \sqrt{\log(C_\chi) + \frac{D_\chi}{2(D_\chi+1)} \log(N) + \log(2/p)}  \Bigg)
\Bigg)^2,
\end{aligned}
 \]
where the final inequality holds,  by applying  Corollary \ref{cor:Uniform3}. 
\end{proof}

\begin{lemma}
\label{lemma:C5}
Let $(\chi,d, \P) $ be a metric-measure space and
$W$ be a kernel s.t.  Assumptions \ref{ass:graphon}.\ref{ass:graphon1}-\ref{ass:graphon12}. are satisfied. 
Let $p \in (0, \frac{1}{2})$. Consider a graph $(G, \mathbf{f}) \sim (W, f)$ with $N$ nodes and corresponding graph features, where $N$ satisfies (\ref{eq:largeN}).
Denote, for $l=1,\ldots,T$, 
\[
 \varepsilon^{(l)} = \d( \Theta^{(l)}_{G}(\mathbf{f}), \Theta^{(l)}_{W}(f)  ),
\]
and $\varepsilon^{(0)} = \d(\mathbf{f}, f)$.
If the event $\mathcal{F}_{\rm Lip}^p$ from Lemma \ref{lemma:C2} occurs, then,  for every MPNN $\Theta$ satisfying Assumption \ref{ass:graphon}.\ref{ass:graphon4}. and $f:\chi \to \mathbb{R}^F$ with Lipschitz constant $L_f$, the following recurrence relation holds: 
\[
\begin{aligned}
\varepsilon^{(l)} \leq K^{(l+1)} \varepsilon^{(l)} + Q^{(l+1)}
\end{aligned}
\]
for $l=0, \ldots, T-1$. Here, $Q^{(l+1)}$ is defined in (\ref{eq:defDl2}), and
\begin{equation}
    \label{eq:lemmaC6}
K^{(l+1)}  = \sqrt{(L_{\Psi^{(l+1)}})^2 
  + 
 \frac{8\|W\|_\infty^2}{\cmin^2} (L_{\Phi^{(l+1)}})^2 (L_{\Psi^{(l+1)}})^2}.
 \end{equation}
\end{lemma}

\begin{proof}
In the event $\mathcal{F}_{\rm Lip}^p$,
 by Corollary \ref{cor:C5}, we have for every MPNN $\Theta$ satisfying Assumption \ref{ass:graphon}.\ref{ass:graphon4}. and $f:\chi \to \mathbb{R}^F$ with Lipschitz constant $L_f$, 
\begin{equation}
    \label{eq:lemmaC6-00} \d\left( \Lambda_{\Theta_G}^{(l+1)}(S^X f^{(l)}),  \Lambda_{\Theta_W}^{(l+1)}(f^{(l)})\right)
    \leq Q^{(l+1)} 
\end{equation}
 for $l=0,\ldots, T-1$, and
\begin{equation}
    \label{eq:lemmaC6-0} 
|\mathrm{d}_X(x)| \geq \frac{\mathrm{d}_{\mathrm{min}}}{2} 
\end{equation}
for all $x \in \chi$. Let $l=0,\ldots,T-1$.
We have
\begin{equation}
\label{eq:lemmaC6-1}
\begin{aligned}
  & \d( \Theta^{(l+1)}_{G}(\mathbf{f})  , \Theta^{(l+1)}_{W}(f)  ) \\ &  = \| \Theta^{(l+1)}_{G}(\mathbf{f}) - S^X \Theta^{(l+1)}_{W}(f) \|_{2; \infty} \\
    & \leq \| \Theta^{(l+1)}_{G}(\mathbf{f}) - \Lambda^{(l+1)}_{\Theta_G}(S^X f^{(l)})  \|_{2; \infty}   +  \| \Lambda^{(l+1)}_{\Theta_G}(S^X f^{(l)}) - S^X\Theta^{(l+1)}_{\Theta_W}(f)  \|_{2;\infty} \\
        & = \| \Lambda^{(l+1)}_G(\mathbf{f}^{(l)}) - \Lambda^{(l+1)}_G(S^X f^{(l)})  \|_{2; \infty}  +  \| \Lambda^{(l+1)}_{\Theta_G}(S^X f^{(l)}) - S^X\Lambda^{(l+1)}_{\Theta_W}(f^{(l)})  \|_{2; \infty} \\
& \leq  \| \Lambda^{(l+1)}_{\Theta_G}(\mathbf{f}^{(l)}) - \Lambda^{(l+1)}_{\Theta_G}(S^X f^{(l)})\|_{2; \infty}  + Q^{(l+1)}.
\end{aligned}
\end{equation}
We bound the first term on the right-hand-side of (\ref{eq:lemmaC6-1}) as follows.  
\begin{equation}
\label{eq:lemmaC6-2}  
\begin{aligned}
 &\| \Lambda^{(l+1)}_{\Theta_G}(\mathbf{f}^{(l)}) - \Lambda^{(l+1)}_{\Theta_G}(S^X f^{(l)})\|_{2; \infty}^2
 \\
 & =  \frac{1}{N} \sum_{i=1}^N  \Big\|
 \Psi^{(l+1)}  \Big( \mathbf{f}^{(l)}_i, M_G \big( \Phi^{(l+1)} (\mathbf{f}^{(l)}, \mathbf{f}^{(l)}) \big)(X_i) \Big)
\\ 
& - \Psi^{(l+1)} \Big( (S^X f^{(l)})_i, M_G\big( \Phi^{(l+1)} (S^X f^{(l)}, S^X f^{(l)} ) \big)(X_i) \Big)
 \Big\|_\infty^2 \\
 & \leq \frac{1}{N}(L_{\Psi^{(l+1)}})^2
\sum_{i=1}^N   \Big\|
  \Big( \mathbf{f}^{(l)}_i, M_G\big( \Phi^{(l+1)} (\mathbf{f}^{(l)}, \mathbf{f}^{(l)}) \big)(X_i) \Big)
  \\
& -  \Big( (S^X f^{(l)})_i, M_G\big( \Phi^{(l+1)} (S^X f^{(l)}, S^X f^{(l)} ) \big)(X_i) \Big)
\Big\|^2_\infty  \\
& \leq \frac{1}{N}(L_{\Psi^{(l+1)}})^2
\Big( \sum_{i=1}^N   \Big\|
  \mathbf{f}^{(l)}_i - (S^X f^{(l)})_i  \Big\|^2_\infty
  \\ & + \sum_{i=1}^N   \Big\|
 M_G\big( \Phi^{(l+1)} (\mathbf{f}^{(l)}, \mathbf{f}^{(l)}) \big)(X_i) 
-  M_G\big( \Phi^{(l+1)} (S^X f^{(l)}, S^X f^{(l)} ) \big)(X_i)
 \Big\|^2_\infty \Big)\\
 & \leq (L_{\Psi^{(l+1)}})^2 \Big( \big(\d(\mathbf{f}^{(l)}, f^{(l)})\big)^2\\
&  + \frac{1}{N} \sum_{i=1}^N   \big\|
 M_G\big( \Phi^{(l+1)} (\mathbf{f}^{(l)}, \mathbf{f}^{(l)}) \big)(X_i)
-  M_G\big( \Phi^{(l+1)} (S^X f^{(l)}, S^X f^{(l)} )\big)(X_i) \big\|^2_\infty
 \Big)
 \\
 & \leq (L_{\Psi^{(l+1)}})^2 \Big(
(\varepsilon^{(l)})^2 \\
&  + \frac{1}{N} \sum_{i=1}^N   \big\|
 M_G\big( \Phi^{(l+1)} (\mathbf{f}^{(l)}, \mathbf{f}^{(l)}) \big)(X_i)
-  M_G\big( \Phi^{(l+1)} (S^X f^{(l)}, S^X f^{(l)} )\big)(X_i) \big\|^2_\infty
 \Big).
\end{aligned}
\end{equation}
Now, for every $i=1, \ldots, N$, we have 
\begin{equation}
    \label{eq:lemmaC6-3}
\begin{aligned}
 & \Big\|
 M_G \Big( \Phi^{(l+1)} \big(\mathbf{f}^{(l)}, \mathbf{f}^{(l)} \big)  \Big)(X_i)
-  M_G \Big(  \Phi^{(l+1)} \big( S^X f^{(l)}, S^X f^{(l)} \big) \Big)(X_i) \Big\|^2_\infty  \\
& = \Big\|
\frac{1}{N} \sum_{j=1}^N \frac{W(X_i, X_j)}{\mathrm{d}_X(X_i)} \Phi^{(l+1)} \big(\mathbf{f}^{(l)}(X_i), \mathbf{f}^{(l)}(X_j) \big) \\&- 
\frac{1}{N}\sum_{j=1}^N \frac{W(X_i, X_j)}{\mathrm{d}_X(X_i)} \Phi^{(l+1)} \big(S^X f^{(l)}(X_i), S^X f^{(l)}(X_j) \big)
\Big\|^2_\infty \\
& = \Big\|
\frac{1}{N} \sum_{j=1}^N \frac{W(X_i, X_j)}{\mathrm{d}_X(X_i)} \Big(\Phi^{(l+1)} \big(\mathbf{f}^{(l)}(X_i), \mathbf{f}^{(l)}(X_j) \big) -  \Phi^{(l+1)} \big(S^X f^{(l)}(X_i), S^X f^{(l)}(X_j) \big)
\Big)
\Big\|^2_\infty \\
& \leq \frac{1}{N^2}  \sum_{j=1}^N \Big| \frac{W(X_i, X_j)}{\mathrm{d}_X(X_i)}\Big|^2  \sum_{j=1}^N \Big\| \Big(\Phi^{(l+1)} \big(\mathbf{f}^{(l)}(X_i), \mathbf{f}^{(l)}(X_j) \big) -  \Phi^{(l+1)} \big(S^X f^{(l)}(X_i), S^X f^{(l)}(X_j) \big)
\Big) \Big\|_\infty^2 \\
& \leq \frac{4\|W\|_\infty^2}{\cmin^2} \frac{1}{N}\sum_{j=1}^N \Big\| \Big(\Phi^{(l+1)} \big(\mathbf{f}^{(l)}(X_i), \mathbf{f}^{(l)}(X_j) \big) -  \Phi^{(l+1)} \big(S^X f^{(l)}(X_i), S^X f^{(l)}(X_j) \big)
\Big)\Big\|_\infty^2,
\end{aligned}
\end{equation}
where the second-to-last inequality holds by the Cauchy–Schwarz inequality  and the last inequality holds by (\ref{eq:lemmaC6-0}). 
Now, for the term on the right-hand-side of (\ref{eq:lemmaC6-3}), we have
\begin{equation}
    \label{eq:lemmaC6-4}
    \begin{aligned}
 & \frac{1}{N}\sum_{j=1}^N \Big\|\Phi^{(l+1)} \big(\mathbf{f}^{(l)}(X_i), \mathbf{f}^{(l)}(X_j) \big) -  \Phi^{(l+1)} \big(S^X f^{(l)}(X_i), S^X f^{(l)}(X_j) \big)
 \Big\|_\infty^2 \\& \leq (L_{\Phi^{(l+1)}})^2 \frac{1}{N}\sum_{j=1}^N \Big( \big\|\mathbf{f}^{(l)}(X_i) - S^X f^{(l)}(X_i)\big\|^2_\infty + \big\| \mathbf{f}^{(l)}(X_j) - S^X f^{(l)}(X_j) \big\|^2_\infty \Big) \\
& \leq (L_{\Phi^{(l+1)}})^2  \big\|\mathbf{f}^{(l)}(X_i) - S^X f^{(l)}(X_i)\big\|^2_\infty +  (L_{\Phi^{(l+1)}})^2 (\varepsilon^{(l)})^2.
\end{aligned}
\end{equation}
Hence, by inserting (\ref{eq:lemmaC6-4}) into (\ref{eq:lemmaC6-3}) and (\ref{eq:lemmaC6-3}) into (\ref{eq:lemmaC6-2}), we have
\[
\begin{aligned}
& \| \Lambda^{(l+1)}_{\Theta_G}(\mathbf{f}^{(l)}) - \Lambda^{(l+1)}_{\Theta_G}(S^X f^{(l)})\|_{2; \infty}^2 \\
 & \leq (L_{\Psi^{(l+1)}})^2 \Big(
(\varepsilon^{(l)})^2
  + \frac{1}{N} \sum_{i=1}^N  \big\|
  M_G\big( \Phi^{(l)}  (\mathbf{f}^{(l)}, \mathbf{f}^{(l)} ) \big) (X_i)
-  M_G \big( \Phi^{(l)} (S^X f^{(l)}, S^X f^{(l)} ) \big)(X_i) \big\|_\infty^2
 \Big) \\
  & \leq (L_{\Psi^{(l+1)}})^2 \Big(
(\varepsilon^{(l)})^2
  +\frac{4\|W\|_\infty^2}{\cmin^2} (L_{\Phi^{(l+1)}})^2 \big( \frac{1}{N} \sum_{i=1}^N  
 \big\|\mathbf{f}^{(l)}(X_i) - S^X f^{(l)}(X_i)\big\|^2_\infty+
 (\varepsilon^{(l)})^2 \big) \Big)\\
& \leq (L_{\Psi^{(l+1)}})^2 \Big(
(\varepsilon^{(l)})^2
  +\frac{4\|W\|_\infty^2}{\cmin^2} (L_{\Phi^{(l+1)}})^2 \big(  (\varepsilon^{(l)})^2 +
 (\varepsilon^{(l)})^2 \big) \Big)\\
& \leq (L_{\Psi^{(l+1)}})^2 \Big(
(\varepsilon^{(l)})^2
  + 
 \frac{8\|W\|_\infty^2}{\cmin^2} (L_{\Phi^{(l+1)}})^2 (\varepsilon^{(l)})^2 \Big).
\end{aligned}
\]
By inserting this into (\ref{eq:lemmaC6-1}), we conclude 
\[
\begin{aligned}
  \d( \Theta^{(l+1)}_{G}(\mathbf{f})  , \Theta^{(l+1)}_{W}(f)  )  \leq 
(L_{\Psi^{(l+1)}})^2 
 \big(1 + 
 \frac{8\|W\|_\infty^2}{\cmin^2} (L_{\Phi^{(l+1)}})^2 \big) (\varepsilon^{(l)})^2 + Q^{(l+1)}.
\end{aligned}
\]
\end{proof}

\begin{corollary}
\label{cor:c6}
Let $(\chi,d, \P) $ be a metric-measure space and
$W$ be a kernel s.t.  Assumptions \ref{ass:graphon}.\ref{ass:graphon1}-\ref{ass:graphon12}. are satisfied. 
Let $p \in (0, \frac{1}{2})$.  Consider a graph $(G, \mathbf{f}) \sim (W, f)$ with $N$ nodes and corresponding graph features, where $N$ satisfies (\ref{eq:largeN}). If the event $\mathcal{F}_{\rm Lip}^p$ from Lemma \ref{lemma:C2} occurs, then,   for every MPNN $\Theta$ satisfying Assumption \ref{ass:graphon}.\ref{ass:graphon4}. and every Lipschitz continuous $f:\chi \to \mathbb{R}^F$ with Lipschitz constant $L_f$,
\begin{equation*}
    \d \big(\Theta_G(f(X))  ,\Theta_W(f) \big) \leq \sum_{l=1}^{T} Q^{(l)} \prod_{l' = l+1}^{T} K^{(l')},
    \end{equation*}
  where $Q^{(l)}$ and $K^{(l')}$ are defined in (\ref{eq:defDl2}) and (\ref{eq:lemmaC6}), respectively. 
\end{corollary}

\begin{proof}
By Lemma \ref{lemma:C5},
for every MPNN $\Theta$ satisfying Assumption \ref{ass:graphon}.\ref{ass:graphon4}. and every Lipschitz continuous $f:\chi \to \mathbb{R}^F$ with Lipschitz constant $L_f$, the recurrence relation 
\[
\varepsilon^{(l+1)} \leq K^{(l+1)}\varepsilon^{(l)} + Q^{(l+1)}
\]
holds  for $l=0, \ldots, T-1$.
We use that $\varepsilon^{(0)} = 0$ and $\varepsilon^{(T)} =  \d \big(\Theta_G(f(X))  ,\Theta_W(f) \big)$,
and solve this recurrence relation  by Lemma \ref{lemma:RecRecGen} to finish the proof.
\end{proof}

\begin{theorem}
\label{thm:C7}
Let $(\chi,d, \P) $ be a metric-measure space and
$W$ be a kernel s.t.  Assumptions \ref{ass:graphon}.\ref{ass:graphon1}-\ref{ass:graphon12}. are satisfied. 
Let $p \in (0, \frac{1}{2})$. Consider a graph $(G, \mathbf{f}) \sim (W, f)$ with $N$ nodes and corresponding graph features, where $N$ satisfies (\ref{eq:largeN}). If the event $\mathcal{F}_{\rm Lip}^p$ from Lemma \ref{lemma:C2} occurs, then for every MPNN $\Theta$ satisfying Assumption \ref{ass:graphon}.\ref{ass:graphon4} and $f:\chi \to \mathbb{R}^{F}$ with Lipschitz constant $L_f$,
\[
\begin{aligned}
& \d \big(\Theta_G(f(X))  ,\Theta_W(f) \big) \\ & \leq N^{-\frac{1}{2}}\left(\Omega_1 + \Omega_2 \log(2/p) +   \Omega_3 \|f\|_\infty +    \Omega_4 \|f\|_\infty\log(2/p)\right)  \\
& + N^{-\frac{1}{2(D_\chi + 1)}}\big(\Omega_5  + \Omega_6 \|f\|_\infty + \Omega_7 L_f \big)\\
& + N^{-\frac{1}{2(D_\chi + 1)}}\sqrt{\log(C_\chi) + \frac{D_\chi}{2(D_\chi+1)} \log(N) + \log(2/p)}\cdot(\Omega_8  + \Omega_9 \|f\|_\infty),
\end{aligned}
\] 
where $\Omega_i$, for $i=1, \ldots, 9$, are constants of the MPNN $\Theta$, defined in (\ref{eq:defConstantsUniform}), which depend only on the Lipschitz constants of the message and update functions $\{L_{\Phi^{(l)}},L_{\Psi^{(l)}}\}_{l=1}^T$, and the formal biases $\{\|\Phi^{(l)}(0,0)\|_\infty\}_{l=1}^T$. 
\end{theorem}
\begin{proof}
In the event $\mathcal{F}_{\rm Lip}^p$, by Corollary \ref{cor:c6},  for every MPNN $\Theta$ satisfying Assumption \ref{ass:graphon}.\ref{ass:graphon4}. and $f:\chi \to \mathbb{R}^F$ with Lipschitz constant $L_f$,
\begin{equation}
\label{eq:C8-1}
    \d \big(\Theta_G(f(X))  ,\Theta_W(f) \big) \leq \sum_{l=1}^{T} Q^{(l)} \prod_{l' = l+1}^{T} K^{(l')},
    \end{equation} where  
    \[
    \begin{aligned}
&Q^{(l)} =  L_{\Psi^{(l)}} \Bigg(
   4 \frac{\varepsilon_1}{\sqrt{N}\mathrm{d}_{min}^2} \|W\|_\infty(L_{\Phi^{(l)}}\|f^{(l-1)}\|_\infty + \|\Phi^{(l)}(0,0)\|_\infty) 
   \\
   & +  N^{-\frac{1}{2(D_\chi+1)}}\Bigg(
2\Big(
\frac{\|W\|_\infty}{\mathrm{d}_{min}} L_{\Phi^{(l)}} L_{f^{(l-1)}} +  \frac{\LipW}{\mathrm{d}_{min}}     \big( L_{\Phi^{(l)}} \|f^{(l-1)}\|_\infty + \|\Phi^{(l)}(0,0)\|_\infty \big)  \Big)
 \\
& + \frac{C_\chi}{\sqrt{2}} \Big(\frac{\|W\|_\infty}{\mathrm{d}_{min}} \big( L_{\Phi^{(l)}} \|f^{(l-1)}\|_\infty + \|  \Phi^{(l)}(0,0)\|_\infty \big) \Big) 
\\
& \cdot
 \sqrt{\log(C_\chi) + \frac{D_\chi}{2(D_\chi+1)} \log(N) + \log(2/p)} \Bigg)
\Bigg),
 \end{aligned}
    \]
   and 
    \[
(K^{(l')})^2  = (L_{\Psi^{(l')}})^2 
  + 
   \frac{8\|W\|_\infty^2}{\cmin^2} (L_{\Phi^{(l')}})^2 (L_{\Psi^{(l')}})^2.
\]
We plug the definition of $Q^{(l)}$ into the right-hand-side of (\ref{eq:C8-1}), to get
\begin{equation}
 \label{eq:thmC8-1}
   \begin{aligned}
& \d \big(\Theta_G(f(X))  ,\Theta_W(f) \big) 
\\ & \leq \sum_{l=1}^{T} L_{\Psi^{(l)}} \Bigg(
   4 \frac{\varepsilon_1}{\sqrt{N}\mathrm{d}_{min}^2} \|W\|_\infty(L_{\Phi^{(l)}} \|f^{l-1)}\|_\infty + \|\Phi^{(l)}(0,0)\|_\infty) 
   \\
   & +  N^{-\frac{1}{2(D_\chi+1)}}\Big(
\frac{2\|W\|_\infty}{\mathrm{d}_{min}} L_{\Phi^{(l)}} L_{f^{(l-1)}} 
 + \frac{2\LipW}{\mathrm{d}_{min}}     ( L_{\Phi^{(l)}} \|f^{(l-1)}\|_\infty + \|\Phi^{(l)}(0,0)\|_\infty ) 
 \\
& + \frac{C_\chi}{\sqrt{2}} \big(\frac{\|W\|_\infty}{\mathrm{d}_{min}} ( L_{\Phi^{(l)}} \|f^{(l-1)}\|_\infty + \|\Phi^{(l)}(0,0)\|_\infty ) \big) \\
& 
 \sqrt{\log(C_\chi) + \frac{D_\chi}{2(D_\chi+1)} \log(N) + \log(2/p)} \Big)
\Bigg) \prod_{l' = l+1}^{T} K^{(l')}.
\end{aligned}
\end{equation}
By Lemma \ref{lemma:RecRelNorm}, we have
\begin{equation}
\label{eq:thmC8-infnorm}
||f^{(l)}||_{\infty} \leq B_1^{(l)}+B_2^{(l)}||f||_{\infty},
\end{equation}
where $B_1^{(l)}$, $B_2^{(l)}$ are independent of $f$.
Furthermore, we have 
\begin{equation}
\label{eq:thmC8-lip}
L_{f^{(l)}} \leq Z^{(l)}_1 + Z^{(l)}_2\|f\|_\infty + Z^{(l)}_3 L_f,
\end{equation}
where  $Z_1^{(l)}$, $Z_2^{(l)}$ and $Z_3^{(l)}$ are independent of $f$, and defined in (\ref{eq:z1z2z3}).  
We plug the bound of $L_{f^{(l-1)}}$ from (\ref{eq:thmC8-lip}) into (\ref{eq:C8-1}) 
\[
\begin{aligned}
& \d \big(\Theta_G(f(X))  ,\Theta_W(f) \big) 
\\ & \leq \sum_{l=1}^{T} L_{\Psi^{(l)}} \Bigg(
   4 \frac{\varepsilon_1}{\sqrt{N}\mathrm{d}_{min}^2} \|W\|_\infty(L_{\Phi^{(l)}} \|f^{(l-1)}\|_\infty + \|\Phi^{(l)}(0,0)\|_\infty) +  N^{-\frac{1}{2(D_\chi+1)}}
   \\
 & \cdot   \Big(
\frac{2\|W\|_\infty}{\mathrm{d}_{min}} L_{\Phi^{(l)}} (
 Z^{(l-1)}_1 + Z^{(l-1)}_2\|f\|_\infty + Z^{(l-1)}_3 L_f
) +  \frac{2\LipW}{\mathrm{d}_{min}}     ( L_{\Phi^{(l)}}\|f^{(l-1)}\|_\infty + \|\Phi^{(l)}(0,0)\|_\infty ) 
 \\
& + \frac{C_\chi}{\sqrt{2}} \big(\frac{\|W\|_\infty}{\mathrm{d}_{min}} ( L_{\Phi^{(l)}} \|f^{(l-1)}\|_\infty + \|\Phi^{(l)}(0,0)\|_\infty ) \big) \\
& \cdot \sqrt{\log(C_\chi) + \frac{D_\chi}{2(D_\chi+1)} \log(N) + \log(2/p)}  \Big)
\Bigg) \prod_{l' = l+1}^{T} K^{(l')}.
\end{aligned}
\]
We insert the bound of $\|f^{(l-1)}\|_\infty$ from (\ref{eq:thmC8-infnorm}) in the above expression, to get 
\begin{equation}
    \label{eq:thmC8-2}
    \begin{aligned}
& \leq \sum_{l=1}^{T} L_{\Psi^{(l)}} \Bigg(
   4 \frac{\varepsilon_1}{\sqrt{N}\mathrm{d}_{min}^2} \|W\|_\infty\big(L_{\Phi^{(l)}}(B_1^{(l-1)}+B_2^{(l-1)}||f||_{\infty}) + \|\Phi^{(l)}(0,0)\|_\infty \big) +  N^{-\frac{1}{2(D_\chi+1)}}
   \\
 & \cdot   \Big(
\frac{2\|W\|_\infty}{\mathrm{d}_{min}} L_{\Phi^{(l)}} (
 Z^{(l-1)}_1 + Z^{(l-1)}_2\|f\|_\infty + Z^{(l-1)}_3 L_f
) +  \frac{2\LipW}{\mathrm{d}_{min}}     \big( B_1^{(l-1)}+B_2^{(l-1)}||f||_{\infty} \big)  
 \\
& + \frac{C_\chi}{\sqrt{2}} \big(\frac{\|W\|_\infty}{\mathrm{d}_{min}} \big( L_{\Phi^{(l)}} (B_1^{(l-1)}+B_2^{(l-1)}||f||_{\infty}) + \|\Phi^{(l)}(0,0)\|_\infty \big) \big)
\\
&  \sqrt{\log(C_\chi) + \frac{D_\chi}{2(D_\chi+1)} \log(N) + \log(2/p)} \Big)
\Bigg) \prod_{l' = l+1}^{T} K^{(l')}.
\end{aligned}
\end{equation}

We insert the bound for $\varepsilon_1$, defined in (\ref{eq:eps1-2}) as 
\[\varepsilon_1 = \zeta  \Big(\cl  \big(\sqrt{\log (C_\chi)} +  \sqrt{D_\chi} \big) +\big(\sqrt{2}\cmax + \cl \big) \sqrt{\log 2/p} \Big),\]
into (\ref{eq:thmC8-2})
to get
\[
\begin{aligned}
& \leq \sum_{l=1}^{T} L_{\Psi^{(l)}} \Bigg(
   4 \frac{\zeta  \Big(\cl  \big(\sqrt{\log (C_\chi)} +  \sqrt{D_\chi} \big) +\big(\sqrt{2}\cmax + \cl \big) \sqrt{\log 2/p} \Big)}{\sqrt{N}\mathrm{d}_{min}^2}
   \\
   & \cdot \|W\|_\infty\big(L_{\Phi^{(l)}}(B_1^{(l-1)}+B_2^{(l-1)}||f||_{\infty}) + \|\Phi^{(l)}(0,0)\|_\infty \big)
   \\
 &+  N^{-\frac{1}{2(D_\chi+1)}}  \Big(
\frac{2\|W\|_\infty}{\mathrm{d}_{min}} L_{\Phi^{(l)}} (
 Z^{(l-1)}_1 + Z^{(l-1)}_2\|f\|_\infty + Z^{(l-1)}_3 L_f
)\\& +  \frac{2\LipW}{\mathrm{d}_{min}}     \big( B_1^{(l-1)}+B_2^{(l-1)}||f||_{\infty} \big) 
 \\
& + \frac{C_\chi}{\sqrt{2}} \big(\frac{\|W\|_\infty}{\mathrm{d}_{min}} \big( L_{\Phi^{(l)}} (B_1^{(l-1)}+B_2^{(l-1)}||f||_{\infty}) + \|\Phi^{(l)}(0,0)\|_\infty \big) \big)
\\
& \cdot  \sqrt{\log(C_\chi) + \frac{D_\chi}{2(D_\chi+1)} \log(N) + \log(2/p)} \Big)
\Bigg) \prod_{l' = l+1}^{T} K^{(l')}.
\end{aligned}
\]
Then, rearranging the terms yields
\[
\begin{aligned}
& = \sum_{l=1}^{T} L_{\Psi^{(l)}}  
4 \frac{\zeta \cl  \big(\sqrt{\log (C_\chi)} +  \sqrt{D_\chi} \big)}{\sqrt{N}\mathrm{d}_{min}^2} \|W\|_\infty\big(L_{\Phi^{(l)}}  B_1^{(l-1)}+ \|\Phi^{(l)}(0,0)\|_\infty \big) \prod_{l' = l+1}^{T} K^{(l')} \\
& + \sum_{l=1}^{T} L_{\Psi^{(l)}} 
4 \frac{\zeta (\sqrt{2}\cmax + \cl)  \sqrt{\log 2/p}}{\sqrt{N}\mathrm{d}_{min}^2} \|W\|_\infty\big( L_{\Phi^{(l)}}B_1^{(l-1)} + \|\Phi^{(l)}(0,0)\|_\infty \big) 
\prod_{l' = l+1}^{T} K^{(l')}\\ 
& + \sum_{l=1}^{T} L_{\Psi^{(l)}}  
4 \frac{\zeta \cl  \big(\sqrt{\log (C_\chi)} +  \sqrt{D_\chi} \big)}{\sqrt{N}\mathrm{d}_{min}^2} \|W\|_\infty\big(L_{\Phi^{(l)}} B_2^{(l-1)}\|f\|_\infty \big) \prod_{l' = l+1}^{T} K^{(l')} \\
& +  \sum_{l=1}^{T} L_{\Psi^{(l)}} 
4 \frac{\zeta (\sqrt{2}\cmax + \cl)  \sqrt{\log 2/p}}{\sqrt{N}\mathrm{d}_{min}^2} \|W\|_\infty\big(L_{\Phi^{(l)}} B_2^{(l-1)}\|f\|_\infty \big)
\prod_{l' = l+1}^{T} K^{(l')}
\\
& + \sum_{l=1}^{T} L_{\Psi^{(l)}} N^{-\frac{1}{2(D_\chi+1)}} \Big(
\frac{2\|W\|_\infty}{\mathrm{d}_{min}} L_{\Phi^{(l)}} 
 Z^{(l-1)}_1 +  \frac{2\LipW}{\mathrm{d}_{min}} B_1^{(l-1)}\Big)\prod_{l' = l+1}^{T} K^{(l')} 
 \\
 & + \sum_{l=1}^{T} L_{\Psi^{(l)}}  N^{-\frac{1}{2(D_\chi+1)}} \Big( 
\frac{2\|W\|_\infty}{\mathrm{d}_{min}} L_{\Phi^{(l)}} 
 Z^{(l-1)}_2 \|f\|_\infty +  \frac{2\LipW}{\mathrm{d}_{min}} B_2^{(l-1)}\|f\|_\infty\Big)\prod_{l' = l+1}^{T} K^{(l')} \\
 & + \sum_{l=1}^{T} L_{\Psi^{(l)}}  N^{-\frac{1}{2(D_\chi+1)}}  2 
\frac{\|W\|_\infty}{\mathrm{d}_{min}} L_{\Phi^{(l)}} 
 Z^{(l-1)}_3 L_f 
\prod_{l' = l+1}^{T} K^{(l')}
\\ 
 & + \sum_{l=1}^{T} L_{\Psi^{(l)}} N^{-\frac{1}{2(D_\chi+1)}}\frac{C_\chi}{\sqrt{2}} \frac{\|W\|_\infty}{\mathrm{d}_{min}}( L_{\Phi^{(l)}}   B_1^{(l-1)} + \|\Phi^{(l)}(0,0)\|_\infty ) 
 \\
 &\cdot  \sqrt{\log(C_\chi) + \frac{D_\chi}{2(D_\chi+1)} \log(N) + \log(2/p)}\Big)\prod_{l' = l+1}^{T} K^{(l')}
 \\
 & +\sum_{l=1}^{T} L_{\Psi^{(l)}}  N^{-\frac{1}{2(D_\chi+1)}} \frac{C_\chi}{\sqrt{2}} \frac{\|W\|_\infty}{\mathrm{d}_{min}} L_{\Phi^{(l)}} B_2^{(l-1)}\|f\|_\infty \\  & \cdot \sqrt{\log(C_\chi) + \frac{D_\chi}{2(D_\chi+1)} \log(N) + \log(2/p)}\Big) \prod_{l' = l+1}^{T} K^{(l')}
 \end{aligned}\]\[\begin{aligned}
&  =:
\Omega_1 \frac{1}{\sqrt{N}} + \Omega_2 \frac{\log(2/p)}{\sqrt{N}} +   \Omega_3 \frac{\|f\|_\infty}{\sqrt{N}} +    \Omega_4 \frac{\|f\|_\infty\log(2/p)}{\sqrt{N}}  \\
& + N^{-\frac{1}{2(D_\chi + 1)}}\big(\Omega_5  + \Omega_6 \|f\|_\infty + \Omega_7 L_f \big)\\
& + N^{-\frac{1}{2(D_\chi + 1)}}\sqrt{\log(C_\chi) + \frac{D_\chi}{2(D_\chi+1)} \log(N) + \log(2/p)}\cdot(\Omega_8  + \Omega_9 \|f\|_\infty),
\end{aligned}
\]
 
where we define 
\begin{equation}
    \label{eq:defConstantsUniform}
    \begin{aligned}
 &   \Omega_1 =  \sum_{l=1}^{T} L_{\Psi^{(l)}}  
4 \frac{\zeta \cl  \big(\sqrt{\log (C_\chi)} +  \sqrt{D_\chi} \big)}{\mathrm{d}_{min}^2} \|W\|_\infty\big(L_{\Phi^{(l)}} B_1^{(l-1)}+ \|\Phi^{(l)}(0,0)\|_\infty \big) \prod_{l' = l+1}^{T} K^{(l')} \\
& \Omega_2 =  \sum_{l=1}^{T} L_{\Psi^{(l)}} 
4 \frac{\zeta (\sqrt{2}\cmax + \cl) }{\mathrm{d}_{min}^2} \|W\|_\infty\big(L_{\Phi^{(l)}} B_1^{(l-1)} + \|\Phi^{(l)}(0,0)\|_\infty \big) 
\prod_{l' = l+1}^{T} K^{(l')}
\\
& \Omega_3 = \sum_{l=1}^{T} L_{\Psi^{(l)}}  
4 \frac{\zeta \cl  \big(\sqrt{\log (C_\chi)} +  \sqrt{D_\chi} \big)}{\mathrm{d}_{min}^2} \|W\|_\infty\big(L_{\Phi^{(l)}} B_2^{(l-1)} \big) \prod_{l' = l+1}^{T} K^{(l')}
\\
& \Omega_4 = 
\sum_{l=1}^{T} L_{\Psi^{(l)}} 
4 \frac{\zeta (\sqrt{2}\cmax + \cl)  \sqrt{\log 2/p}}{\mathrm{d}_{min}^2} \|W\|_\infty\big(L_{\Phi^{(l)}} B_2^{(l-1)} \big)
\prod_{l' = l+1}^{T} K^{(l')}
\\
& \Omega_5 =  \sum_{l=1}^{T} L_{\Psi^{(l)}}  \left(
\frac{2\|W\|_\infty}{\mathrm{d}_{min}} L_{\Phi^{(l)}} 
 Z^{(l-1)}_1 +  \frac{2\LipW}{\mathrm{d}_{min}} B_1^{(l-1)}\right)\prod_{l' = l+1}^{T} K^{(l')}
\\
& \Omega_6 =  \sum_{l=1}^{T} L_{\Psi^{(l)}}   \left( 
\frac{2\|W\|_\infty}{\mathrm{d}_{min}} L_{\Phi^{(l)}} 
 Z^{(l-1)}_2  +  \frac{2\LipW}{\mathrm{d}_{min}} B_2^{(l-1)}\right)\prod_{l' = l+1}^{T} K^{(l')}
\\
& \Omega_7 = \sum_{l=1}^{T} L_{\Psi^{(l)}}  2 
\frac{\|W\|_\infty}{\mathrm{d}_{min}} L_{\Phi^{(l)}} 
 Z^{(l-1)}_3
\prod_{l' = l+1}^{T} K^{(l')}
\\
& \Omega_8 = \sum_{l=1}^{T} L_{\Psi^{(l)}} \frac{C_\chi}{\sqrt{2}} \frac{\|W\|_\infty}{\mathrm{d}_{min}}\left( L_{\Phi^{(l)}}   B_1^{(l-1)} + \|\Phi^{(l)}(0,0)\|_\infty \right) \prod_{l' = l+1}^{T} K^{(l')}
\\
& \Omega_9 = \sum_{l=1}^{T} L_{\Psi^{(l)}}   \frac{C_\chi}{\sqrt{2}} \frac{\|W\|_\infty}{\mathrm{d}_{min}} L_{\Phi^{(l)}} B_2^{(l-1)} \prod_{l' = l+1}^{T} K^{(l')},
    \end{aligned}
\end{equation}
where $Z_1^{(l-1)},Z_2^{(l-1)},Z_3^{(l-1)}$ are defined in (\ref{eq:z1z2z3}),  $B_1^{(l-1)}$ and $B_2^{(l-1)}$ are defined in (\ref{eq:B'}) and (\ref{eq:B''}), and 
    \[
K^{(l')}  = \sqrt{(L_{\Psi^{(l')}})^2 
  + 
   \frac{8\|W\|_\infty^2}{\cmin^2} (L_{\Phi^{(l')}})^2 (L_{\Psi^{(l')}})^2}.
\]
\end{proof}

Next we study the convergence of MPNNs after global pooling. We give the following lemma.

\begin{lemma}
\label{lemma:C8}
Let $(\chi,d, \P) $ be a metric-measure space and
$W$ be a kernel s.t.  Assumptions \ref{ass:graphon}.\ref{ass:graphon1}-\ref{ass:graphon12}. are satisfied. Suppose that $X_1, \ldots, X_N$ are drawn i.i.d. from $\P$ on $\chi$ such that $(X_1, \ldots, X_N)\in\mathcal{E}_{\rm Lip}^p$, where the event $\mathcal{E}_{\rm Lip}^p$ is defined in Lemma \ref{lemma:NumAnaApprox}. Then, for every MPNN $\Theta$ satisfying Assumption \ref{ass:graphon}.\ref{ass:graphon4} and $f:\chi \to \mathbb{R}^{F}$ with Lipschitz constant $L_f$, 
\begin{equation}
\label{eq:numAnaUniformMPNN}
    \begin{aligned}
& \left\| \frac{1}{N} \sum_{i=1}^N \big(S^X \Theta_W(f) \big)(X_i) - \int_\chi \Theta_W(f)(y) d\P(y) \right\|_\infty 
 \\
 & \leq  N^{-\frac{1}{2(D_\chi+1)}}\Bigg(2 (Z_1^{(T)}+Z_2^{(T)}\|f\|_\infty+Z_3^{(T)}L_f) + \frac{C_\chi}{\sqrt{2}} (B_1^{(T)} + B_2^{(T)} \|f\|_\infty )  \\
 & \cdot \sqrt{\log(C_\chi) + \frac{D_\chi}{2(D_\chi+1)} \log(N) + \log(2/p)}  \Bigg).
\end{aligned}
\end{equation}
 Here, $Z_1^{(T)}, Z_2^{(T)}, Z_3^{(T)}$ and $B_1^{(T)}, B_2^{(T)}$ are defined in (\ref{eq:thmC8-infnorm}) and (\ref{eq:thmC8-lip}).
\end{lemma}
\begin{proof}
By Lemma \ref{lemma:RecRelNorm}, we have
\[
\|\Theta_W^{(T)}(f)\|_\infty \leq B_1^{(T)} + \|f\|_\infty B_2^{(T)}
\]
and,  by Corollary \ref{lemma:RecRelLipschitz}, we have
\[
\begin{aligned}
    L_{\Theta_W^{(T)}(f)} \leq Z_1^{(T)} + Z_2^{(T)}\|f\|_\infty + Z_3^{(T)} L_f
\end{aligned}
\]
for all MPNNs $\Theta$ and metric-space signals $f$ considered. Hence, by Lemma
\ref{lemma:NumAnaApprox}, equation (\ref{eq:numAnaUniformMPNN}) holds.
\end{proof}

\begin{corollary}
\label{cor:C10}
Let $(\chi,d, \P) $ be a metric-measure space and
$W$ be a kernel s.t.  Assumptions \ref{ass:graphon}.\ref{ass:graphon1}-\ref{ass:graphon12}. are satisfied. 
 Consider a graph $(G, \mathbf{f}) \sim (W, f)$ with $N$ nodes and corresponding graph features, where $N$ satisfies (\ref{eq:largeN}). If the event $\mathcal{F}_{\rm Lip}^p$ from Lemma \ref{lemma:C2} occurs, then for every MPNN $\Theta$ satisfying Assumption \ref{ass:graphon}.\ref{ass:graphon4} and every $f:\chi \to \mathbb{R}^{F}$ with Lipschitz constant $L_f$, 
\[
\begin{aligned}
\Big\| \Theta_G^P(\mathbf{f}) - \Theta_W^P(f) \Big\|_\infty^2 & \leq \frac{S_1 + S_2 \|f\|^2_\infty}{N} + \frac{R_1 + R_2\|f\|_\infty^2 + R_3 L_f^2}{N^{\frac{1}{D_\chi + 1}}}  + \frac{T_1 + T_2 \|f\|_\infty^2}{N^{\frac{1}{D_\chi + 1}}} \log(N) \\
& + \frac{S_3 + S_4 \|f\|^2_\infty}{N}\log^2(2/p) + 
\frac{R_4 + R_5\|f\|_\infty^2}{N^{\frac{1}{D_\chi + 1}}} \log(2/p),
\end{aligned}
\]
where the constants are defined in (\ref{eq:thmC10-constants}) below.
\end{corollary}
\begin{proof}
We have
\[
\begin{aligned}
 & \Big\| \Theta_G^P(\mathbf{f}) - \Theta_W^P(f) \Big\|_\infty  \\ 
 & = \big\| \frac{1}{N} \sum_{i=1}^N \Theta_G(\mathbf{f})(X_i) - \int_\chi \Theta_W(f)(y) d\mu(y) \big\|_\infty \\
 & \leq \big\| \frac{1}{N} \sum_{i=1}^N \Theta_G(\mathbf{f})(X_i) - \frac{1}{N} \sum_{i=1}^N\big( S^X \Theta_W(f) \big)(X_i) \big\|_\infty
 \\ & +
 \big\| \frac{1}{N} \sum_{i=1}^N\big( S^X \Theta_W(f) \big)(X_i)  - \int_\chi \Theta_W(f)(y) d\mu(y) \big\|_\infty\\
& \leq   \frac{1}{N} \sum_{i=1}^N \big\| \Theta_G(\mathbf{f})(X_i) - \big( S^X \Theta_W(f) \big)(X_i) \big\|_\infty 
\\ & +
 \big\| \frac{1}{N} \sum_{i=1}^N\big( S^X \Theta_W(f) \big)(X_i)  - \int_\chi \Theta_W(f)(y) d\mu(y) \big\|_\infty \\
& \leq  \frac{1}{N} \sum_{i=1}^N \big\| \Theta_G(\mathbf{f})(X_i) - \big( S^X \Theta_W(f) \big)(X_i) \big\|_\infty \\
&+ N^{-\frac{1}{2(D_\chi+1)}}\Bigg(2 (Z_1^{(T)}+Z_2^{(T)}\|f\|_\infty+Z_3^{(T)}L_f) + \frac{C_\chi}{\sqrt{2}} (B_1^{(T)} + B_2^{(T)} \|f\|_\infty )   \\
& \cdot \sqrt{\log(C_\chi) + \frac{D_\chi}{2(D_\chi+1)} \log(N) + \log(2/p)}   \Bigg)\\
& = \d\big(\Theta_G(\mathbf{f}), \Theta_W(f)\big) \\
& +   N^{-\frac{1}{2(D_\chi+1)}}\Bigg(2 (Z_1^{(T)}+Z_2^{(T)}\|f\|_\infty+Z_3^{(T)}L_f) + \frac{C_\chi}{\sqrt{2}} (B_1^{(T)} + B_2^{(T)} \|f\|_\infty )  
 \\
& \cdot \sqrt{\log(C_\chi) + \frac{D_\chi}{2(D_\chi+1)} \log(N) + \log(2/p)}   \Bigg),
\end{aligned}
\]
where the last inequality holds  by Lemma \ref{lemma:C8}.
Together with Theorem  
\ref{thm:C7}, we get 
\begin{equation}
    \label{eq:corC14-1}
\begin{aligned}
& \big\| \Theta_G^P(\mathbf{f}) - \Theta_W^P(f)  \big\|_\infty \\
& \leq
\frac{\Omega_1  +\Omega_2\log(2/p)+
\Omega_3 \|f\|_\infty + \Omega_4\|f\|_\infty \log(2/p)
}{N^{\frac{1}{2}}}    \\
& +  \frac{\Omega_5 + \Omega_6\|f\|_\infty + \Omega_7 L_f}{N^{\frac{1}{ 2(D_\chi + 1) }}}
\\
 & + 
\frac{\Omega_8 + \Omega_9 \|f\|_\infty}{N^{\frac{1}{2( D_\chi + 1) }}}\sqrt{\log(C_\chi) + \frac{D_\chi}{2(D_\chi+1)} \log(N) + \log(2/p)}
\\
 & + 
 N^{-\frac{1}{2(D_\chi+1)}}\Bigg(2 (Z_1^{(T)}+Z_2^{(T)}\|f\|_\infty+Z_3^{(T)}L_f) + \frac{C_\chi}{\sqrt{2}} (B_1^{(T)} + B_2^{(T)} \|f\|_\infty )   \\
& \cdot  \sqrt{\log(C_\chi) + \frac{D_\chi}{2(D_\chi+1)} \log(N) + \log(2/p)}   \Bigg).
\end{aligned}
\end{equation}
Now we use the inequality
\[
\left(\sum_{i=1}^n a_i\right)^2 \leq n \sum_{i=1}^n a_i^2 
\]
for any $a_i \in \mathbb{R}_+$, $i = 1,\ldots, N$, and square both sides of (\ref{eq:corC14-1}) to get
\[
\begin{aligned}
& \big\| \Theta_G^P(\mathbf{f}) - \Theta_W^P(f)  \big\|_\infty^2 \\
& \leq 14 \frac{\Omega_1^2 + \Omega_3^2\|f\|_\infty^2}{N} + 14 \frac{\Omega_5^2 + \Omega_6^2\|f\|_\infty^2 + \Omega_7^2 L_f^2}{N^{\frac{1}{D_\chi + 1}}}\\
& + 14\frac{\Omega_8^2 + \Omega_9^2 \|f\|_\infty^2}{N^{\frac{1}{ D_\chi + 1 }}}\left(\log(C_\chi) + \frac{D_\chi}{2(D_\chi+1)} \log(N)\right)\\
& +56\frac{
 (Z_1^{(T)})^2 + (Z_2^{(T)})^2\|f\|_\infty^2 + (Z_3^{(T)})^2 L_f
 ^2}{N^{\frac{1}{D_\chi+1}}} \\
 & + 7\frac{\Big(C_\chi^2  (B_1^{(T)})^2 + C_\chi^2(B_2^{(T)})^2 \|f\|_\infty^2 \Big)
 \big(\log(C_\chi) + \frac{D_\chi}{2(D_\chi+1)} \log(N)\big)}{N^{\frac{1}{D_\chi+1}}}\\
 & + 14\frac{\big(\Omega_2^2 + \Omega_4^2 \|f\|_\infty^2 \big)  \log^2(2/p)}{N} + 14\frac{\Omega_8^2 + \Omega_9^2 \|f\|_\infty^2}{N^{\frac{1}{ D_\chi + 1 }}}\log(2/p)\\
 & + 7 \frac{ \big(C_\chi^2  (B_1^{(T)})^2 + C_\chi^2(B_2^{(T)})^2 \|f\|_\infty^2 \big)    \log(2/p)}{N^{\frac{1}{D_\chi+1}}}
\\
& =: \frac{S_1 + S_2 \|f\|^2_\infty}{N} + \frac{R_1 + R_2\|f\|_\infty^2 + R_3 L_f^2}{N^{\frac{1}{D_\chi + 1}}}  + \frac{T_1 + T_2 \|f\|_\infty^2}{N^{\frac{1}{D_\chi + 1}}} \log(N) \\
& + \frac{S_3 + S_4 \|f\|^2_\infty}{N}\log^2(2/p) + 
\frac{R_4 + R_5\|f\|_\infty^2}{N^{\frac{1}{D_\chi + 1}}} \log(2/p),
\end{aligned}
\]
where 
\begin{equation}
    \label{eq:thmC10-constants}
\begin{aligned}
 S_1 & = 14 \Omega_1^2 \\
 S_2 & = 14 \Omega_3^2 \\
  S_3 & = 14 \Omega_2^2 \\
 S_4 & = 14 \Omega_4^2 \\
 R_1 & = 14 \Omega_5^2 + 14 \Omega_8^2 \log(C_\chi) + 56 (Z_1^{(T)})^2 +7 C_\chi^2 (B_1^{(T)})^2\log(C_\chi) \\
 R_2 & = 14 \Omega_6^2 + 14 \Omega_9^2  \log(C_\chi)+ 56 (Z_2^{(T)})^2+ 7 C_\chi^2 (B_2^{(T)})^2\log(C_\chi) \\
 R_3 & = 14 \Omega_7^2 + 56 (Z_3^{(T)})^2\\
 R_4 & = 14 \Omega_8^2 +  7 C_\chi^2 (B_1^{(T)})^2 \\
 R_5 & = 14 \Omega_9^2 +  7 C_\chi^2 (B_2^{(T)})^2\\
 T_1 & = 14 \Omega_8^2 \frac{D_\chi}{2(D_\chi + 1)} + 7 C_\chi^2 (B_1^{(T)})^2\frac{D_\chi}{2(D_\chi + 1)}\\
 T_2 &=  14 \Omega_9^2 \frac{D_\chi}{2(D_\chi + 1)} + 7 C_\chi^2 (B_2^{(T)})^2\frac{D_\chi}{2(D_\chi + 1)} ,
\end{aligned}
\end{equation}
and $\Omega_1,\ldots,\Omega_9$ are defined in (\ref{eq:defConstantsUniform}), and $B_1^{(T)}$ and $B_2^{(T)}$ are defined in (\ref{eq:B'}) and (\ref{eq:B''}).
\end{proof}

We now write a version of Theorem \ref{thm:MainInProb} (about the convergence error of MPNNs) with detailed constants, and prove it.

\begin{theorem}
\label{thm:unifExpValue}
Let $(\chi,d, \P) $ be a metric-measure space and
$W$ be a kernel  s.t.  Assumptions \ref{ass:graphon}.\ref{ass:graphon1}-\ref{ass:graphon12}. and Assumptions \ref{ass:graphon}.\ref{Ass:KernelDiagBounded} are satisfied.   Consider a graph $(G, \mathbf{f}) \sim (W, f)$ with $N$ nodes and corresponding graph features. Then,
for every $f:\chi \to \mathbb{R}^{F}$ with Lipschitz constant $L_f$, 
\[
\begin{aligned}
&  \E_{X_1, \ldots, X_N \sim \mu^N} \left[\sup_{\Theta \in \mathrm{Lip}_{L,B}} \left\| 
\Theta^P_G(\mathbf{f}) - 
\Theta^P_W(f) \right\|_\infty^2 \right] \\
& \leq 6\sqrt{\pi} \Bigg(\frac{S_1 + S_3 + (S_2 + S_4)\|f\|_\infty^2}{N} + \frac{R_1 + R_4 + (R_2 + R_5)\|f\|_\infty^2 +  R_3  L_f^2}{N^{\frac{1}{D_\chi +1}}} \\
& + \frac{\big(T_1 + T_2\|f\|_\infty^2\big) \log(N) }{N^{\frac{1}{D_\chi +1}} } \Bigg)+ \mathcal{O}\left(\exp(-N)N^{3T-\frac{3}{2}}\right),
\end{aligned}
\]
where the constants are defined in (\ref{eq:thmC10-constants}). 
\end{theorem}
\begin{proof}
For any $p >0$, we have with probability at least $1-2p$ for every $\Theta \in \mathrm{Lip}_{L,B}$, by Corollary \ref{cor:C10}, that
\[
\big\| \Theta_G^P(\mathbf{f}) - \Theta_W^P(f) \big\|_{\infty}^2  \leq  H_1 + H_2 \log(2/p) + H_3 \log^2(2/p)
\]
if (\ref{eq:largeN}) holds, where  
\[ 
\begin{aligned}
  &   H_1 =\frac{ S_1 + S_2 \|f\|^2_\infty}{N} + \frac{R_1 + R_2\|f\|_\infty^2 + R_3 L_f^2}{N^{\frac{1}{D_\chi + 1}}}  + \frac{T_1 + T_2 \|f\|_\infty^2}{N^{\frac{1}{D_\chi + 1}}} \log(N) ,
  \\
& H_2 = \frac{R_4 + R_5\|f\|_\infty^2}{N^{\frac{1}{D_\chi + 1}}}
\text{  and   }
 H_3 =\frac{S_3 + S_4 \|f\|^2_\infty}{N}.
\end{aligned}
\]
Further, for every $p \in (0,1/2)$, we consider $k > 0$ such that $ p = 2 \exp(-k^2) $. This means, if $p$ respectively $k$ satisfies (\ref{eq:largeN}), we have with probability at least $1- 4\exp(-k^2) $ for every $\Theta \in \mathrm{Lip}_{L,B}$,
\[
\big\| \Theta_G^P(\mathbf{f}) - \Theta_W^P(f) \big\|_{\infty}^2  \leq  H_1 + H_2k + H_3k^2.
\]

If $k$ does not satisfy (\ref{eq:largeN}), we get
\[
k > N_0 = D_1 + D_2 \sqrt{N},
\]
where $D_1 \in \mathbb{R}$ and $D_2 > 0$ are the matching constants in $(\ref{eq:largeN})$. 
By Lemma \ref{lemma:DeterministicMPNNBound} and Lemma \ref{lemma:RecRelNorm}, we get in this case 
\begin{equation}
    \label{eq:defq(N)}
\begin{aligned}
\big\| \Theta_G^P(\mathbf{f}) - \Theta_W^P(f) \big\|_\infty^2 & = 
 \left\| \frac{1}{N}\sum_{i=1}^{N}
  \Theta_G(\mathbf{f})_i - \int_\chi\Theta_W(f)(y) d\P(y)\right\|_\infty^2 \\
  & \leq  \frac{4}{N}\sum_{i=1}^{N}
  \|\Theta_G(\mathbf{f})_i\|_\infty^2 + 2 \Big\| \int_\chi\Phi_W(f)(y) d\P(y)\Big\|_\infty^2 \\
& \leq  \frac{4}{N}
  \| \Theta_G(\mathbf{f})\|_{2;\infty}^2 + 2\|\Theta_W(f) \|_\infty^2 \\
  & \leq \frac{4}{N} N^{2T}( A' + A'' \|f\|_\infty^2) +
2(B_1^{(T)} + \|f\|_\infty B_2^{(T)})^2 =: q(N),
\end{aligned}
\end{equation}
where the first inequality holds by applying the triangle inequality and Cauchy-Schwarz.

We then calculate the expected value by partitioning the integral over the event space into the following sum. 
\begin{equation}
\label{eq:C18-1}
\begin{aligned}
     &\E_{X_1,\ldots,X_N \sim \mu^N} \left[\sup_{\Theta \in \mathrm{Lip}_{L,B}}\big\| \Theta_G^P(\mathbf{f}) - \Theta_W^P(f) \big\|_\infty^2 \right]  \\
  \leq  &   \sum_{k = 0}^{N_0} \mathbb{P}\big(H_1 + H_2k + H_3k^2 \leq\sup_{\Theta \in \mathrm{Lip}_{L,B}} \big\| \Theta_G^P(\mathbf{f}) - \Theta_W^P(f) \big\|_\infty^2 
< H_1  + H_2(k+1) +H_3(k+1)^2\big)
\\
& \cdot \big(  H_1  + H_2(k+1) +H_3(k+1)^2 \big) \\
+ &\sum_{k = N_0}^{\infty} \mathbb{P}\big(H_1 + H_2k + H_3k^2 \leq \sup_{\Theta \in \mathrm{Lip}_{L,B}}\big\| \Theta_G^P(\mathbf{f}) - \Theta_W^P(f) \big\|_\infty^2 
< H_1  + H_2(k+1) +H_3(k+1)^2\big)  \\
& \cdot q(N) 
\\
\end{aligned}
\end{equation}
To bound  the second sum, note that it is a finite sum, since $\big\| \Theta_G^P(\mathbf{f}) - \Theta_W^P(f) \big\|_\infty^2$ is bounded by $q(N)$, which is defined in (\ref{eq:defq(N)}). 
The summands are zero if $H_1 + H_2k + H_3k^2 > q(N)$, which holds for $k > \sqrt{\frac{q(N)}{H_3}}$. Hence, we calculate with the right-hand-side of (\ref{eq:C18-1}) by
\begin{equation}
\begin{aligned}
& \leq 2 \sum_{k=0}^{N_0} 2 \exp(-k^2) \cdot \big(  H_1  + H_2(k+1) +H_3(k+1)^2 \big) 
\\
& + \sum_{k=N_0}^{\left\lceil \sqrt{\frac{q(N)}{H_3}} \right\rceil}4 \exp(-N_0^2) \cdot q(N)
\\
& \leq 2 \int_0^{\infty} 2 \exp(-k^2) \cdot \big(  H_1  + H_2(k+1) +H_3(k+1)^2 \big)
\\
& + 4 \exp(-N_0^2) q(N)\left\lceil \sqrt{\frac{q(N)}{H_3}}\right\rceil
,
\end{aligned}
\end{equation}
where 
$q(N) = O(N^{2T-1})$ is a polynomial in $N$ as defined above.
The first term on the right-hand-side is bounded by using 
\[
\int_0^\infty 2(t+1)^2e^{-t^2} dt, \;\int_0^\infty 2(t+1) e^{-t^2} dt , \; \int_0^\infty  2e^{-t^2} dt \leq 3\sqrt{\pi}.
\]
For the second term we remember that $N_0 = D_1 + D_2\sqrt{N}$. Hence,
\begin{align*}
&\E_{X_1,\ldots,X_N \sim \mu^N} \left[\sup_{\Theta \in \mathrm{Lip}_{L,B}}\big\| \Theta_G^P(\mathbf{f}) - \Theta_W^P(f) \big\|_\infty^2 \right]\\
&\leq 6 \sqrt{\pi} (H_1 + H_2 + H_3) + \mathcal{O}(\exp(-N)N^{3T-\frac{3}{2}}).
\end{align*} 
\end{proof}

\section{Generalization Analysis}
\label{AppendixGenError}
In this section, we provide details on our generalization analysis of MPNNs. In Subsection \ref{subsec:ProbSpaceDataset}, we detail the data distribution from the graph classification task, which was introduced in Subsection \ref{subsec:GenDataDistr}.  In Subsection \ref{subsec:GenErrorProof}, we provide a detailed version and a proof for Theorem \ref{thm:mainGenError} (about the generalization bound of MPNNs). This is followed by a derivation of the asymptotics  of our generalization bound in Subsection \ref{subsec:disGenBound} and a comparison of the asymptotics of our generalization bound with other related generalization bounds in Subsection \ref{subsec:comp}. 

\subsection{The Probability Space of the Dataset}
\label{subsec:ProbSpaceDataset}

Recall that the measure on the space $\chi^j$ is denoted by $\mu^j$.
Given a class $j$ and $N\in \mathbb{N}$, the space of graphs with $N$ nodes from class $j$ is defined to be $(\chi^j)^N$.
The measure on $(\chi^j)^N$ is defined to be $(\mu^j)^N$, namely, the direct product of the measure $\mu^j$ with itself $N$ times. The space $\mathcal{G}_j$ of graphs of any size, which are sampled from class $j$, is defined to be 
\[\mathcal{G}_j := \bigcup_{n\in\mathbb{N}}(\chi^j)^N.\]
The measure on $\mathcal{G}_j$ is denoted by $\mu_{\mathcal{G}_j}$, and defined as follows.
\begin{definition}
\label{def:measure_1class}
 A set of graphs $S\subset \mathcal{G}_j$ is called measurable, if for each $N\in\mathbb{N}$, the restriction 
 \[S_N:=\{G\in S\ |\  G{\rm \ has\ } N {\rm \ nodes}\} \subset (\chi^j)^N\]
 is measurable with respect to $(\mu^j)^N$. The measure of a measurable set $S\subset \mathcal{G}_j$ is defined to be
\[\mu_{\mathcal{G}_j}(S) := \sum_{N=1}^{\infty}\nu(N)(\mu^j)^N(S_N),\]
where $\nu(N)$ is the probability of choosing a graph with $N$ nodes (see Subsection \ref{subsec:GenDataDistr}).
\end{definition}
The space of graphs of either of the classes $j=1,\ldots, \Gamma$ is defined to be
\[\mathcal{G}:=\bigcup_{j=1}^{\Gamma}\mathcal{G}_j.\]
The measure on $\mathcal{G}$ is denoted by $\mu_{\mathcal{G}}$, and defined as follows.
\begin{definition}
\label{def:data_dist}
 A set of graphs $S\subset \mathcal{G}$ is called measurable, if for each $j=1,\ldots,\Gamma$, the restriction 
 \[S_j:=\{G\in S\ |\  G{\rm \ is\ sampled\ from\ class\ }j\} \subset \mathcal{G}_j\]
 is measurable with respect to $\mu_{\mathcal{G}_j}$. The measure of a measurable $S\subset \mathcal{G}$ is defined to be
\[\mu_{\mathcal{G}}(S) = \sum_{j=1}^{\Gamma}\gamma_j\mu_{\mathcal{G}_j}(S_j),\]
where $\gamma_j$ is the probability of choosing class $j$ (see Subsection \ref{subsec:GenDataDistr}).   
\end{definition}
With these notations, the space of graph datasets of size $m$ is defined to be $\mathcal{G}^m$ with the direct product measure $\mu_{\mathcal{G}}^m$. We denote a random graph sampled from the space of graphs by $(G,\mathbf{f},y)\sim \mu_{\mathcal{G}}$. 
 Here, $y$ denotes the class of the graph, namely, the value $y$ such that  $(G,\mathbf{f})$ is sampled from class $y$.

The next lemma is direct, and given without proof.

\begin{lemma}
The spaces $\{\mathcal{G},\mu_{\mathcal{G}}\}$ and $\{\mathcal{G}_j,\mu_{\mathcal{G}_j}\}$, $j=1,\ldots, \Gamma$, are measure spaces, and $\mu_{\mathcal{G}}$ and $\mu_{\mathcal{G}_j}$, $j=1,\ldots, \Gamma$, are probability measures.
\end{lemma}

Let us next derive a re-parameterization  of the space of datasets $\mathcal{G}^m$. Given $\mathcal{T}\sim \mu_{\mathcal{G}}^m$, 
for every $j=1, \ldots, \Gamma$, let $m_j$ denote the number of graphs in $\mathcal{T}$ that fall into the class $j$. Note that $\mathbf{m}=(m_1,\ldots,m_{\Gamma})$ has a multinomial distribution with parameters $m$ and $\boldsymbol{\gamma}=(\gamma_1, \ldots, \gamma_\Gamma)$, which we denote by $\mathrm{MN}_{m,\boldsymbol{\gamma}}$. Conditioning the choice of the graphs on the choice of $\mathbf{m}$, we can formulate the data sampling procedure as first sampling $\mathbf{m}$ from $\mathrm{MN}_{m,\boldsymbol{\gamma}}$, and then sampling $\{G_i^j, \mathbf{f}_i^j\}_{i=1}^{m_j} \sim (\mu_{\mathcal{G}_j})^{m_j}$, 
 $j=1\ldots,\Gamma$ independently of each other. Now, the measure $\mu_{\mathcal{G}}^m$ of the space of datasets 
 can be parameterized as follows. 
 
 First, we define the following measure space. Let $\mathbf{m}=(m_1,\ldots,m_{\Gamma})$ satisfy $\sum_{j=1}^{\Gamma}m_j=m$. We define the space \[\mathcal{G}^{\mathbf{m}}:= \prod_{j=1}^{\Gamma} \mathcal{G}_j^{m_j},\]
 with the measure
\begin{equation}
  \mu_{\mathcal{G}^{\mathbf{m}}}:= \prod_{j=1}^{\Gamma} \mu_{\mathcal{G}_j}^{m_j}.  \label{eq:mras_data_2}
\end{equation}
 The space $\mathcal{G}^{\mathbf{m}}$ is interpreted as the space of datasets with exactly $m_j$ samples in each class $j$.

We can now show the following parametrization of the measure space $\mathcal{G}^m$ of datasets of size $m$. The lemma is direct, and given without proof.
 
 \begin{lemma}
 \label{lemm:cond_data}
  A  set of datasets $S\subset {\mathcal{G}}^m$  is measurable, if and only if for every $\mathbf{m}=(m_1,\ldots,m_{\Gamma})$ with $\sum_{j=1}^{\Gamma}m_j=m$, the restriction 
  \[S_{\mathbf{m}} = \{ \mathcal{T}\in S\ |\   \forall 1\leq j\leq \Gamma,\  \ \mathcal{T} {\rm \ contains\ } m_j {\rm \ graphs\ from\ class\ } j \} \subset \mathcal{G}^{\mathbf{m}}\]
 is measurable with respect to $\mu_{\mathcal{G}^{\mathbf{m}}}$.
 
 With these notations, $\mu_{\mathrm{G}}^m$ is decomposed as follows: $\mathcal{G}^m = \bigcup_{\mathbf{m}} \mathcal{G}^{\mathbf{m}}$, and for every measurable set of datasets $S\subset \mathcal{G}^m$,
\[\mu_{\mathrm{G}}^m(S)
=
\sum_{\mathbf{m}:\ m_1+\ldots+m_{\Gamma}=m}\mu_{\mathrm{MN}_{m,\boldsymbol{\gamma}}}(\mathbf{m})\sum_{j=1}^{\Gamma}\sum_{i=1}^{m_j}\mu_{{\rm G}_j}(S_{\mathbf{m}}).\] 
 \end{lemma}
\subsection{Proof of Theorem \ref{thm:mainGenError}}
\label{subsec:GenErrorProof}


The following corollary computes the expected robustness of a random graph, of arbitrary size, sampled from $\mu_{\mathcal{G}_j}$, and is a direct result of Definition \ref{def:measure_1class} and Theorem \ref{thm:unifExpValue}.
\begin{corollary}
\label{cor:StabVarGraphSize}
Let  $\{(W^j, f^j)\}$ be a RGM on the corresponding metric-measure space $(\chi^j, d^j, \mu^j)$  that satisfies Assumptions \ref{ass:graphon}.\ref{ass:graphon1}.-\ref{ass:graphon12}. and \ref{ass:graphon}.\ref{Ass:KernelDiagBounded}. Let $\mu_{\mathcal{G}_j}$ be the distribution from Definition \ref{def:measure_1class}.  Then, 
\[
\begin{aligned}
&  \E_{(G^j, \mathbf{f}^j) \sim \mu_{\mathcal{G}_j}} \big[ \sup_{\Theta \in \mathrm{Lip}_{L,B}} \big\| 
\Theta^P_{G^j}(\mathbf{f}^j) - 
\Theta^P_{W^j}(f^j) \big\|_\infty^2 \big] \\
& \leq 6\sqrt{\pi} \Bigg( \big(S_1^{(j)} + S_3^{(j)} + (S_2^{(j)} + S_4^{(j)})\|f^j\|_\infty^2 \big) \E_{N \sim \nu}\left[ N^{-1} \right] \\
& + \big(R_1^{(j)} + R_4^{(j)} + (R_2^{(j)} + R_5^{(j)})\|f^j\|_\infty^2 + R_3^{(j)} L_{f^j}^2\big) \E_{N \sim \nu}\left[ N^{-\frac{1}{D_{\chi^j} + 1}} \right] \\
& +  \big(T_1^{(j)} + T_2^{(j)}\|f^j\|_\infty^2\big)  \E_{N \sim \nu}\left[ \log(N) N^{-\frac{1}{D_{\chi^j} +1}}   \right] \Bigg)+ \mathcal{O}\left( \E_{N \sim \nu} \left[\exp(-N)N^{3T-\frac{3}{2}} \right]\right),
\end{aligned}
\]
where $S_l^{(j)},R_l^{(j)},T_l^{(j)}$ are the according constants from Theorem \ref{thm:unifExpValue} for each class $j$ and are defined in (\ref{eq:thmC10-constants}).
\end{corollary}

When sampling a dataset $\mathcal{T}\sim p^m$, the numbers of samples $m_j$ that fall in class $\chi^j$, for $j=1,\ldots,\Gamma$, are distributed multinomially. 
 We hence recall a concentration of measure result for multinomial variables.


\begin{lemma}[Proposition A.6 in \cite{Vaart}, Bretagnolle-Huber-Carol inequality]
\label{lemma:BHCineq}
If the random vector $(m_1, \ldots m_\Gamma)$ is multinomially distributed with parameters $m$ and $\gamma_1, \ldots, \gamma_\Gamma$, then 
\[
\mathbb{P}\left( \sum_{i=1}^\Gamma| m_i - m\gamma_i | \geq 2 \sqrt{m} \lambda \right)
\leq 2^\Gamma \exp(-2\lambda^2)
\]
for any $\lambda > 0$. 
\end{lemma}

We now write a version of Theorem \ref{thm:mainGenError} (about the generalization error of MPNNs) with detailed constants, and prove it.

\begin{theorem}
\label{thm:reformulatedTheorem2}
Let  $\{(W^j, f^j)\}_{j=1}^\Gamma$ be a collection of RGMs on corresponding metric-measure spaces $\{(\chi^j, d^j, \mu^j)\}_{j=1}^\Gamma$ such that each one satisfies Assumptions \ref{ass:graphon}.\ref{ass:graphon1}.-\ref{ass:graphon12}. and \ref{ass:graphon}.\ref{Ass:KernelDiagBounded}. Let $\mu_{\mathcal{G}}$ denote the data distribution from Definition \ref{def:data_dist}. 
Let $\mathcal{T}=\big((G_1, \mathbf{f}_1,y_1), \ldots, (G_m, \mathbf{f}_m,y_m)\big)\sim \mu_\mathcal{G}^m$ be a dataset of graphs.  Then, 
\[
\begin{aligned}
  &   \E_{\mathcal{T} \sim \mu_{\mathcal{G}}^m} \left[\sup_{\Theta \in \mathrm{Lip}_{L,B}}  \left( \frac{1}{m} \sum_{i=1}^m  \mathcal{L}(\Theta_{G_i}^P(\mathbf{f}_i), y_i) - \E_{(G, \mathbf{f},y)\sim \mu_{\mathcal{G}}}\left[ \mathcal{L}(\Theta_{G}^P(\mathbf{f}), y) \right] \right)^2 \right] \\
  & \leq 2^\Gamma \frac{8 \|\mathcal{L}\|_\infty^2}{m} \pi + \frac{6\sqrt{\pi}}{ m} 2^\Gamma  \Gamma  \sum_{j=1}^\Gamma \gamma_j L_\mathcal{L}^2 \Bigg(  \sqrt{\pi} \Big( \big(S_1^{(j)} + S_3^{(j)} + (S_2^{(j)} + S_4^{(j)})\|f^j\|_\infty^2 \big) \E_{N \sim \nu}\left[ N^{-1} \right] \\
& + \big(R_1^{(j)} + R_4^{(j)} + (R_2^{(j)} + R_5^{(j)})\|f^j\|_\infty^2 + R_3^{(j)} L_{f^j}^2\big) \E_{N \sim \nu}\left[ N^{-\frac{1}{D_{\chi^j} + 1}} \right] \\
& +  \big(T_1^{(j)} + T_2^{(j)}\|f^j\|_\infty^2\big)  \E_{N \sim \nu}\left[  \log(N) N^{-\frac{1}{D_{\chi^j} +1}} \right] \Big)+ \mathcal{O}\left( \E_{N \sim \nu} \left[\exp(-N)N^{3T-\frac{3}{2}} \right]\right)\Bigg),
\end{aligned}
\]
where $S_l^{(j)},R_l^{(j)},T_l^{(j)}$ are the according constants from Theorem \ref{thm:unifExpValue} for each class $j$ and are defined in (\ref{eq:thmC10-constants}). 
\end{theorem}
\begin{proof}


Given $\mathbf{m}=(m_1,\ldots,m_{\Gamma})$ with $\sum_{j=1}^{\Gamma}m_j=m$, recall that $\mathcal{G}^{\mathbf{m}}$ is the space of datasets with fixed number of samples $m_j$ from each class $j=1,\ldots,\Gamma$.  
The probability measure on $\mathcal{G}^{\mathbf{m}}$ is given by $\mu_{\mathcal{G}^{\mathbf{m}}}$ (see (\ref{eq:mras_data_2})).
Similarly to the notation of Lemma \ref{lemm:cond_data}, we denote the conditional choice of the dataset on the choice of $\mathbf{m}$ by 
\[\mathcal{T}_{\mathbf{m}}:=\big\{\{G_i^j, \mathbf{f}_i^j\}_{i=1}^{m_j}\big\}_{j=1}^{\Gamma} \sim \mu_{\mathcal{G}^{\mathbf{m}}}.\]
Given $k\in\mathbb{Z}$, denote by $\mathcal{M}_k$ the set of all $\mathbf{m}=(m_1,\ldots, m_{\Gamma})\in \mathbb{N}_0^{\Gamma}$ with $\sum_{j=1}^{\Gamma}m_j=m$, such that 
$2\sqrt{m} k \leq \sum_{j=1}^\Gamma |m_j - m\gamma_j| <2\sqrt{m} (k+1)$.
Using these notations, we decompose the expected generalization error as follows.  
\begin{equation}
    \label{eq:thmC13-1}
    \begin{aligned}
    & \E_{\mathcal{T} \sim \mu_{\mathcal{G}}^m} \left[ \sup_{\Theta \in \mathrm{Lip}_{L,B}} \left( \frac{1}{m} \sum_{i=1}^m  \mathcal{L}(\Theta_{G_i}^P(\mathbf{f}_i), y_i) - \E_{(G, \mathbf{f},y)\sim \mu_{\mathcal{G}}}\left[ \mathcal{L}(\Theta_{G}^P(\mathbf{f}), y) \right] \right)^2 \right] \\
  &  = \E_{\mathcal{T} \sim \mu_{\mathcal{G}}^m} \left[\sup_{\Theta \in \mathrm{Lip}_{L,B}}  \left( \frac{1}{m} \sum_{j=1}^\Gamma  \sum_{i=1}^{m_j} \mathcal{L}(\Theta_{G_i^j}^P(\mathbf{f}_i^j), y_j) - \E_{(G, \mathbf{f},y)\sim \mu_{\mathcal{G}}}\left[ \mathcal{L}(\Theta_{G}^P(\mathbf{f}), y) \right] \right)^2 \right]\\
 & = \E_{\mathcal{T}  \sim  \mu_{\mathcal{G}}^m} \left[\sup_{\Theta \in \mathrm{Lip}_{L,B}}\left(\sum_{j=1}^\Gamma \left( \frac{1}{m}\sum_{i=1}^{m_j} \mathcal{L}(\Theta_{G_i^j}^P(\mathbf{f}_i^j), y_j) - \gamma_j\E_{(G^j, \mathbf{f}^j )\sim \mu_{\mathcal{G}_j}}\left[ \mathcal{L}(\Theta_{G^j}^P(\mathbf{f}^j), y_j) \right] \right) \right)^2 \right] \\
& \leq \sum_{k} \mathbb{P}\big(
\mathbf{m}\in \mathcal{M}_k
\big) \times 
\sup_{\mathbf{m}\in \mathcal{M}_k} \E_{\mathcal{T}_{\mathbf{m}} \sim \mu_{\mathcal{G}^{\mathbf{m}}}} \left[\sup_{\Theta \in \mathrm{Lip}_{L,B}}\left(\sum_{j=1}^\Gamma \left( \frac{1}{m}\sum_{i=1}^{m_j} \mathcal{L}(\Theta_{G_i^j}^P(\mathbf{f}_i^j), y_j) \right.\right.\right.\\  & \quad \ \ \quad \ \ \quad \ \ \quad \ \ \quad \ \ \quad \ \ \quad \ \ \quad \ \ \quad \ \ \quad \ \ \quad \ \  \left.\left.\left. - \frac{1}{m} \sum_{i=1}^{m \gamma_j}\E_{(G^j, \mathbf{f}^j )\sim \mu_{\mathcal{G}_j}}\left[ \mathcal{L}(\Theta_{G^j}^P(\mathbf{f}^j), y_j) \right] \right) \right)^2 \right] 
 \end{aligned}
\end{equation}


We bound the last term of (\ref{eq:thmC13-1}) as follows.
For $j=1, \ldots, \Gamma$, if $m_j \leq m\gamma_j$, we add "ghost samples", i.e., we add additional i.i.d. sampled graphs $(G_{m_j}^j, \mathbf{f}_{m_j}^j), \ldots, (G_{m \gamma_j}^j, \mathbf{f}_{m \gamma_j}^j) \sim (W^j, f^j) $. By convention, for any two $l,q\in\mathbb{N}_0$ with $l<q$, we define \[\sum_{j=q}^l c_j = - \sum_{j=l}^q c_j\]
for any sequence $c_j$ of reals, and define $\sum_{j=q}^q c_j=0$.
With these notations, we have 
\begin{equation}
\label{eq:lastthm-2}
\begin{aligned}
& \E_{\mathcal{T}_{\mathbf{m}} \sim \mu_{\mathcal{G}^{\mathbf{m}}}} \left[\sup_{\Theta \in \mathrm{Lip}_{L,B}}\left(\sum_{j=1}^\Gamma \left( \frac{1}{m}\sum_{i=1}^{m_j} \mathcal{L}(\Theta_{G_i^j}^P(\mathbf{f}_i^j),y_j) \right.\right.\right.\\  & \quad \ \ \quad \ \     \quad \ \  \left.\left.\left. - \frac{1}{m} \sum_{i=1}^{m \gamma_j}\E_{(G^j, \mathbf{f}^j )\sim \mu_{\mathcal{G}_j}}\left[ \mathcal{L}(\Theta_{G^j}^P(\mathbf{f}^j), y_j) \right] \right) \right)^2 \right] 
\\
& =  \E_{\mathcal{T}_{\mathbf{m}} \sim  \mu_{\mathcal{G}^{\mathbf{m}}}} \Bigg[\sup_{\Theta \in \mathrm{Lip}_{L,B}}\Bigg(\sum_{j=1}^\Gamma \Bigg( \frac{1}{m}\sum_{i=1}^{m\gamma_j} \mathcal{L}(\Theta_{G_i^j}^P(\mathbf{f}_i^j),y_j) +  \frac{1}{m}\sum_{i=m\gamma_j}^{m_j} \mathcal{L}(\Theta_{G_i^j}^P(\mathbf{f}_i^j), y_j)
 \\ & - \frac{1}{m} \sum_{i=1}^{m \gamma_j}\E_{(G^j, \mathbf{f}^j )\sim \mu_{\mathcal{G}_j}}\big[ \mathcal{L}(\Theta_{G^j}^P(\mathbf{f}^j),y_j) \big] \Bigg) \Bigg)^2 \Bigg] 
\\
& \leq \E_{\mathcal{T}_{\mathbf{m}} \sim  \mu_{\mathcal{G}^{\mathbf{m}}}} \left[\sup_{\Theta \in \mathrm{Lip}_{L,B}}2 \left(\sum_{j=1}^\Gamma \left( \frac{1}{m}\sum_{i=1}^{m \gamma_j} \mathcal{L}(\Theta_{G_i^j}^P(\mathbf{f}_i^j),y_j) \right.\right.\right.\\  & \quad \ \ \quad \ \     \quad \ \ \quad   \left.\left.\left. - \frac{1}{m} \sum_{i=1}^{m \gamma_j}\E_{(G^j, \mathbf{f}^j )\sim \mu_{\mathcal{G}_j}}\left[ \mathcal{L}(\Theta_{G^j}^P(\mathbf{f}^j), y_j) \right] \right) \right)^2 \right]
\\ & + \E_{\mathcal{T}_{\mathbf{m}} \sim  \mu_{\mathcal{G}^{\mathbf{m}}}} \left[2
\left(\sum_{j=1}^\Gamma \left(
\frac{1}{m} |m\gamma_j - m_j| \|\mathcal{L}\|_\infty
\right)\right)^2
\right].
\end{aligned}
\end{equation}
Let us first bound the last term of the above bound. Since any $\mathbf{m\in \mathcal{M}_k}$ satisfies $\sum_{j=1}^\Gamma |m_j - m\gamma_j| <2\sqrt{m} (k+1)$,  we have
\[
\begin{aligned}
  \E_{\mathcal{T}_{\mathbf{m}} \sim \mu_{\mathcal{G}^{\mathbf{m}}}} \left[2
\left(\sum_{j=1}^\Gamma \left(
\frac{1}{m} |m\gamma_j - m_j| \|\mathcal{L}\|_\infty
\right)\right)^2
\right] 
& \leq    \frac{2}{m^2} \|\mathcal{L}\|_\infty^2 \left(\sum_{j=1}^\Gamma |m\gamma_j -m_j | \right)^2
\\
& \leq    \frac{2}{m^2} \|\mathcal{L}\|_\infty^2 4 m (k+1)^2 
  = \frac{8 \|\mathcal{L}\|_\infty^2}{m} (k+1)^2.
\end{aligned}
\]
Hence, by Lemma \ref{lemma:BHCineq}, 
\[
\begin{aligned}
&  \sum_{k} \mathbb{P}\big(
\mathbf{m}\in \mathcal{M}_k
\big) \times
\sup_{\mathbf{m}\in \mathcal{M}_k} \E_{\mathcal{T}_{\mathbf{m}} \sim \mu_{\mathcal{G}^{\mathbf{m}}}} \left[2
\left(\sum_{j=1}^\Gamma \left(
\frac{1}{m} |m\gamma_j - m_j| \|\mathcal{L}\|_\infty
\right)\right)^2
\right] \\
& \leq  \sum_{k} \mathbb{P}\big(
\mathbf{m}\in \mathcal{M}_k
\big) \times \frac{8 \|\mathcal{L}\|_\infty^2}{m} (k+1)^2 \\
&  \leq \sum_{k} 2^\Gamma \exp(-2 k^2)  \frac{8 \|\mathcal{L}\|_\infty^2}{m} (k+1)^2 \\
&  \leq \int_0^\infty2^\Gamma \exp(-2 k^2)  \frac{8 \|\mathcal{L}\|_\infty^2}{m} (k+1)^2 dk\\
& =  2^\Gamma  \frac{8 \|\mathcal{L}\|_\infty^2}{m} \int_0^\infty \exp(-2 k^2)  (k+1)^2  dk \\ &\leq   2^\Gamma \frac{8 \|\mathcal{L}\|_\infty^2}{m} \pi.
\end{aligned}
\]
To bound the first term of the right-hand-side of (\ref{eq:lastthm-2}), we have 
\[
\begin{aligned}
 & \E_{\mathcal{T}_{\mathbf{m}} \sim \mu_{\mathcal{G}^{\mathbf{m}}}} \left[\sup_{\Theta \in \mathrm{Lip}_{L,B}}\left(\sum_{j=1}^\Gamma \left( \frac{1}{m}\sum_{i=1}^{m \gamma_j} \mathcal{L}(\Theta_{G_i^j}^P(\mathbf{f}_i^j),y_j)  \right.\right.\right.\\  & \quad \ \ \quad \ \     \quad \ \  \left.\left.\left. - \frac{1}{m} \sum_{i=1}^{m \gamma_j}\E_{(G^j, \mathbf{f}^j )\sim \mu_{\mathcal{G}_j}}\left[\sup_{\Theta \in \mathrm{Lip}_{L,B}} \mathcal{L}(\Theta_{G^j}^P(\mathbf{f}^j), y_j) \right] \right) \right)^2 \right]\\
\leq &  \Gamma \sum_{j=1}^\Gamma\E_{\mathcal{T}_{\mathbf{m}} \sim \mu_{\mathcal{G}^{\mathbf{m}}}}\left[\sup_{\Theta \in \mathrm{Lip}_{L,B}} \left( \frac{1}{m}\sum_{i=1}^{m\gamma_j} \mathcal{L}(\Theta_{G_i^j}^P(\mathbf{f}^j_i),y_j)  \right.\right. \\  & \quad \ \ \quad \ \ \quad \ \   \quad \ \ \quad  \left.\left.  - \frac{1}{m}\sum_{i=1}^{m \gamma_j}\E_{(G^j, \mathbf{f}^j ) \sim \mu_{\mathcal{G}_j}}\left[\sup_{\Theta \in \mathrm{Lip}_{L,B}} \mathcal{L}(\Theta_{G^j}^P(\mathbf{f}^j), y_j) \right] \right)^2 \right] \\
= & \Gamma\sum_{j=1}^\Gamma \Var_{(G^j, \mathbf{f}^j) \sim \mu_{\mathcal{G}_j}}\left[\sup_{\Theta \in \mathrm{Lip}_{L,B}} \frac{1}{m} \sum_{i=1}^{\gamma_j\cdot m} \mathcal{L}(\Theta_{G^j}^P(\mathbf{f}^j),y_j) \right]  \\
= & \Gamma\sum_{j=1}^\Gamma \frac{\gamma_j}{m} \Var_{(G^j, \mathbf{f}^j) \sim \mu_{\mathcal{G}_j}}\left[\sup_{\Theta \in \mathrm{Lip}_{L,B}} \mathcal{L}(\Theta_{G^j}^P(\mathbf{f}^j),y_j) \right]
\\
\leq & \Gamma\sum_{j=1}^\Gamma \frac{\gamma_j}{m} \E_{(G^j, \mathbf{f}^j) \sim \mu_{\mathcal{G}_j}}\left[\sup_{\Theta \in \mathrm{Lip}_{L,B}} \left| \mathcal{L}(\Theta_{G^j}^P(\mathbf{f}^j),y_j) - \mathcal{L}(\Theta_{W^j}^P (f^j),y_j)\right|^2 \right]\\
\leq & \Gamma\sum_{j=1}^\Gamma \frac{\gamma_j}{m} \E_{(G^j, \mathbf{f}^j) \sim \mu_{\mathcal{G}_j}}\left[\sup_{\Theta \in \mathrm{Lip}_{L,B}} L_\mathcal{L}^2\| \Theta_{G^j}^P(\mathbf{f}^j) - \Theta_{W^j}^P (f^j)\|_\infty^2 \right].
\end{aligned}
\]
We now apply Corollary \ref{cor:StabVarGraphSize} to get 
\[
\begin{aligned}
\leq \Gamma & \sum_{j=1}^\Gamma \frac{\gamma_j}{m} L_\mathcal{L}^2 \Bigg( 6 \sqrt{\pi} \Bigg( \big(S_1 + S_3 + (S_2 + S_4)\|f^j\|_\infty^2 \big) \E_{N \sim \nu}\left[ N^{-1} \right] \\
& + \big(R_1 + R_4 + (R_2 + R_5)\|f^j\|_\infty^2 + R_3 L_{f^j}^2\big) \E_{N \sim \nu}\left[ N^{-\frac{1}{D_{\chi^j} + 1}} \right] \\
& +  \big(T_1 + T_2\|f^j\|_\infty^2\big)  \E_{N \sim \nu}\left[ \frac{\log(N)}{N^{\frac{1}{D_\chi^j +1}} } \right] \Bigg)+ \mathcal{O}\left( \E_{N \sim \nu} \left[\exp(-N)N^{3T-\frac{3}{2}} \right]\right) \Bigg).
\end{aligned}
\]

Hence, by Lemma \ref{lemma:BHCineq}, 
\[
\begin{aligned}
& \sum_{k} \mathbb{P}\big(
\mathbf{m}\in \mathcal{M}_k
\big) \times
\sup_{\mathbf{m}\in \mathcal{M}_k} \E_{\mathcal{T}_{\mathbf{m}} \sim \mu_{\mathcal{G}^{\mathbf{m}}}} \left[\sup_{\Theta \in \mathrm{Lip}_{L,B}}\left(\sum_{j=1}^\Gamma \left( \frac{1}{m}\sum_{i=1}^{m \gamma_j} \mathcal{L}(\Theta_{G_i^j}^P(\mathbf{f}_i^j),y_j)   \right.\right.\right.\\  & \quad \ \ \quad \ \ \quad \ \ \quad \ \ \quad \ \ \quad \ \ \quad \ \ \quad \ \ \quad \ \ \quad \ \ \quad   \left.\left.\left. - \frac{1}{m} \sum_{i=1}^{m \gamma_j}\E_{(G^j, \mathbf{f}^j )\sim \mu_{\mathcal{G}_j}}\left[ \mathcal{L}(\Theta_{G^j}^P(\mathbf{f}^j),y_j) \right] \right) \right)^2 \right] \\
& \leq \frac{\sqrt{\pi}}{2} 2^\Gamma \sum_{j=1}^\Gamma \frac{\gamma_j}{m} \E_{\mathcal{T}_{\mathbf{m}} \sim \mu_{\mathcal{G}^{\mathbf{m}}}} \left[\sup_{\Theta \in \mathrm{Lip}_{L,B}}\left(\sum_{j=1}^\Gamma \left( \frac{1}{m}\sum_{i=1}^{m \gamma_j} \mathcal{L}(\Theta_{G_i^j}^P(\mathbf{f}_i^j),y_j)   \right.\right.\right.\\  & \quad \ \ \quad \ \ \quad \ \ \quad \ \ \quad \ \ \quad \ \ \quad \ \ \quad \ \   \left.\left.\left. - \frac{1}{m} \sum_{i=1}^{m \gamma_j}\E_{(G^j, \mathbf{f}^j )\sim \mu_{\mathcal{G}_j}}\left[ \mathcal{L}(\Theta_{G^j}^P(\mathbf{f}^j),y_j) \right] \right) \right)^2 \right]\\
& \leq  \frac{\sqrt{\pi}}{2} 2^\Gamma  \Gamma  \sum_{j=1}^\Gamma \frac{\gamma_j}{m} L_\mathcal{L}^2 \Bigg( 6 \sqrt{\pi} \Bigg( \big(S_1^{(j)} + S_3^{(j)} + (S_2^{(j)} + S_4^{(j)})\|f^j\|_\infty^2 \big) \E_{N \sim \nu}\left[ N^{-1} \right] \\
& + \big(R_1^{(j)} + R_4^{(j)} + (R_2^{(j)} + R_5^{(j)})\|f^j\|_\infty^2 + R_3^{(j)} L_{f^j}^2\big) \E_{N \sim \nu}\left[ N^{-\frac{1}{D_{\chi^j} + 1}} \right] \\
& +  \big(T_1^{(j)} + T_2^{(j)}\|f^j\|_\infty^2\big)  \E_{N \sim \nu}\left[ \frac{\log(N)}{N^{\frac{1}{D_\chi^j +1}} } \right] \Bigg)+ \mathcal{O}\left( \E_{N \sim \nu} \left[\exp(-N)N^{3T-\frac{3}{2}} \right]\right)\Bigg) ,
\end{aligned}
\]
where $S_l^{(j)},R_l^{(j)},T_l^{(j)}$ are the according constants from Theorem \ref{thm:unifExpValue} for each class $j$ and are defined in (\ref{eq:thmC10-constants}).
All in all, we get 
\[
\begin{aligned}
  &   \E_{\mathcal{T} \sim \mu_{\mathcal{G}}^m} \left[ \sup_{\Theta \in \mathrm{Lip}_{L,B}} \left( \frac{1}{m} \sum_{j=1}^\Gamma  \sum_{i=1}^{m_j} \mathcal{L}(\Theta_{G_i^j}^P(\mathbf{f}_i^j),y_j) - \E_{(G, \mathbf{f} , y)\sim \mu_{\mathcal{G}}}\left[ \mathcal{L}(\Theta_{G}^P(\mathbf{f}), y) \right] \right)^2 \right] \\
  & \leq 2^\Gamma \frac{8 \|\mathcal{L}\|_\infty^2}{m} \pi + \frac{\sqrt{\pi}}{ m} 2^\Gamma  \Gamma  \sum_{j=1}^\Gamma \gamma_j L_\mathcal{L}^2 \Bigg(  6\sqrt{\pi} \Big( \big(S_1^{(j)} + S_3^{(j)} + (S_2^{(j)} + S_4^{(j)})\|f^j\|_\infty^2 \big) \E_{N \sim \nu}\left[ N^{-1} \right] \\
& + \big(R_1^{(j)} + R_4^{(j)} + (R_2^{(j)} + R_5^{(j)})\|f^j\|_\infty^2 + R_3^{(j)} L_{f^j}^2\big) \E_{N \sim \nu}\left[ N^{-\frac{1}{D_{\chi^j} + 1}} \right] \\
& +  \big(T_1^{(j)} + T_2^{(j)}\|f^j\|_\infty^2\big)  \E_{N \sim \nu}\left[ \frac{\log(N)}{N^{\frac{1}{D_\chi^j +1}} } \right] \Big)+ \mathcal{O}\left( \E_{N \sim \nu} \left[\exp(-N)N^{3T-\frac{3}{2}} \right]\right)\Bigg).
\end{aligned}
\]
 We define 
\begin{equation}
\label{eq:defEndConstant}
    C= 6 \sqrt{\pi} \max_{j=1, \ldots, \Gamma} \left(\sum_{i=1}^4S_i^{(j)} + \sum_{i=1}^5R_i^{(j)} + \sum_{i=1}^2T_i^{(j)}\right),
\end{equation}
 leading to 
\[
\begin{aligned}   & \E_{\mathcal{T}\sim \mu^m_\mathcal{G}}\Big[\sup_{\Theta \in \mathrm{Lip}_{L,B}}\Big(R_{emp}(\Theta^P)   - R_{exp}(\Theta^P) \Big)^2 \Big]  \\ &\leq   \frac{2^\Gamma8\|\mathcal{L}\|_\infty^2\pi}{m} + \frac{2^\Gamma \Gamma L_{\mathcal{L}}^2   C}{m} \sum_{j}  \gamma_j \big(1 + \|f^j\|^2_\infty + L_{f^j}^2\big) 
 \\  & \quad \ \ \quad \ \ \quad \ \ \quad \ \ \quad \ \  \quad \ \ \quad \ \ \quad \ \ \cdot      \left( \E_{N \sim \nu} \left[ \frac{1}{N} + \frac{1 + \log(N)}{N^{1/D_{\chi^j} + 1}} + \mathcal{O}\left( \exp(-N)N^{3T-\frac{3}{2}} \right) \right] \right).
\end{aligned}
\]
\end{proof}

\subsection{Asymptotics of the Generalization Bound}
\label{subsec:disGenBound}
In this  subsection, we derive the asymptotic dependency of our generalization bound in Theorem \ref{thm:mainGenError} with respect to the uniform Lipschitz  bound $L$ of the message and update function, the depth $T$, the maximal hidden dimension $h$ and the average graph size, that we denote in this section by abuse of notation $N$. Since we bound the expected square generalization error, and most other related generalization bounds are formulated in high probability, we transform our bound in expectation to a bound in high probability, using, e.g., Markov's Inequality  (and then taking the square root of the square error). By this, the comparison with other generalization bounds formulated in high probability are valid.  Hence, we focus on the constant $\sqrt{C}$, where $C$ is the constant from Theorem \ref{thm:mainGenError}. We reformulated Theorem \ref{thm:mainGenError} as Theorem \ref{thm:reformulatedTheorem2}, where we observed that  $C \leq 6 \sqrt{\pi} \max_{j=1, \ldots, \Gamma} \left(\sum_{i=1}^4S_i^{(j)} + \sum_{i=1}^5R_i^{(j)} + \sum_{i=1}^2T_i^{(j)}\right)$, where $S_l^{(j)},R_l^{(j)},T_l^{(j)}$ are the according constants from Theorem \ref{thm:unifExpValue} for each class $j$ and are defined in (\ref{eq:thmC10-constants}). 
For a better presentation, we drop the  class-superscript by setting $S_l = \max_j S_l^{(j)}$, for $l=1,\ldots,4$, $R_l = \max_j R_l^{(j)}$, for $l=1,\ldots,5$ and $T_l = \max_j T_l^{(j)}$, for $l=1,2$. Further, denote $C_\chi = \max_j C_{\chi^j}$, $D_\chi = \max_j D_{\chi^j}$, $L_W = \max_j L_{W^j}$  and $\|W\|_\infty= \max_j \|W^j\|_\infty$. 

The constants $R_i, S_i$ and  $T_i$ are bounded by a polynomial of order  $2$ in $\Omega_j$, for $j=1, \ldots, 9$, defined in (\ref{eq:defConstantsUniform}).  
    The constants $\Omega_j$, $j=1, \ldots, 9$,  depend on a polynomial of degree one  in $Z_1^{(l)}, Z_2^{(l)}, Z_3^{(l)}$, $B_1^{(l)},  B_2^{(l)}$ and on a polynomial of degree at most $T-1$ in $K^{(l)}$ for $l=1, \ldots, T-1$. Here, $Z_1^{(l)},Z_2^{(l)},Z_3^{(l)}$ are defined in (\ref{eq:z1z2z3}),  $B_1^{(l)}$ and $B_2^{(l)}$ are defined in (\ref{eq:B'}) and (\ref{eq:B''}), and 
    \[
K^{(l')}  = \sqrt{(L_{\Psi^{(l')}})^2 
  + 
   \frac{8\|W\|_\infty^2}{\cmin^2} (L_{\Phi^{(l')}})^2 (L_{\Psi^{(l')}})^2}.
\]
 
 Hence, our strategy is as follows. We first work out the asymptotic behaviour of $Z_1^{(l)}, Z_2^{(l)}, Z_3^{(l)}$, $B_1^{(l)},  B_2^{(l)}$ and  $K^{(l)}$ for $l=1, \ldots, T-1$ with respect to the parameters. Then,  we derive the asymptotics of $\Omega_j$, $j=1, \ldots, 9$. These already agree with the asymptotic of $\sqrt{C}$.
 For this, we write
 $A \lesssim x^k$ if $A$ is bounded by a polynomial of order $k$ in $x$.
 
 We begin with observing that  $K^{(l')} \lesssim L^2 \frac{\|W\|_\infty}{\mathrm{d}_{\mathrm{min}}}$. 
Since we only consider MPNNs $\Theta \in \mathrm{Lip}_{L, B}$, we have for $l=1, \ldots, T$,
\[
\begin{aligned}
B^{(l)}_1 & \leq \sum_{k=1}^{l}  \big(
L_{\Psi^{(k)}} \frac{\cmax}{\cmin}\|\Phi^{(k)}(0,0)\|_\infty+ \|\Psi^{(k)}(0,0)\|_\infty \big) \prod_{l' = k+1}^{l}  L_{\Psi^{(l')}} \big( 1 + \frac{\cmax}{\cmin}  L_{\Phi^{(l')}} \big) \\
& \lesssim \sum_{k=1}^l L B \frac{\|W\|_\infty}{\mathrm{d}_{\mathrm{min}}} \left(
\frac{\|W\|_\infty}{\mathrm{d}_{\mathrm{min}}} L^2
\right)^{l-k} \lesssim \frac{\|W\|_\infty^l}{\mathrm{d}_{\mathrm{min}}^l } L^{2l} B .
\end{aligned}
\]
and 
\[
\begin{aligned}
    B_2^{(l)} & \leq \prod_{k = 1}^{l} L_{\Psi^{(k)}} \left(1  + \frac{\cmax}{\cmin}  L_{\Phi^{(k)}} \right) \\
    & \lesssim \frac{\|W\|_\infty^l}{\mathrm{d}_{\mathrm{min}}^l } L^{2l}.
\end{aligned}
\]

For $l=1, \ldots, T$, the constants $Z_1^{(l)}, $ are defined in (\ref{eq:z1z2z3}). We have
\[
\begin{aligned}
 Z_1^{(l)} & \leq \sum_{k=1}^{l}  \Bigg(\Big(
L_{\Psi^{(k)}}\frac{\cl}{\cmin}  \|\Phi^{(k)}(0,0)\|_\infty  +
L_{\Psi^{(k)}}\cmax
\|\Phi^{(k)}(0,0)\|_\infty 
 \frac{\cl}{\cmin^2}\Big)  \\
 & +  B_1^{(k-1)} \Big(
L_{\Psi^{(k)}}\frac{\cl}{\cmin}  L_{\Phi^{(k)}}   +
L_{\Psi^{(k)}}\cmax
 L_{\Phi^{(k)}} 
 \frac{\cl}{\cmin^2}\Big) \Bigg)  \prod_{l' = k+1}^{l} L_{\Psi^{(l')}}  \Big(1+\frac{\cmax}{\cmin} L_{\Phi^{(l')}}  \Big) \\
 & \lesssim \sum_{k=1}^l B_1^{(k-1)} L^2 \frac{L_W}{\mathrm{d}_{\mathrm{min}}^2} \left(\frac{\|W\|_\infty}{\mathrm{d}_{\mathrm{min}}}L^2\right)^{l-k} \\
 & \lesssim \sum_{k=1}^l B \|W\|_\infty \frac{\|W\|_\infty^{k-1}}{\mathrm{d}_{\mathrm{min}}^{k-1}} (L^2)^{k-1}  L^2 \frac{L_W}{\mathrm{d}_{\mathrm{min}}^2} \left(\frac{\|W\|_\infty}{\mathrm{d}_{\mathrm{min}}}L^2\right)^{l-k} \\
 & \lesssim B \frac{\|W\|_\infty^{l} L_W}{\mathrm{d}_{\mathrm{min}}^{l+1}} L^{2l}.
\end{aligned}
\]
We have
\[
\begin{aligned}
 Z_2^{(l)} & \leq \sum_{k=1}^{l}   B_2^{(k)}   \Big(
L_{\Psi^{(k)}}\frac{\cl}{\cmin}  L_{\Phi^{(k)}}   +
L_{\Psi^{(k)}}\cmax
 L_{\Phi^{(k)}} 
 \frac{\cl}{\cmin^2}\Big)  \prod_{l' = k+1}^{l} L_{\Psi^{(l')}}  \Big(1+\frac{\cmax}{\cmin} L_{\Phi^{(l')}}  \Big) \\
 & \lesssim \sum_{k=1}^l B_2^{(k-1)} L^2 \frac{L_W}{\mathrm{d}_{\mathrm{min}}}\|W\|_\infty \left(\frac{\|W\|_\infty}{\mathrm{d}_{\mathrm{min}}} L^2\right)^{l-k} \\
 & \lesssim \sum_{k=1}^l \frac{\|W\|^{k-1}_\infty}{\mathrm{d}_{\mathrm{min}}^{k-1}} (L^2)^{k-1} L^2 \frac{L_W}{\mathrm{d}_{\mathrm{min}}}\|W\|_\infty \left(\frac{\|W\|_\infty}{\mathrm{d}_{\mathrm{min}}} L^2\right)^{l-k} \\
 & \lesssim \frac{L_W  \|W\|_\infty^l}{\mathrm{d}_{\mathrm{min}}^l} L^{2l}.
\end{aligned}
\]
We have
\[
\begin{aligned}
Z_3^{(l)}  &   \leq    \prod_{k=1}^{l}
L_{\Psi^{(k)}} \Big(1+\frac{\cmax}{\cmin} L_{\Phi^{(k)}}\Big) \\
& \lesssim \frac{   \|W\|_\infty^l}{\mathrm{d}_{\mathrm{min}}^l} L^{2l}.
\end{aligned}
\]
 
 For $i=1, \ldots, 9$, the constant $\Omega_i$ depends on $K^{(l)}$ for which we 
 have
 \[
 K^{(l')}  \leq \sqrt{(L_{\Psi^{(l')}})^2 
  + 
   \frac{8\|W\|_\infty^2}{\cmin^2} (L_{\Phi^{(l')}})^2 (L_{\Psi^{(l')}})^2} \lesssim \frac{\|W\|_\infty}{\mathrm{d}_{\mathrm{min}}} L^2
 \]
 
 For $\Omega_1$, we calculate 
 \[
 \begin{aligned}
 \Omega_1  &   \leq  \sum_{l=1}^{T} L_{\Psi^{(l)}}  
4 \frac{\zeta \cl  \big(\sqrt{\log (C_\chi)} +  \sqrt{D_\chi} \big)}{\mathrm{d}_{min}^2} \|W\|_\infty\big(L_{\Phi^{(l)}} B_1^{(l-1)}+ \|\Phi^{(l)}(0,0)\|_\infty \big) \prod_{l' = l+1}^{T} K^{(l')} \\ & \lesssim \big(\sqrt{\log (C_\chi)} +  \sqrt{D_\chi} \big) \sum_{l=1}^{T} L^2 B_1^{(l-1)} \frac{L_W\|W\|_\infty}{\mathrm{d}_{\mathrm{min}}^2} (L^2)^{T-l} \\
& \lesssim \big(\sqrt{\log (C_\chi)} +  \sqrt{D_\chi} \big)B L^{2T} \frac{L_W\|W\|_\infty^{T}}{\mathrm{d}_{\mathrm{min}}^{T+1}}
 \end{aligned}
 \]
 
 Similar calculations lead to 
 \[
 \Omega_i \lesssim \big(\sqrt{\log (C_\chi)} +  \sqrt{D_\chi} \big)B (L^2)^{T}  \frac{L_W \|W\|_\infty^{T}}{\mathrm{d}_{\mathrm{min}}^{T+1}}.
 \]

Hence,
\begin{equation}
\label{eq:GenBoundAsyptotics}
  GE \lesssim \frac{2^{\Gamma/2}}{\sqrt{m}} +  \frac{2^{\Gamma/2} \big(\sqrt{\log (C_\chi)} +  \sqrt{D_\chi} \big)B L^{2T}  L_W \|W\|_\infty^{T}}{\sqrt{m}\mathrm{d}_{\mathrm{min}}^{T+1}} \E_{N \sim \nu}\left[ \frac{\sqrt{\log(N)}}{N^{\frac{1}{2(D_\chi + 1)}}} \right].
\end{equation}

\subsection{Generalization Bound Comparison}
\label{subsec:comp}
In this subsection, we compare our generalization bound, especially the asymptotics derived in the previous subsection, with other related generalization bounds.
Since related work does neither consider the same network architecture, nor the same data distribution as our work, we emphasize the setting of each of the cited results. We then write the asymptotics of the cited bounds in terms of the maximal hidden dimension $h$, depth $T$, Lipschitz bound  $L$ of the message and update functions, maximal node $d$ degree and graph size $N$. We recall (\ref{eq:GenBoundAsyptotics}), where we derived the asymptotics of our generalization bound from Theorem \ref{thm:mainGenError} with respect to $T, L$ and $N$ as
\[
\mathcal{O}\left(\E_{N \sim \nu}\left[ \frac{\sqrt{\log(N)}}{N^{\frac{1}{2(D_\chi + 1)}}} \right] \right) \; ,   \; \quad \ \   \mathcal{O}\left( L^{2T} \right) 
\]
and $\mathcal{O}(1)$ with respect to  $h$.

\subsubsection{PAC-Bayesian Approach based Bound}
 \label{sub:pacbound}

The generalization analysis of \cite{liao2021a} considers  MPNNs  with sum aggregation for a $K$-class graph classification setting. The authors differentiate between the input   node feature  vectors $\mathbf{x}_v$, which is an unchanged input for every layer, and the node embedding/representation in the $l$-th layer $\mathbf{f}^{(l)}$, where they take $\mathbf{f}^{(0)}=0$.  More formally, the MPNNs takes the following form.
\begin{definition}
\label{def:PACMPNN}
Let $G$ be a graph with graph features $\mathbf{x}$. A MPNN   (in \cite{liao2021a}) with $T$ layers is defined by taking the input feature representation $\mathbf{f}^{(0)}=0$, and mapping it to the features $\mathbf{f}^{(l)}$ in the $l$-th layer, which are defined recursively by
\begin{equation}
\label{eq:PACnetwork}
   \mathbf{f}_v^{(l)} = \Psi\left( 
W_1 \mathbf{x}_v + W_2 \rho\left( 
\sum_{u\in \mathcal{N}(v)} \Phi(\mathbf{f}_u^{(l-1)})
\right) 
\right),
\end{equation}
where $\rho$, $\Psi$ and $\Phi$ are nonlinear transformations, and $W_1$ and $W_2$ are linear transformations. This is followed by a global pooling layer, which takes as an input $\mathbf{f}^{(T-1)}\in \mathbb{R}^{N\times K}$, and returns the vector
\[ \frac{1}{N} \mathbf{1}_N \mathbf{f}^{(T-1)} W_T \in \mathbb{R}^{1 \times K},  \]
where $W_T$ is a linear transformation.
 Here $\mathbf{1}_N$ denotes the vector $(1, \ldots, 1) \in \mathbb{R}^{1 \times N}$, where $N$ is the number of nodes in the graph.
\end{definition}

The message and update functions in Definition \ref{def:PACMPNN} are the same in every layer. 
It is assumed 
 that  $\Psi, \rho$ and $\Phi$ have Lipschitz constants $L_\Psi, L_\rho$ and $L_\Phi$. Furthermore it is assumed that $W_1, W_2$ and $W_T$ have bounded norms, i.e., $\|W_1\|_2 \leq B_1, \|W_2\|_2 \leq B_2$ and $\|W_T\|_2 \leq B_T$.

The expected multiclass margin loss is then defined as 
\[
R_{\mathcal{D}, \gamma}(\Theta) = \mathbb{P}_{  (G, \mathbf{x}, \mathbf{y}) \sim \mathcal{D}} \left(
\big(\Theta^P_G(\mathbf{x})\big)_{\mathbf{y}} \leq \gamma + \max_{j \neq \mathbf{y}} \big(\Theta^P_G(\mathbf{x})\big)_j
\right),
\]
where $\mathcal{D}$ is the unknown data distribution,
  $\gamma > 0$ and $\Theta_G^P$ is the MPNN after pooling.   Accordingly, the empirical loss is defined
as 
\[
R_{\mathcal{T}, \gamma}(\Theta) = \frac{1}{m} \sum_{(G_i,\mathbf{x}_i, \mathbf{y}_i) \in \mathcal{T}} \mathbbm{1} \left(
\big(\Theta^P_{G_i}(\mathbf{x}_i)\big)_{\mathbf{y}_i} \leq \gamma + \max_{j \neq\mathbf{y}_i} \big(\Theta^P_{G_i}(\mathbf{x}_i)\big)_j
\right),
\]
where the summand $ \mathbbm{1} \left(
\big(\Theta^P_{G_i}(\mathbf{x}_i)\big)_{\mathbf{y}_i} \leq \gamma + \max_{j \neq\mathbf{y}_i} \big(\Theta^P_{G_i}(\mathbf{x}_i)\big)_j
\right)$ is equal to $1$ if   $\big(\Theta^P_{G_i}(\mathbf{x}_i)\big)_{\mathbf{y}_i} \leq \gamma + \max_{j \neq\mathbf{y}_i} \big(\Theta^P_{G_i}(\mathbf{x}_i)\big)_j$ and otherwise $0$.

Furthermore, the following assumptions hold for the training set and the considered MPNNs
\begin{assumption} $ $
\label{ass:PACbound}
\begin{enumerate} 
    \item The training set $\mathcal{T} = \{(G_1, \mathbf{x}_1, \mathbf{y}_1), \ldots, (G_m, \mathbf{x}_m, \mathbf{y}_m)  \}$ is drawn i.i.d. from some distribution $\mathcal{D}$, where all graphs are simple and have node degrees at most $d-1$.
    \item The maximum hidden dimension across all layers is $h$.
    \item The node features are drawn in an $l^2$-ball with radius $B$ from the node feature space $\mathcal{X}$. 
\end{enumerate}
\end{assumption}

 The generalization bound is formulated in terms of the following constants: $\zeta = \min\left( \|W_1\|_2, \|W_2\|_2, \|W_T\|_2 \right)$, $|w|^2_2 = \|W_1\|_F^2 + \|W_2\|_F^2 + \|W_T\|_F^2$, $\lambda = \|W_1\|_2 \|W_T\|_2$, 
$\xi= L_\Psi \frac{(d\mathcal{C})^{l-1}-1}{d\mathcal{C}-1}$, and the percolation complexity $\mathcal{C} = L_\Psi L_\rho L_\Phi\|W_2\|_2$. We summarize the main result \cite[Theorem 3.4]{liao2021a}   as follows.

\begin{theorem} 
Let $T> 1$. 
 Then for any $\delta, \gamma >0$, with probability at least $1-\delta$ over the choice of the training set $\mathcal{T}\sim\mathcal{D}^m$  of $m$ graphs,  for any $T$-layered MPNN $\Theta$, we have, 
\begin{enumerate}
    \item If $d \mathcal{C} = 1$, then
\[
\begin{aligned}
R_{\mathcal{D},0}(\Theta) & \leq R_{\mathcal{T}, \gamma}(\Theta) \\
& + \mathcal{O}\left( \sqrt{\frac{B^2  \max\left(
\zeta^{-6}, \lambda^3L_\Psi^3\right) (T+1)^4h \log(Th) |w|_2^2  + \log \frac{m}{\delta}} {\gamma^2 m}} 
\right).
\end{aligned}
\]
\item 
If $d \mathcal{C} \neq 1$, then
\[
\begin{aligned}
R_{\mathcal{D},0}(\Theta) & \leq R_{\mathcal{T}, \gamma}(\Theta) \\
& + \mathcal{O}\left( \sqrt{\frac{B^2 \left( \max\left(
\zeta^{-(T+1)}, (\lambda \xi)^{(T+1)/T}\right) \right)^2T^2 h \log(Th)  |w|_2^2  + \log \frac{m(T+1)}{\delta}} {\gamma^2 m}}
\right).
\end{aligned}
\]
\end{enumerate}
\end{theorem}

We only consider the non-degenerative case $d \mathcal{C} \neq 1$, as it is the generic case, which can again be split into two cases. As the authors in \cite{liao2021a} mention, 
these two cases correspond to $\max(\zeta^{-1}, (\lambda \xi)^{\frac{1}{T}})  = \zeta^{-1}$ (case A) and $\max(\zeta^{-1}, (\lambda \xi)^{\frac{1}{T}})  = (\lambda \xi)^{\frac{1}{T}}$ (case B). In practice case B  occurs more often, where  the generalization bound depends on the parameters with orders $\mathcal{O}\left( d^{\frac{(T+1)(T-2)}{T}}\right)$, $\mathcal{O}\left( \sqrt{h \log h} \right)$
and 
$\mathcal{O}\left( \lambda^{1+\frac{1}{T}}\xi^{1+\frac{1}{T}}\sqrt{\|W_1\|_F^2 + \|W_2\|_F^2 +\|W_l\|_F^2}
\right)$.  In case A, the generalization bound depends on the parameters with orders $\mathcal{O}\left( \sqrt{h \log h} \right)$
and 
$\mathcal{O}\left(  \zeta^{-(T+1)} \sqrt{\|W_1\|_F^2 + \|W_2\|_F^2 +\|W_l\|_F^2}
\right)$.

We now describe  the architecture in Definition \ref{def:PACMPNN} in terms of the message passing framework from (\ref{eq:gMPNN}). For $l=1, \ldots, T$, we denote by $\mathbf{m}_i^{(l)}$ and $\mathbf{f}_i^{(l)}$ the message and graph feature of node $i$ in the $l$-th layer, respectively.
Given a simple graph $G$ with node features $(\mathbf{x}_i)_{i}$,  we set $\mathbf{f}_i = \mathbf{x}_i$ as the input for the MPNN. Then the message function in the first layer  is given by $ \Phi^{(1)}(\mathbf{f}_i, \mathbf{f}_j ) = \mathbf{f}_i$. We recall that the message in MPNNs with sum aggregation is calculated as $\mathbf{m}_i^{(1)} = \sum_{j \in \mathcal{N}(i)} \Phi^{(1)}(\mathbf{f}_i, \mathbf{f}_j )$. The update function in the first layer is given by $\Psi^{(1)}(\mathbf{f}_i, \mathbf{m}_i^{(1)} ) = \big(\Phi(W_1 \mathbf{f}_i ), \mathbf{f}_i \big)$.
For $l=2, \ldots, T-1$, the message functions are defined as
\[
\Phi^{(l)}(\mathbf{f}_i^{(l-1)}, \mathbf{f}_j^{(l-1)}) = \Phi( \mathbf{f}_i^{(l-1)} )
\]
and the update functions are defined as 
\[
\Psi^{(l)}(\mathbf{f}_i^{(l-1)}, \mathbf{m}_i^{(l)}) =  \Psi  \Big( W_1 (\mathbf{f}_i^{(l-1)})_2 + W_2 \rho\big(\mathbf{m}_i^{(l)} \big), (\mathbf{f}_i^{(l-1)})_2 \Big),
\]
where $(\mathbf{f}_i^{(l-1)})_2$ stays unchanged through all layers, and is equal to the input graph features $\mathbf{x}_i$. The aggregation scheme is given by sum aggregation.  Finally, the pooling in Definition \ref{def:PACMPNN} can be described by a graph MPNN layer with update function $W_T$ followed by average pooling.  With this construction of  message and update functions the MPNN $\Theta = \big( (\Psi^{(l)})_{l=1}^T, (\Phi^{(l)})_{l=1}^T\big)$ with sum aggregation matches the architecture in Definition \ref{def:PACMPNN}.

We summarize the Lipschitz bounds for the message and update functions by $L_{\Phi^{(1)}} = 1, L_{\Phi^{(T)}} = 1, L_{\Psi^{(1)}} = L_\Phi, L_{\Phi^{(T)}} = \|W_T\|_2$ and $L_{\Phi^{(l)}} = L_\Phi, L_{\Psi^{(l)}} = L_\Psi\big(\|W_1\|_2 + \|W_2\|_2 L_\rho\big) $ for $l=2, \ldots, T-1$.
For deriving our generalization bound in Theorem \ref{thm:MainInProb}, we assume that there exists a uniform Lipschitz bound for the message and update functions, denoted by $L$. Hence, we assume that $L_\Phi \leq L$,  $L_\Psi \|W_1\|_2 +  L_\Psi\|W_2\|_2 L_\rho \leq L$ and $\|W_T\|_2 \leq L$.

For simplicity and better comparison with our generalization bound, we make use of the following upper bounds, 
\begin{equation}
\label{eq:PACupperBounds}
  \begin{aligned}
& \mathcal{C} = L_\Psi L_\rho L_\Phi \|W_2\| \leq L^2, \\
& \xi = L_\Psi \frac{(d \mathcal{C} )^{T-1} - 1}{d\mathcal{C}-1}\leq L(L^2)^{T-2}, \\
&\zeta = \min( \|W_1\|_2, \|W_2\|_2, \|W_l\|_2 ) \leq L \text{ and } \\
& \lambda = \|W_1\|_2  \|W_l\|_2 \leq L.
\end{aligned}
\end{equation}
This leads to
\[
\begin{aligned}
\mathcal{O}\left( \lambda^{1+\frac{1}{l}}\xi^{1+\frac{1}{l}}\sqrt{\|W_1\|_F^2 + \|W_2\|_F^2 +\|W_l\|_F^2}
\right) & = \mathcal{O}\left(L^{1 + \frac{1}{T}} 
  (L(L^2)^{T-2})^{1 + \frac{1}{T}} L \right) \\ & = \mathcal{O}\left(L^{2T -2/T +1}\right).
\end{aligned}
\]
Hence, the asympotics of the generalization bound in \cite{liao2021a} with respect to the maximal hidden dimension $h$, the Lipschitz bound $L$, the depth $T$ and the maximum node degree $d$ can be summarized respectively as
\[
\begin{aligned}
& \mathcal{O}\left( d^{\frac{(T+1)(T-2)}{T}} \right)  \; , \;
  \mathcal{O}\left(\sqrt{h \log h}\right)  \; \text{ and } \;
  \mathcal{O}\left(L^{ 2T -2/T +1 }\right).
\end{aligned}
\]

\subsubsection{Rademacher Complexity based Bound}
We next analyze the bound derived in  \cite{pmlr-v119-garg20c}. 
Since \cite{pmlr-v119-garg20c} consider the same architecture, defined in Definition \ref{def:PACMPNN},  as  \cite{liao2021a}, we adopt the notation from Subsection \ref{sub:pacbound}.
The authors in \cite{pmlr-v119-garg20c} consider    a binary graph classification task with the same Assumptions \ref{ass:PACbound} on the training set and the MPNN as in \cite{liao2021a}.  
The main result  can be summarized as follows. 

\begin{theorem}
Let $T>1$. Then for any $\delta, \gamma >0$, with probability at least $1-\delta$ over the choice of the training set $\mathcal{T}\sim\mathcal{D}^m$  of $m$ graphs,  for any $T$-layered MPNN $\Theta$, we have, 
\[
\begin{aligned}
&R_{\mathcal{D},0}(\Theta)  \leq R_{\mathcal{T}, \gamma}(\Theta) \\
& + \mathcal{O}\left(
\frac{1}{\gamma m} + h \|W_T\|_2 Z \sqrt{\frac{\log\left(
\|W_T\|_2 \sqrt{m} \max\left(
Z, \xi \sqrt{h} \max\left(
B \|W_1\|_2, \bar{R}\|W_2\|_2
\right)
\right)
\right)}{\gamma^2 m}}  + 
\sqrt{\frac{\frac{1}{\delta}}{m}}
\right),
\end{aligned}
\]
where  $\bar{R}$ is a constant specified in \cite{pmlr-v119-garg20c} that satisfies $\bar{R} \leq L_\rho L_\Phi d B \|W_1\|_2 \xi$, and $Z = B \|W_1\|_2 \|W_T\|_2$.
\end{theorem}

 We only consider the case $\max\left( Z, \xi \sqrt{h} \max \left( BB_1, \bar{R}B_2 \right) \right) = \xi \sqrt{h} \bar{R}B_2 $, which is the generic case  (see  \cite[Subsection A.5.2]{liao2021a} for the other cases).  
%
Thus the generalization bound from \cite{pmlr-v119-garg20c} depends on the parameters with orders $ \mathcal{O}\left( d^{T-1} \sqrt{\log(d^{2T-3})} \right),  
  \mathcal{O}\left( h \sqrt{\log \sqrt{h}} \right) $ and $
  \mathcal{O}\left( \lambda \mathcal{C} \xi \sqrt{\log (\|W_2\|_2 \lambda \xi^2)} \right).
$
 
Similarly to   Subsection \ref{sub:pacbound}, we consider a uniform Lipschitz bound $L$ for the message and update functions.  We thus consider the upper bounds on $\xi, \lambda$ and $\mathcal{C}$, summarized in (\ref{eq:PACupperBounds}),
which leads to
\[
\begin{aligned}
& \mathcal{O}\left( \lambda \mathcal{C} \xi \sqrt{\log (\|W_2\|_2 \lambda \xi^2))} \right) = \mathcal{O}\left( L^{2T} \sqrt{\log (L^{4T-4}) } \right).
\end{aligned}
\]

Hence, the asympotics of the Rademacher based generalization bound in \cite{pmlr-v119-garg20c} with respect to the maximal hidden dimension $h$, the Lipschitz bound $L$, the depth $T$ and the maximum node degree $d$ can be summarized as
\[
\begin{aligned}
  \mathcal{O}\left( d^{T-1} \sqrt{\log(d^{2T-3} )} \right)  , \:
 \mathcal{O}\left( h \sqrt{\log\sqrt{h}}  \right)  \text{ and } \mathcal{O}\left( L^{2T} \sqrt{\log (L^{4T-4}) } \right).
\end{aligned}
\]

\paragraph{VC-Dimension Based Bound \cite{scarselli2018vapnik}}

The work by \cite{scarselli2018vapnik}  considers graph neural networks in supervised classification or regression tasks, where the input is a graph $G$ with graph feature map $\mathbf{x}$  and one node of interest $v$ in which we want to produce a prediction. They apply a recurrent graph neural network  on the graph $G$ with graph feature $\mathbf{x}$, and then evaluate the output graph feature map $\mathbf{f}$ only in $v$. They then calculate the loss between $\mathbf{f}(v)$  and its given desired target $\mathbf{y}$.
More formally, the training dataset $\mathcal{T}$ is defined as $\mathcal{T} = \{(G^i, \mathbf{x}^i, v^i, \mathbf{y}^i) \; |  \; 1 \leq i \leq m \}$, where $m$ is the number of graphs and each tuple $(G^i, \mathbf{x}^i, v^i, \mathbf{y}^i)$ denotes a graph $G^i$ with graph features $\mathbf{x}^i$,
the supervised node $v^i$, and the desired target $\mathbf{y}^i$ for that node.  

Given a graph $G=(V,E)$ with graph features $\mathbf{x}$ the graph neural network architecture is defined implicitly, as a method that solves a system of equations, and the solution is the output of the network. The equation is given by
\begin{equation}
    \label{eq:VC1}
\mathbf{f}_i = \sum_{j \in \mathcal{N}(i)} \Phi(\mathbf{x}_i, \mathbf{f}_j, \mathbf{x}_j), \text{  $\forall i \in V$  }
\end{equation} 
where $\Phi$ is a multi-layer-perceptron with  input $[\mathbf{x}_i, \mathbf{f}_j, \mathbf{x}_j]$, and the solution $\mathbf{f}$ to (\ref{eq:VC1}) is defined as the output of this part of the network. The output of the network $\mathbf{o}_i \in \mathbb{R}$ for the node $i$ is then defined   by
\begin{equation}
    \label{eq:VC2}
\mathbf{o}_i = g(\mathbf{x}_i, \mathbf{f}_i),
\end{equation}
where $g$ is a multi-layer-perceptron.
Given the training data set $\mathcal{T}$, the empirical loss  $R_{\mathrm{emp}}$ is then defined by the sum of the squared errors, i.e.,
\[
R_{\mathrm{emp}} = \sum_{i=1}^m (\mathbf{y}^i - \mathbf{o}_{v^i})^2.
\]
One way to solve the fixed point problem (\ref{eq:VC1}) is by a fixed point iteration, which means that we can interpret the architecture as a recurrent message passing network (theoretically with infinite depth), where all message functions in all layers are equal to $\Phi$.

 


\cite{scarselli2018vapnik} derive VC-dimension bounds for the mapping that takes as an input a tuple $(G,\mathbf{x},v)$ of a graph $G$ with features $\mathbf{x}$ and node of interest $v$ and outputs $\mathbf{o}_v$ as defined in (\ref{eq:VC1}) and (\ref{eq:VC2}). 
The VC-dimension bound  depends on the total number of parameters $p$ of the network and a predefined maximum graph size $N$. Furthermore, the bound for the VC-dimension depends on the choice of the activation function in the  MLPs $\Phi$ and $g$. If the activation is given by tanh and logistic sigmoid activations   the VC-dimension
scales as $\mathcal{O}(p^4 N^2)$. Since $p$ can be related to the maximum hidden dimension $h$ by $p \in \mathcal{O}(h^2)$, the VC-dimension scales as $\mathcal{O}(h^8 N^2)$. Consequently, the asymptotics of the generalization bounds in \cite{scarselli2018vapnik} with respect to $h$ and $N$ can be summarized as
\[
\mathcal{O}(h^4) \text{ and  } \mathcal{O}(N).
\]
 For piecewise polynomial activations the VC-dimension scales as $\mathcal{O}(h^4 \log(N)N)$, hence the generalization bound scales in this case as 
 \[
\mathcal{O}(h^2) \text{ and  } \mathcal{O}(\sqrt{\log(N)N})
 \]
 with respect to $h$ and $N$.

\section{Details on Numerical Experiments and Additional Experiments}
\label{appendix:numericalExp}

In this section we report additional experiments and write all details corresponding to Section \ref{sec:NumResults}. We First give an example that illustrate our convergence theorem (Theorem \ref{thm:MainInProb}), and then introduce a comparison between our generalization bound and the Rademacher complexity \cite{pmlr-v119-garg20c} and PAC-Bayesian \cite{liao2021a} bounds, evaluated on synthetic datasets.

\subsection{Convergence Experiments}
\label{Convergence Experiments}
In this section, we show simple numerical experiments 
on the convergence of sampled MPNNs  from  a random geometric graph model, on toy data. 
 We consider random geometric graphs  \cite{RGGPenrose}, 
 which can be described by using RGMs with the kernel $W(x,y) = \mathbbm{1}_{B_r(x)}(y)$ on $[0,1]^2$, equipped with the uniform distribution and the standard Euclidean norm. Here $\mathbbm{1}_{B_r(x)}$ is the indicator function of the ball around $x$ with radius $r$. Even though $\mathbbm{1}_{B_r(x)}(y)$ is not Lipschitz continuous, and hence does not satisfy the conditions of Theorems  \ref{thm:MainInProb}
 , $\mathbbm{1}_{B_r(x)}(y)$ can be approximated by a Lipschitz continuous function.
 As the metric-space signal we consider a random low frequency signal (see Figure \ref{tbl:conv1}). 
 
 For our network, we choose untrained MPNNs with random weights, where each layer is defined using EdgeConv \cite{Bronstein_2017} with mean aggregation, and is implemented using Pytorch Geometric \cite{Fey/Lenssen/2019}.
More precisely, we consider MPNNs with $2$ layers. The message function in the first layer is defined as 
$\Phi^{(1)}(\mathbf{f}_i, \mathbf{f_j}) = h^{(1)}  ( \mathbf{f}_i, \mathbf{f}_j - \mathbf{f}_i)$, where $h^{(1)}$ is a $1$-layered MLP with ReLU activation, input dimension $2$ and output dimension $3$.  The message function in the second layer is defined as 
$\Phi^{(2)}(\mathbf{f}_i^{(1)}, \mathbf{f_j}^{(1)}) = h^{(2)}   ( \mathbf{f}_i^{(1)}, \mathbf{f}_j^{(1)} - \mathbf{f}_i^{(1)})$, where $h^{(2)}$ is a $1$-layered MLP with ReLU activation, input dimension $6$ and output dimension $1$. The update functions are given by $\Psi(\mathbf{f}_i^{(1)}, \mathbf{m}_i^{(2)}) = \mathbf{m}_i^{(2)}$.  This is a followed by an average pooling layer. 

 We ran the experiments that depend on random variables 10 times and report the average results with error bars that indicate the standard error. 
 One run consists of the following steps.
 We consider 10 different graph sequences, where each graph sequence contains randomly sampled graphs of $2^i$ nodes, with $i = 1,\ldots, 13$.
We then consider 50 (different) randomly initialized MPNNs, and compute for each graph sequence the worst-case error  between the  output of the cMPNN to its sampled graphs, i.e., for every graph size $N$, we pick the MPNN with the highest error. We then average the resulting 10 errors over the 10 different graph sequences, to approximate the expected error over the choice of the graph.
In Figure \ref{tbl:conv1}, we plot the average error over the 10 runs     on the logarithmic y-axis and the number  of nodes on the x-Axis. We also provide a log-log-graph of this relation. Recall that in a log-log-graph a function of the form $f(x) = x^c$ appears as a line with slope $c$. We observe that in this toy example the worst-case error, which corresponds roughly to the uniform convergence result in Theorem \ref{thm:MainInProb}, 
decays faster than our theoretical worst-case error bound  $-1/6$. This suggests that, at least for band limited signals  on random geometric graphs, our convergence bounds are not tight.  

\begin{figure}[t]
\begin{center}
    \begin{tabular}{M{60mm}M{60mm}}
  \includegraphics[width=\linewidth]{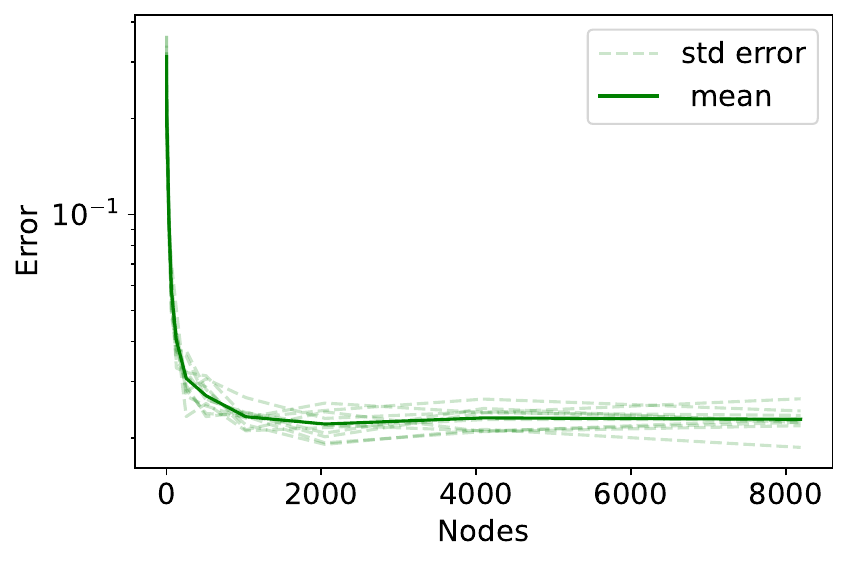} &  
  \includegraphics[width=\linewidth]{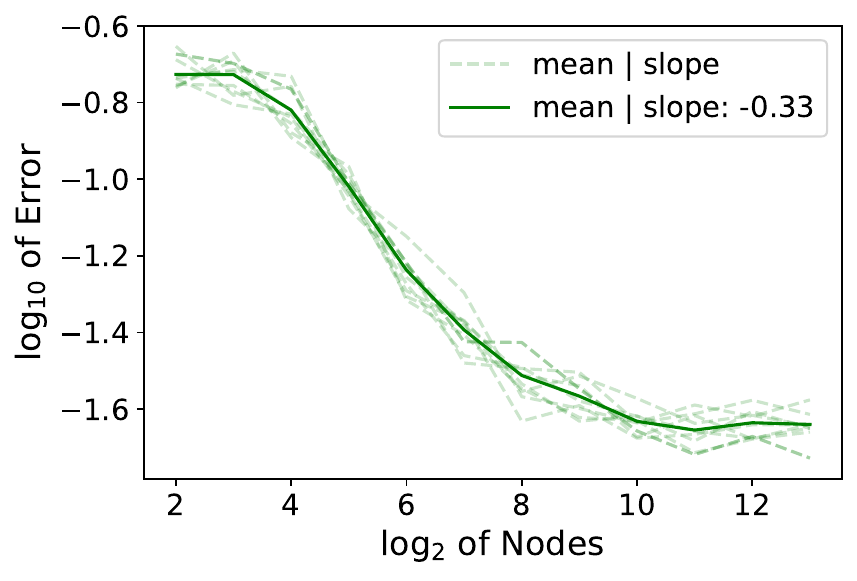}
  \end{tabular}
    \end{center}
    \caption{The average worst-case error  between MPNNs realized on graphs and on the limit RGM, with varying number of nodes, drawn from the  RGM $W(x,y)=\mathbbm{1}_{B_r(x)}(y)$ (where $\mathbbm{1}_{B_r(x)}$ is the indicator function of the ball around $x$ with radius $r=0.2$ in the space $([0,1]^2, \|\cdot\|_{\mathbb{R}^2}, \mathcal{L})$), and a random low frequency signal.  Left: graph sizes on the $x$-Axis and error on logarithmic $y$-Axis.   Right: $\log_2$ of the graph sizes on the $x$-Axis and $\log_{10}$ of the error on the $y$-Axis. The slope of the curve represents the exponential dependency of the error on $N$.} 
    \label{tbl:conv1}
\end{figure}



Computing the exact cMPNN would involve computing integrals. To approximate this integral, we sampled a large graph from the RGM. 
For the largest graph, we choose $2^{14}$ nodes. Our smaller graphs consist of $2^i$ nodes, with $i = 1,\ldots, 13${, and are sampled directly from the  RGM.} As the metric-space signal we   consider a discrete random band-limited signal of resolution 256x256, defined as $f = \mathcal{F}^{-1} (v)$, where $v$ consists of randomly chosen Fourier coefficients  in the low positive frequency band 20x20 such that the coefficients in the lowest positive frequency band 8x8 are amplified by a factor of $10$, 
 and $\mathcal{F}^{-1}$ is the inverse Finite Fourier Transform.


\subsection{Generalization Experiments}

In this subsection, we provide details for the numerical experiments from Section \ref{sec:NumResults} and report additional generalization experiments.

\subsubsection{Dataset}
\label{APP:Dataset}
We create three different synthetic datasets of random graphs from different random graph models. The domains of the graphons (the metric space), is taken as the Euclidean space $[0,1]$.  First, we consider Erdös-Rényi graphs with edge probably $0.4$ with constant signal, represented by $(W_1, f_1)$ with $W_1(x,y) = 0.4$ and $f_1(x) = 0.5$. We also consider
a smooth version of a stochastic block model, represented by $(W_2, f_2)$ with $W_2(x,y) = \sin(2\pi x)\sin(2\pi y)/2\pi+0.25$ and $f_2(x) = \sin(x)/2$. Last, we consider an exponential radial graphon, represented by $(W_3, f_3)$ with $W_3(x,y) = \exp(-|x-y|^2)/2$ and $f_3(x) = 0.5x$. For each graphon, we create 50K graphs of size 50. We call the Erdös-Renyi dataset \textbf{ER}, the stochstic block model dataset \textbf{SBM}, and the exponential radial dataset \textbf{EXP}. We then consider all possible pairs, i.e., \textbf{ER}-\textbf{SBM}, \textbf{ER}-\textbf{EXP} and \textbf{SBM}-\textbf{EXP}, and train a binary classifier for each pair. We split each dataset to 90\% training examples and 10\% test.

\begin{table}
\caption{Readout of the constants after training on all synthetic datasets. Each column 
represents the value of the respective dataset parameter used for the calculations of our generalization bounds.\label{table:datasetInfos}
}
\begin{center}
\vskip 0.15in
\begin{tabular}{cccccccc} 
\midrule
   &  $\|W\|_\infty$   & $L_W$ & $\|f\|_\infty$ & $f_L$ & $N$ & $m$ & $d_{\mathrm{min}}$ \cr  
\midrule
\makecell{\textbf{ER}-\textbf{SBM} }  &   $0.41$ & $0.5$ &  $0.5$ & $0.5$ & $50$ & $100$K & $0.25$   \\
\makecell{\textbf{ER}-\textbf{EXP}} & $0.5$ & $1$ & $0.5$ & $0.5$ & $50$ & $100$K & $0.373$  \\
\makecell{\textbf{EXP}-\textbf{SBM} } & $0.5$ & $1$ & $0.5$ & $0.5$ & $50$ & $100$K & $0.25$ \\   
\midrule
\end{tabular}
\end{center}
\end{table}

\subsubsection{MPNN Details}
\label{subsec:GraphSageDetail}
For our network, we choose MPNNs intialized with random weights, where each layer is defined using GraphSage \cite{hamilton2017inductive}, and is implemented with Pytorch Geometric \cite{Fey/Lenssen/2019}. We consider MPNNs with 1,2 and 3 layers. The message functions are defined by \[\Phi^{(l)}(\mathbf{f}^{(l-1)}_i, \mathbf{f}^{(l-1)}_j) = \mathbf{f}^{(l-1)}_j.\] The update functions are given by \[\Psi^{(l)}(\mathbf{f}^{(l-1)}_i, \mathbf{m}^{(l-1)}_i) = \rho\big(W_1^{(l)} \mathbf{f}^{(l-1)}_i+ W_2^{(l)} \mathbf{m}^{(l-1)}_i\big),\] where 
$W_1^{(1)} \in \mathbb{R}^{128 \times 1}$, $W_2^{(1)} \in \mathbb{R}^{128 \times 1}$ and  $W_1^{(2)}, W_1^{(3)}\in \mathbb{R}^{128 \times 128}$, $W_2^{(3)},W_2^{(3)} \in \mathbb{R}^{128 \times 128}$. We then consider a global mean pooling layer,  
 and apply a last linear layer $Q$ (including bias) with
input dimension $128$ and output dimension $2$. This last linear layer is seen as part of the loss function in the analysis, and contributes to the generalization bound via the Lipschitz constant and infinity norm of the loss, as seen in Theorem \ref{thm:mainGenError}.

\subsubsection{Experimental Setup}
The loss is given by soft-max composed with cross-entropy (composed on the last MLP).
We consider Adam with learning rate $lr=0.01$. For experiments with weight decay, we use an $l^2$-regularization on the weights with factors $0.27, 0.15$ and $0.05$ for the \textbf{ER}-\textbf{SBM} dataset. For the \textbf{ER}-\textbf{EXP} dataset we consider weight decay factors $0.37,0.15$ and $0.05$. For the \textbf{SBM}-\textbf{EXP} dataset we consider $0.28,0.05$ and $0.05$. 
We train for 1 epoch. The batch size is 64. We consider 1, 2 and 3 layers.

\subsubsection{Details on Computations of Our Bound}
We compute our generalization bound according to the full formula given in  Theorem \ref{thm:reformulatedTheorem2}. The terms  depending on the dataset are:
the size of the training dataset $m$, the average graph size $N$, the minimum degree $d$,
the largest infinity norm of the graphons
$\|W\|_\infty$,   largest Lipschitz norm of the graphons $L_{W}$, the largest infinity norm of the metric-space signal $\|f\|_\infty$, the largest Lipschitz norm of the metric-spaces signals $L_{f}$ and the number of classes is $\Gamma = 2$. For every dataset, we summarize these terms depending on the dataset in Table \ref{table:datasetInfos}.

Our bound depend also on the Lipschitz constants of the trained GraphSage MPNN, i.e., 
on the Lipschitz norms $L_{\Psi^{(l)}}$ and $L_{\Phi^{(l)}}$  of the update function $\Psi^{(l)}$ and message function $\Phi^{(l)}$, given in Subsection \ref{subsec:GraphSageDetail}. We have
$L_{\Phi^{(l)}} = \big\| [W_1^{(l)},W_2^{(l)}] \big\|_\infty$  and $L_{\Phi^{(l)}} = 1$. We readout the norms $\big\| [W_1^{(l)},W_2^{(l)}] \big\|_\infty$ for every layer, and plug it into our bound.
The bound also depends on the infinity norm and Lipschitz constant of the loss. We compute these constants in the next subsection.

\subsubsection{Computation of the Infinity Norm and Lipschitz Constant of the Loss}

Next we bound the Lipschitz constant and infinity norm of the loss. 
Namely, we derive properties of softmax composed on cross-entropy.
Softmax composed with the cross-entropy loss in the case of binary classes take the form
\[
\mathcal{L}_{{\rm CE}}(\mathbf{x};\mathbf{y}) = -y_1 \log \left(\frac{e^{x_1}}{e^{x_1} + e^{x_2}}\right) - y_2 \log \left(\frac{e^{x_2}}{e^{x_1} + e^{x_2}}\right),
\]
where $\mathbf{x}=(x_1,x_2)\in\mathbb{R}^2$ and $(y_1, y_2) \in \{e_1, e_2 \}$ depends on the target label, where $e_1=(1,0)$ and $e_2=(0,1)$. When the target label is fixed, we  write in short $\mathcal{L}_{{\rm CE}}(\mathbf{x}) :=\mathcal{L}_{{\rm CE}}(\mathbf{x};\mathbf{y})$.

\begin{lemma}
The loss $\mathcal{L}_{{\rm CE}}$ is Lipschitz continuous with Lipschitz constant 1.
Additionally, $\mathcal{L}_{{\rm CE}}$ is locally bounded in the following sense: \[\|\mathcal{L}_{{\rm CE}}\|_{L^\infty([-K, K]^2)} \leq \log(1+ e^{2K}),\]
where
$\|\mathcal{L}_{{\rm CE}}\|_{L^\infty([-K, K]^2)} = \max_{\mathbf{x}\in [-K,K]^2}\|\mathcal{L}_{{\rm CE}}(\mathbf{x})\|$.

\end{lemma}
\begin{proof}
We compute
\begin{align*}
\frac{\partial}{\partial x_1} \mathcal{L}_{{\rm CE}}(x_1, x_2) =& -y_1 \left(1 - \frac{e^{x_1}}{e^{x_1} + e^{x_2}}\right) + y_2 \frac{e^{x_1}}{e^{x_1} + e^{x_2}}\\
=& (y_1 + y_2)\frac{e^{x_1}}{e^{x_1} + e^{x_2}} - y_1\\
=& \frac{e^{x_1}}{e^{x_1} + e^{x_2}} - y_1
\end{align*}
Since $\frac{e^{x_1}}{e^{x_1} + e^{x_2}} \in [0, 1]$ and $y_1 \in \{0,1\}$, this implies
\[
\left|\frac{\partial}{\partial x_1} \mathcal{L}_{{\rm CE}}(x_1, x_2)\right| \leq 1.
\]
By symmetry we conclude that $\mathcal{L}_{{\rm CE}}$ is Lipschitz continuous with constant 1.

Last,  let $(x_1, x_2) \in [-K, K]^2$ and without loss of generality $y_1 = 1$ and $y_2 = 0$. We have
\begin{align*}
|\mathcal{L}_{{\rm CE}}(x_1, x_2)| =& - \log \left(\frac{e^{x_1}}{e^{x_1} + e^{x_2}}\right) \\
=& \log\left( 1 + \frac{e^{x_2}}{e^{x_1}} \right)\\
\leq & \log\left( 1 + e^{2K} \right).
\end{align*}
\end{proof}

The above lemma tells us that in order to bound the infinity norm of the loss we must bound the domain of the loss - the output of the MPNN. 
\begin{lemma}
Let $\Theta = \big((\Phi^{(l)})_{l=1}^T, (\Psi^{(l)})_{l=1}^T \big)$ be a MPNN s.t.  Assumption \ref{ass:graphon4}. is satisfied. Consider a  graph with $N$ nodes and a graph feature map  $\mathbf{f} \in \mathbb{R}^{N \times F}$.
Then, 
\[
 \| \Theta_G^P(\mathbf{f})\|_{\infty}  \leq  A' + A'' \|\mathbf{f}\|_{\infty; \infty},
\]
where
\[
\begin{aligned}
A' & = \sum_{l=1}^{T} \Big(
L_{\Psi^{(l)}}    \|\Phi^{(l )}(0,0)\|_\infty  + \|\Psi^{(l )}(0,0)\|_\infty
 \Big)   \prod_{l' = l+1}^T 
   L_{\Psi^{(l')}}
   \big( L_{\Phi^{(l')}} + 1\big)
   \end{aligned}
\]
and
\[
A'' = \prod_{l=1}^T 
 L_{\Psi^{(l)}} 
   \big(1+ L_{\Phi^{(l)}}\big).
\]
\end{lemma}
\begin{proof}
Let $G$ be a graph with weight matrix  $\mathbf{W} = (W_{i,j})_{i,j=1\ldots,N}$.
Let $l=0, \ldots, T-1$. Then, for $k =0, \ldots, l$, we have
\begin{equation}
    \label{eq:BoundgMPNN1}
\begin{aligned}
\|\mathbf{f}^{(k+1)}_i\|_\infty & =  \Big\| \Psi^{(k+1)} \Big(\mathbf{f}^{(k)}_i, \mathbf{m}_i^{(k+1)}\Big) \Big\|_\infty \\ 
& \leq \Big\| \Psi^{(k+1)} \Big(\mathbf{f}^{(k)}_i,\mathbf{m}_i^{(k+1)} \Big) - \Psi^{(k+1)}(0,0) \Big\|_\infty  +   \|\Psi^{(k+1)}(0,0)\|_\infty
\\
& \leq L_{\Psi^{(k+1)}} \Big(\| \mathbf{f}^{(k)}_i \|_\infty
+ \big\|\mathbf{m}_i^{(k+1)}\big\|_\infty \Big) + \|\Psi^{(k+1)}(0,0)\|_\infty,
\end{aligned}
\end{equation}
where $\mathbf{m}_i^{(k+1)} = \frac{1}{\sum_{j=1}^N W_{i,j}} \sum_{j=1}^N W_{i,j} \Phi^{(k+1)}\big( \mathbf{f}^{(k)}_i , \mathbf{f}^{(k)}_j\big)$.
For this message term, we have
\begin{equation}
    \label{eq:BoundgMPNN2}
\begin{aligned}
\|\mathbf{m}_i^{(k+1)} \|_\infty& = 
\left\|   \frac{1}{\sum_{j=1}^N W_{i,j}} \sum_{j=1}^N W_{i,j} \Phi^{(k+1)}\big( \mathbf{f}^{(k)}_i , \mathbf{f}^{(k)}_j\big) \right\|_\infty \\
&\leq 
\left\| \max_{j=1,\ldots, N} \Phi^{(k+1)}\big( \mathbf{f}^{(k)}_i , \mathbf{f}^{(k)}_j\big) \right\|_\infty \\
& \leq \max_{j=1, \ldots, N} L_{\Phi^{(k+1)}} \|\mathbf{f}_j^{(k)}\|_\infty + \|\Phi^{(k+1)}(0,0)\|_\infty.
\end{aligned}
\end{equation}

Denote $\|\mathbf{f}\|_{\infty; \infty} = \max_{i=1, \ldots, N} \max_{j=1, \ldots, F} | \mathbf{f}_{i,j} |$ for $\mathbf{f} \in \mathbb{R}^{N \times F}$.
We have as a result of (\ref{eq:BoundgMPNN1}) and (\ref{eq:BoundgMPNN2})
\[
\begin{aligned}
& \|\mathbf{f}^{(k+1)}\|_{\infty; \infty} \\ 
& \leq L_{\Psi^{(k+1)}} \Big( \|\mathbf{f}^{(k)}\|_{\infty;\infty}  +\big(
L_{\Phi^{(k+1)}} \|\mathbf{f}^{(k)}\|_{\infty; \infty} + \|\Phi^{(k+1)}(0,0)\|_\infty
\big)\Big) + \|\Psi^{(k+1)}(0,0)\|_\infty
\end{aligned}
\]
which we can write as
\[
\begin{aligned}
& \|\mathbf{f}^{(k+1)}\|_{\infty; \infty} \\ & \leq  L_{\Psi^{(k+1)}}\Big(  1
+ L_{\Phi^{(k+1)}} \Big) \| \mathbf{f}^{(k)} \|_{\infty; \infty} + L_{\Psi^{(k+1)}}\|\Phi^{(k+1)}(0,0)\|_\infty  + \|\Psi^{(k+1)}(0,0)\|_{ \infty}.
\end{aligned}
\]
We apply Lemma \ref{lemma:RecRecGen} to solve this recurrence relation, to get
\begin{align*}
\|\mathbf{f}^{(k)}\|_{\infty; \infty} \leq &\sum_{l=1}^k\left( L_{\Psi^{(l)}}\|\Phi^{(l)}(0,0)\|_\infty + \|\Psi^{(l)}(0,0) \|_\infty\right) \prod_{l'=l+1}^k L_{\Psi^{(l')}}(1 + L_{\Phi^{(l')}})\\
+& \|\mathbf{f}^{(0)}\|_{\infty; \infty} \prod_{l=1}^k L_{\Psi^{(l)}}(1 + L_{\Phi^{(l)}})
\end{align*}
Now, since for general bounded functions $F: \chi \to \mathbb{R}^n $ and $x_1,\ldots x_N \in \chi$
\[
\left\|\frac{1}{N}\sum_{i=1}^N F(x_i)  \right\|_\infty \leq \|F\|_{\infty,\infty},
\]
 the proof is done.
\end{proof}

Note that using  our analysis, for the MPNN architecture presented in Section \ref{subsec:GraphSageDetail}, the loss is not just $\mathcal{L}_{\rm CE}$, but the composition of $\mathcal{L}_{\rm CE}$ on the last linear layer of the network. We denote this total loss by $\mathcal{L}_{\rm total} = \mathcal{L}_{\rm CE}\circ Q$. Hence, in our analysis the Lipschitz constant of the total loss is bounded by
\[L_{\mathcal{L}_{\mathbf{\rm total}}} = \| Q \|_{\infty},\]
where $\| Q \|_{\infty}$ is the induced infinity norm of the matrix $Q$. The infinity norm bound of the total loss is bounded by
\[\| \mathcal{L}_{\rm total} \|_{\infty} \leq \log(1 + e^{2(\|Q\|_\infty K+b)}),\]
where $K$ is the infinity norm of the MPNN.

\subsubsection{Details on the  Computation of Bounds from Other Papers}
The papers \cite{liao2021a} and \cite{pmlr-v119-garg20c} do not provide generalization bounds for general MPNNs, but only for a specific architecture -- GNNs with mean field updates, as defined in Definition \ref{def:PACMPNN}, namely
\[ \mathbf{f}_i^{(l)} = \Psi\left( 
W_1 \mathbf{x}_i + W_2 \rho\left( 
\sum_{u\in \mathcal{N}(v)} \Phi(\mathbf{f}_j^{(l-1)})
\right) 
\right),\]
where $\rho$, $\Psi$ and $\Phi$ are nonlinear transformations, and $W_1$ and $W_2$ are linear transformations. This is followed by a global pooling layer, which takes as an input $\mathbf{f}^{(T-1)}\in \mathbb{R}^{N\times K}$, and returns the vector
\[ \frac{1}{N} \mathbf{1}_N \mathbf{f}^{(T-1)} W_T \in \mathbb{R}^{1 \times K},  \]
where $W_T$ is a linear transformation.
 Here $\mathbf{1}_N$ denotes the vector $(1, \ldots, 1) \in \mathbb{R}^{1 \times N}$, where $N$ is the number of nodes in the graph. 
As described in Subsection \ref{sub:pacbound}, GNNs with mean field updates are a special case of MPNNs.   

 The generalization bounds in \cite{liao2021a} and \cite{pmlr-v119-garg20c} are formulated in terms of the following constants:  $\zeta = \min\left( \|W_1\|_2, \|W_2\|_2, \|W_T\|_2 \right)$, $|w|^2_2 = \|W_1\|_F^2 + \|W_2\|_F^2 + \|W_T\|_F^2$, $\lambda = \|W_1\|_2 \|W_T\|_2$, 
$\xi= L_\Psi \frac{(d\mathcal{C})^{l-1}-1}{d\mathcal{C}-1}$, and the percolation complexity $\mathcal{C} = L_\Psi L_\rho L_\Phi\|W_2\|_2$,   where $L_\Psi, L_\rho$ and $L_\Phi$ are the Lipschitz constants of $\Psi, \rho$ and $\Phi$. For the calculation of the generalization bounds, we use the fully non-asymptotic generalizations bounds, provided in \cite[Subsection A.7]{liao2021a}. There, the PAC-Bayesian based bound is given by
\begin{equation}
    \sqrt{\frac{42^2B^2\left(\max\left( \eta^{-(T+1)}, (\lambda\zeta)^{\frac{T+1}{T}} \right) \right)^2 T^2 h \log(4Th) |w|^2_2}{\gamma^2 m}}.
\end{equation}
The Rademacher based bound is given by 
\begin{equation}
    48h\|W_T\|_2Z \sqrt{\frac{3\log\left( 
    24 \|W_T\|_2 \sqrt{m} \max\left(
    Z, M \sqrt{h} \max(B\|W_1\|_2, \bar{R}\|W_2\|_2)
    \right)
    \right)}{\gamma^2m}}.
\end{equation}

 Note that GraphSage cannot be described in terms of mean field update networks, and vice versa. In order to still report some comparison between the generalization bounds, we offer some conversion between the constants of the two methods, and then apply the PAC-Bayes and Rademacher bounds on the converted bounds. It should be noted that the comparison is a bit like ``comparing apples to oranges,'' but still gives insight into the respective bounds, their asymptotics, and their usefulness in practical situations. 
 
 Since the  transformation by $ \Psi(W_1 (\cdot) + W_2\circ \rho(\cdot))$  can be seen as an update function, similarly to the one in GraphSage, we set in the PAC-Bayes bound $\|W_1\|_2 =\|W_2\|_2 = 1/T \sum_{l=1}^T L_{\Psi^{(l)}}$, where $L_{\Psi^{(l)}}$ is the Lipschitz constant of the update function of GraphSage in the $l$-th layer. The message function in GraphSage is the identity, which corresponds to $\Phi$. We thus convert this to $L_\Phi = 1$ in the PAC-Bayes  generalization bound. 
Finally, we give a lower bound for the maximum degree over all graphs in the datasets by setting  $d = N d_{\mathrm{min}}$ (note that the PAC-Bayes and Rademacher complexity based bounds increase with increasing maximum degree). 

\subsubsection{Generalization Comparison Results}

The results are reported in Figure \ref{fig:ERSBM}.
The different experimental setting are given on the x-Axis. We report experiments for MPNNs with depth $T=1,2,3$  with weight decay (WD) and without weight decay (w/o WD).
The bound values are reported in a logarithmic y-Axis to improve comparability.
In addition to the figures, we also provide numerical values of the bound calculations in Table \ref{table:comparisonNumValues}. 

Our generalization bound is tighter than the PAC-Bayes  bound and the Rademacher  bound under all settings, i.e., for all datasets, for all depths, with weight decay and also without weight decay.

\begin{figure}
\begin{minipage}{\linewidth}
  \centering
  \includegraphics[width=.4\linewidth]{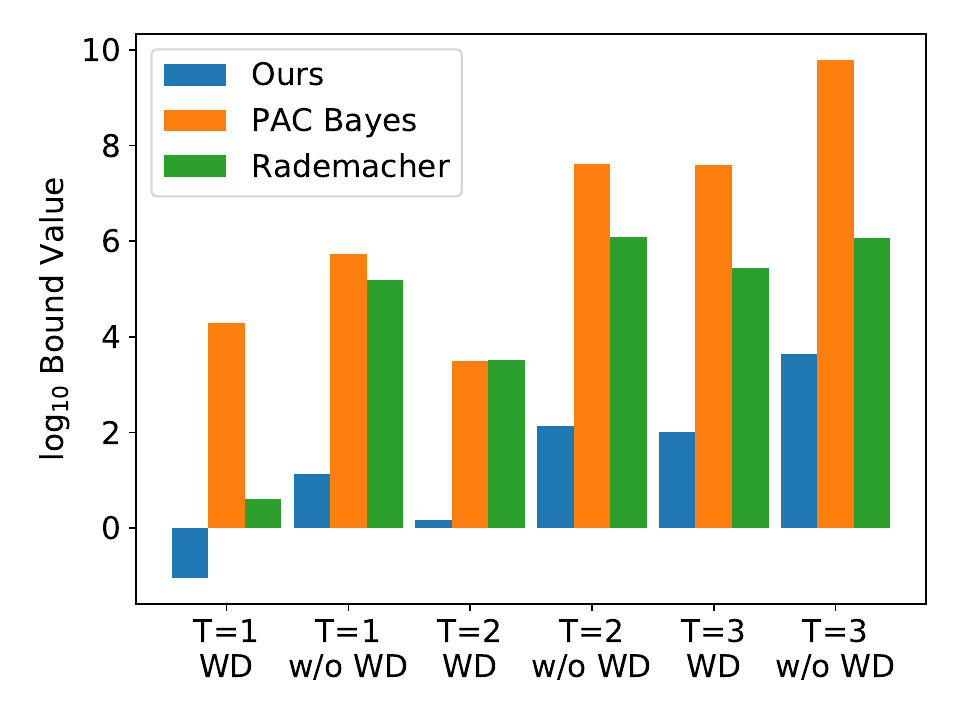}
  \caption*{i) Generalization bounds on  \textbf{ER}-\textbf{SBM}}
\end{minipage}
\vspace{1em} 

\begin{minipage}[b]{0.5\linewidth}
  \centering
  \includegraphics[width=.8\linewidth]{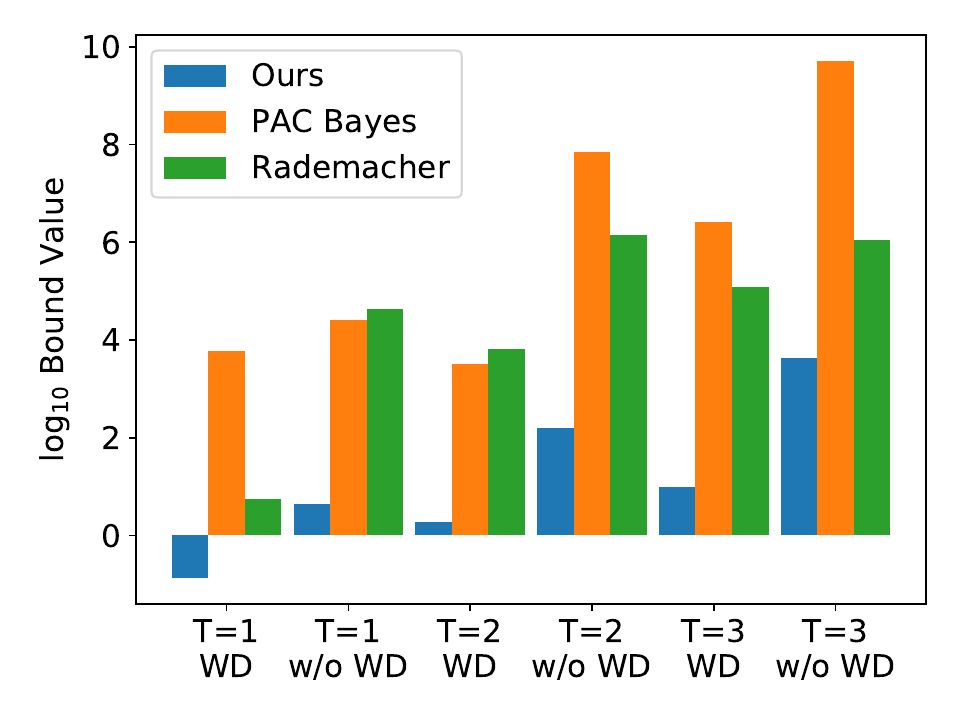}
  \caption*{ii) Generalization bounds on  \textbf{ER}-\textbf{EXP}}
\end{minipage}
\hfill
\begin{minipage}[b]{0.5\linewidth}
  \centering
  \includegraphics[width=.8\linewidth]{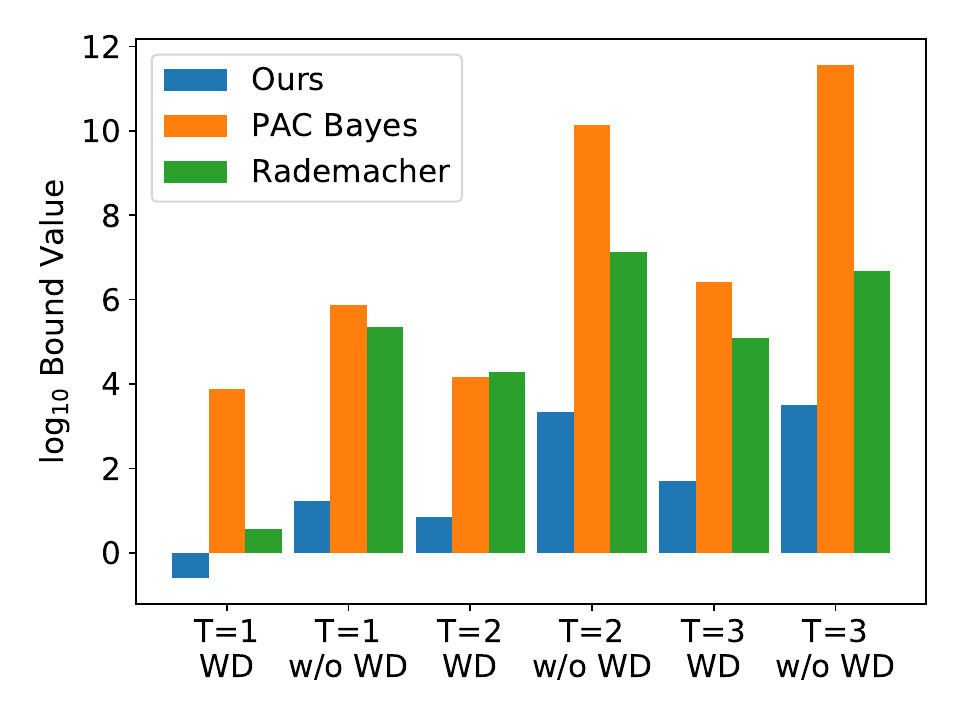}
  \caption*{iii) Generalization bounds on   \textbf{EXP}-\textbf{SBM}}
\end{minipage}
\caption{The generalization bounds on all datasets, i.e., i) \textbf{ER}-\textbf{SBM}, ii) \textbf{ER}-\textbf{EXP} and iii) \textbf{EXP}-\textbf{SBM}, for different number of layers $T=1,2,3$ with weight decay (WD) and without weight decay (w/o WD). Bounds are given in a $\log_{10}$-scale. \label{fig:ERSBM}}
\end{figure}

\begin{table}
\caption{Bound comparisons on all synthetic datasets.\label{table:comparisonNumValues}}
\begin{center}
\vskip 0.15in
\begin{tabular}{cccc} 
\midrule
   T = 1  WD &  ER - SBM   & ER - EXP & SBM - EXP  \cr 
\midrule \midrule

\makecell{Rademacher 
} & $3.9597\times 10^0$& $5.5278 \times 10^0$ & $3.6869 \times 10^0$   \\
\makecell{PAC-Bayesian 
} & $1.9597\times 10^{4}$ & $ 5.8187 \times 10^3$ & $7.8245 \times 10^3$  \\
Ours  & $\mathbf{8.9113\times 10^{-2}}$ & $\mathbf{1.3299 \times 10^{-1}}$ & $\mathbf{2.4561 \times 10^{-1}}$      \\
\midrule \midrule
T = 1 w/o WD  \cr 
\midrule \midrule

\makecell{Rademacher 
} & $1.7329\times 10^5$& $4.3686 \times 10^4 $ & $2.2311 \times 10^5$   \\
\makecell{PAC-Bayesian 
} & $6.0161\times 10^5$ & $ 2.6146 \times 10^4$ & $7.2969 \times 10^5$  \\  Ours  & $\mathbf{ 1.4408\times 10^1}$ & $\mathbf{4.4856 \times 10^0}$ & $\mathbf{1.71143 \times 10^1}$      \\
\midrule \midrule
T = 2  WD \cr 
\midrule \midrule

\makecell{Rademacher 
} & $3.2439 \times 10^3$& $ 6.3963 \times 10^3$ & $1.9428 \times 10^4$   \\
\makecell{PAC-Bayesian 
} & $3.0695 \times 10^3$ & $3.2992 \times 10^3$ & $1.4299 \times 10^4$  \\  Ours  & $\mathbf{1.4788 \times 10^0}$ & $\mathbf{1.8582 \times 10^0}$ & $\mathbf{7.0910 \times 10^0}$      \\
\midrule \midrule
T = 2 w/o WD  \cr 
\midrule \midrule

\makecell{Rademacher 
} & $1.1943 \times 10^6$ & $1.4238\times 10^6 $ & $1.3619 \times 10^7$   \\
\makecell{PAC-Bayesian 
} & $4.0392\times 10^7$ & $6.9262 \times 10^7$ & $1.3817 \times 10^{10}$  \\  Ours  & $\mathbf{1.3526 \times 10^2}$ & $\mathbf{1.5942 \times 10^2}$ & $\mathbf{2.1931 \times 10^3}$      \\
\midrule \midrule
T = 3  WD \cr 
\midrule \midrule

\makecell{Rademacher 
} & $2.7221 \times 10^5$& $1.2286 \times 10^5$ & $1.2529\times 10^5$   \\
\makecell{PAC-Bayesian 
} & $3.9963 \times 10^7$ & $2.6016 \times 10^6$ & $2.6141 \times 10^6$  \\  Ours  & $\mathbf{1.0255 \times 10^2}$ & $\mathbf{9.9522 \times 10^0}$ & $\mathbf{4.9247 \times 10^1}$     

  \\
  \midrule \midrule
T = 3 w/o WD   \cr 
\midrule \midrule

\makecell{Rademacher 
} & $1.1762 \times 10^6$& $1.0872 \times 10^6$ & $4.8271 \times 10^6$   \\
\makecell{PAC-Bayesian 
} & $6.225 \times 10^9$ & $5.1689 \times  10^9$ & $3.7028 \times 10^{11} $  \\  Ours  & $\mathbf{4.3375 \times 10^3}$ & $\mathbf{4.2497 \times 10^3}$ & $\mathbf{3.2436\times 10^3}$      \\
\end{tabular}
\end{center}
\end{table}

\subsection{Additional Comparison of the Generalization Bounds}
\label{subsec:othernumResults}

In this subsection, we present additional plots of the generalization bounds which showcase the dependency on the average graph sizes in the dataset.
The parameters in these plots are set not for a specific dataset and trained network. The plots can be interpreted as the bounds corresponding to training with certain  constraints or regularization terms leading to the respective constants (Lipschitz bounds and infinity norms).

We consider a theoretical setting in which we assume that the following parameters are given: The dataset has 50K graphs, randomly sampled from RGMs with graphons that have maximum infinity norm $\|W\|_\infty = 0.4$ and Lipschitz norm $L_W = 0.5$. We assume that the metric-space signal are bounded by $0.5$ and have Lipschitz constants of maximum $0.5$. Furthermore, we assume there is a linear layer after pooling such that the norms of weight matrix  and of the bias are upper bounded by $0.5$ and $0.1$, respectively. The infinity and Lipschitz norms of the loss function are assumed to be bounded by $1$.

We then consider different datasets with graphs of average size $N=2^4, 2^5, \ldots, 2^{25}$. Since the PAC-Bayes and Rademacher generalization bounds scale with the maximum node degree $d$ of the graphs, we estimate  the degree by setting $d = N\cdot d_{\mathrm{min}}$, where $d_{\mathrm{min}}$ is the graphon degree. We report our generalization bound with respect to the graph size in Figure \ref{tbl:conv4}.
The comparison with other generalization bounds is given in Figure \ref{tbl:conv2}. As expected by our theoretical results, our generalization bound decays with respect to the average graph size. In contrast, we see that both the PAC Bayes based bound and the Rademacher based bound increase with respect to the increasing graph size. 
 
In Figure \ref{fig:Lnorm} we showcase the dependency of our generalization bound on the Lipschitz constant of the graphons. For this, we fix the graph sizes in the dataset to 1000, and compute the resulting bounds for increasing Lipschitz norms. The rest of the parameters are as specified above. We plot the generalization bound for MPNNs with depth $1,2$ and $3$ in Figure \ref{fig:Lnorm}.

\begin{figure}
\begin{center}
    \begin{tabular}{M{60mm}M{60mm}}
  \includegraphics[width=\linewidth]{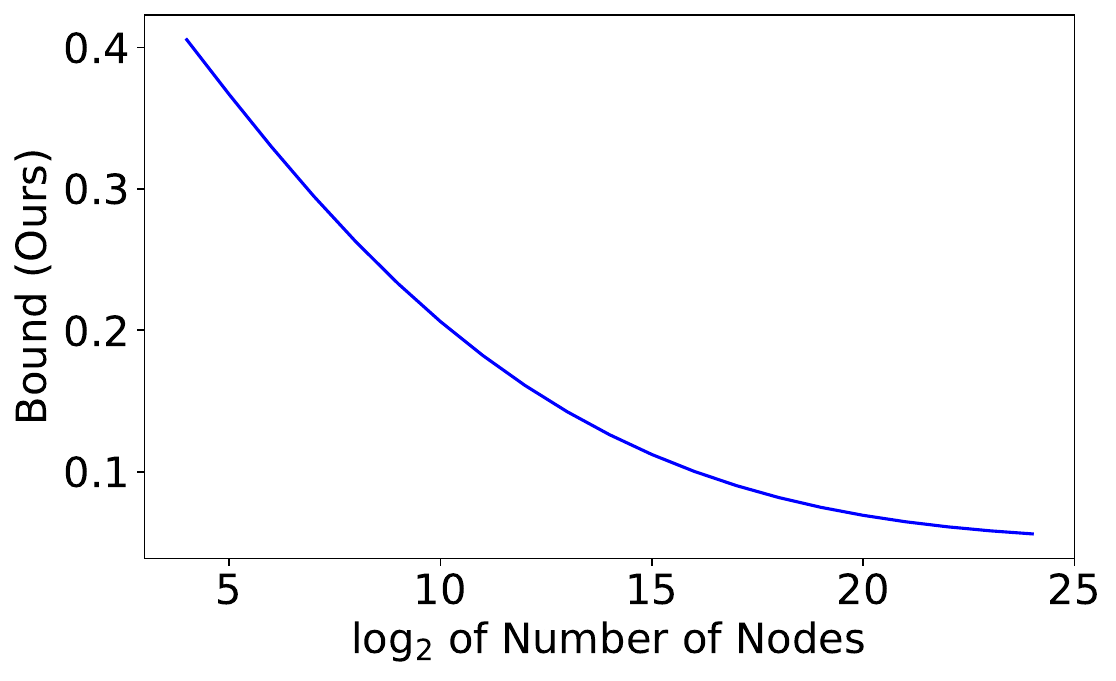} &  
  \includegraphics[width=\linewidth]{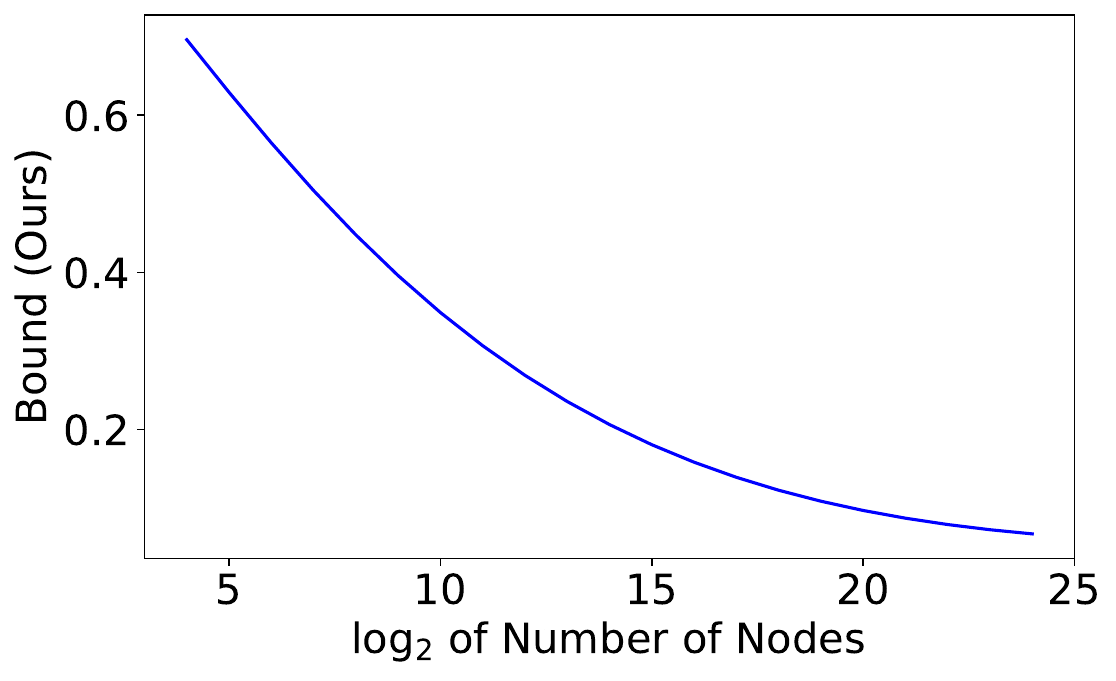}
  \end{tabular}
    \end{center}
    \caption{Our generalization bounds with respect to increasing average graph sizes. On the x-axis we give the average number of nodes in the dataset in $\log_2$-scale. On the y-axis, we give our generalization bound.} 
    \label{tbl:conv4}
\end{figure}

\begin{figure}
\begin{center}
    \begin{tabular}{M{60mm}M{60mm}}
  \includegraphics[width=\linewidth]{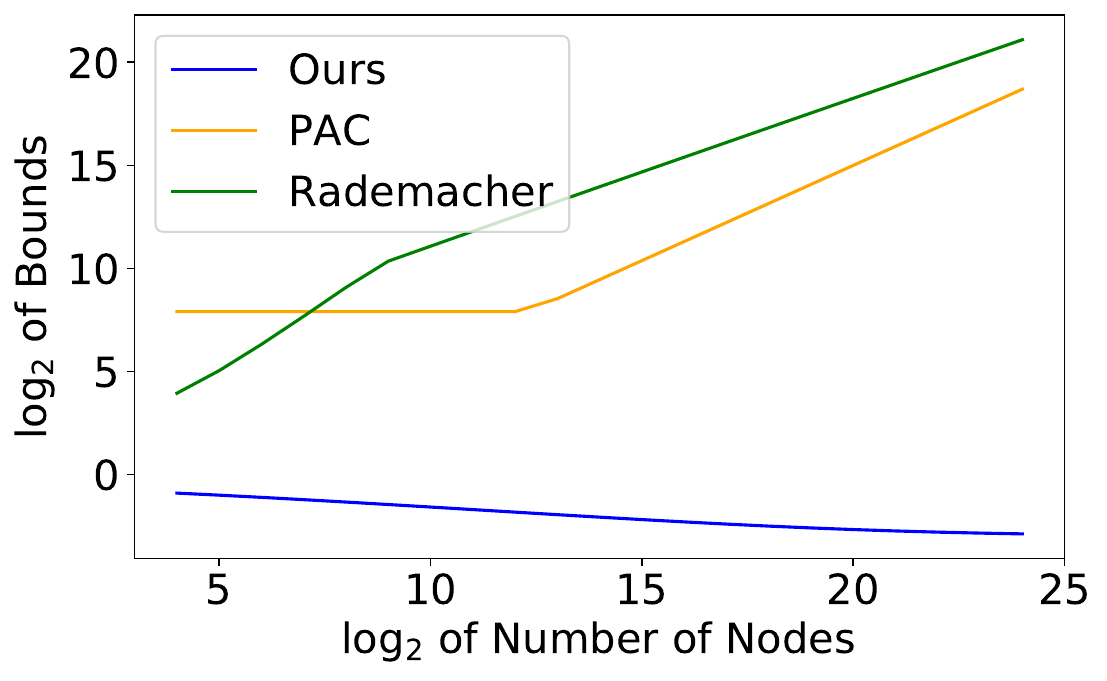} &  
  \includegraphics[width=\linewidth]{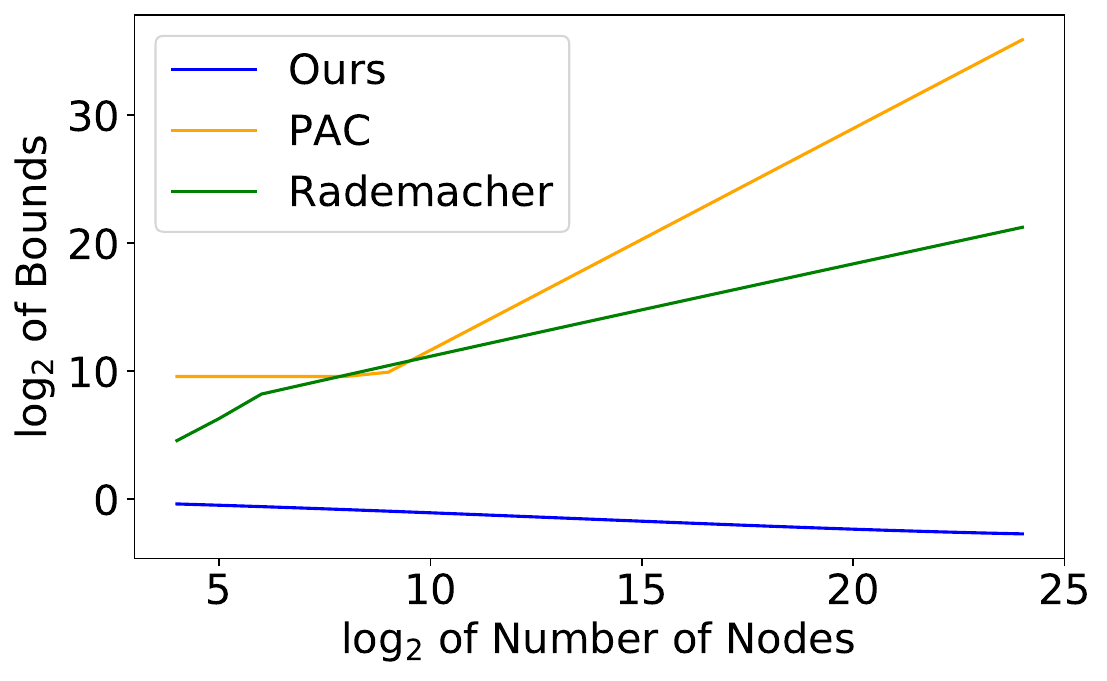}
  \end{tabular}
    \end{center}
    \caption{The generalization bounds with respect to  increasing average graph sizes. On the x-axis, we give the average number of nodes in the dataset in $\log_2$-scale. On the y-Axis, we give our generalization bound, the PAC-Bayes based bound and the Rademacher complexity based bound for MPNNs with depth 2 (left) and depth 3 (right) also in $\log_2$-scale.} 
    \label{tbl:conv2}
\end{figure}

\begin{figure}
\begin{center}
  \includegraphics[width=0.4\linewidth]{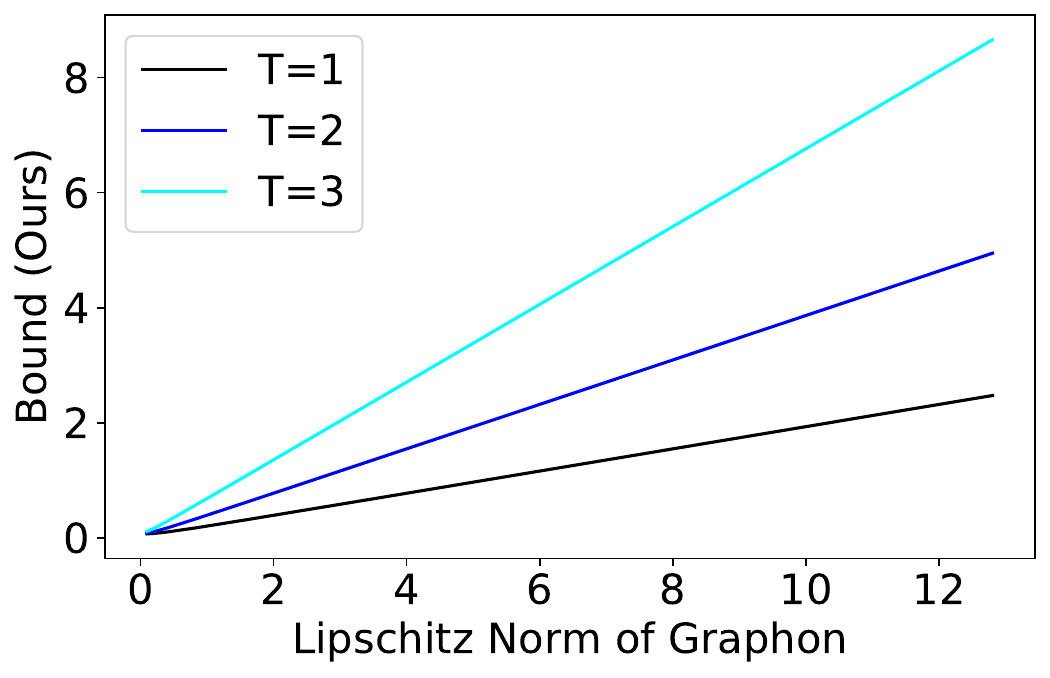} 
    \end{center}
    \caption{Our generalization bounds with respect to increasing average Lipschitz norm of the graphon. On the x-axis we give the maximal Lipschitz norm of the graphons from which we sampl the dataset. On the y-axis, we give our generalization bound.  The rest of the parameters are equal to the parameters in the setting of Figure \ref{tbl:conv2} (see Subsection \ref{subsec:othernumResults}) with fixed graph size $N=1000$.} 
    \label{fig:Lnorm}
\end{figure}

\newpage

\section{Background in Random Processes}
\label{AppendixC}
In this section, we provide background information in probability theory, and focus on random processes and concentration of measure inequalities. 

\begin{definition}[Definition 7.1.1. in \cite{vershynin_2018}]
A \emph{random process} is a collection of random variables $(Y_t)_{t\in T}$ on the same probability space, which are indexed by the elements $t$ of some set $T$.
\end{definition}


The following lemma provides an upper bound on the probability that the sum of bounded independent random variables deviates from its expected value by more than a certain amount.

\begin{theorem}[Hoeffding's Inequality]
\label{thm:Hoeffdings}
Let $Y_1, \ldots, Y_N$ be independent random variables such that $a \leq Y_i \leq b$ almost surely. Then, for every $k>0$, 
\[
\mathbb{P}\Big(
\Big|
\frac{1}{N} \sum_{i=1}^N (Y_i - \E[Y_i] )
\Big| \geq k
\Big) \leq 
2 \exp\Big(-
\frac{2 k^2 N}{( b-a)^2}
\Big).
\]
\end{theorem}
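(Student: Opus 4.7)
The plan is to prove the inequality by the classical Chernoff/moment-generating-function route, which reduces the deviation estimate to controlling the MGF of a sum of bounded independent increments. First, set $Y_i = X_i - \E[X_i]$ so that each $Y_i$ is centered and takes values in an interval of length $b-a$. By symmetry (replacing $Y_i$ with $-Y_i$) together with a union bound over the two tails, it suffices to show that
\[
\mathbb{P}\Big( \tfrac{1}{N}\textstyle\sum_{i=1}^N Y_i \geq k \Big) \leq \exp\!\Big( -\tfrac{2 k^2 N}{(b-a)^2}\Big),
\]
after which doubling yields the claimed two-sided bound (with the sign convention in the exponent that makes the statement meaningful).

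The upper tail is attacked by the exponential Markov inequality: for any $t>0$,
\[
\mathbb{P}\Big(\textstyle\sum_{i} Y_i \geq kN \Big) \leq e^{-tkN}\, \E\!\left[ e^{t \sum_i Y_i}\right] = e^{-tkN}\prod_{i=1}^N \E\!\left[e^{tY_i}\right],
\]
where the factorization uses independence of the $Y_i$. To bound each factor I will invoke Hoeffding's lemma, which asserts that for any centered random variable $Y \in [\alpha,\beta]$ one has $\E[e^{tY}] \leq \exp(t^2(\beta-\alpha)^2/8)$. This lemma is itself proved by writing $Y = \lambda\beta + (1-\lambda)\alpha$ with $\lambda = (Y-\alpha)/(\beta-\alpha) \in [0,1]$, applying convexity of the exponential to get $e^{tY} \leq \lambda e^{t\beta} + (1-\lambda) e^{t\alpha}$, taking expectations (using $\E Y = 0$), and then showing the resulting function $\varphi(t) = -t\alpha p + \log(1 - p + p e^{t(\beta - \alpha)})$ (with $p = -\alpha/(\beta - \alpha)$) satisfies $\varphi(0) = \varphi'(0) = 0$ and $\varphi''(t) \leq (\beta-\alpha)^2/4$, so Taylor expansion yields the bound.

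Substituting Hoeffding's lemma gives $\E[e^{tY_i}] \leq \exp(t^2(b-a)^2/8)$ for each $i$, and hence
\[
\mathbb{P}\Big(\textstyle\sum_i Y_i \geq kN\Big) \leq \exp\!\Big(-tkN + \tfrac{Nt^2(b-a)^2}{8}\Big).
\]
Optimizing the right-hand side over $t>0$ by setting the derivative to zero yields $t^* = 4k/(b-a)^2$, which produces exactly the exponent $-2k^2 N/(b-a)^2$. Applying the same argument to $-Y_i$ and summing the two tail probabilities gives the factor $2$ in front, completing the proof.

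The only genuinely non-routine step is Hoeffding's lemma; the remainder is Chernoff bookkeeping and a one-variable optimization. I would therefore expect to spend the bulk of the write-up on verifying the second-derivative bound $\varphi'' \leq (b-a)^2/4$, since everything else is a mechanical application of Markov's inequality, independence, and calculus.
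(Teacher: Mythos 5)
Your proof is correct and is the standard Chernoff--Hoeffding argument. The paper itself gives no proof of this statement: it is quoted as classical background in Appendix C (alongside Dudley's inequality and the sub-Gaussian lemmas from \cite{vershynin_2018}), so there is nothing in the paper to compare against; your write-up is exactly the canonical proof one would supply. Two small remarks. First, you are right to flag the sign: as printed, the exponent $\frac{2k^2N}{(b-a)^2}$ is positive, which makes the bound vacuous; the intended statement is $2\exp\bigl(-\frac{2k^2N}{(b-a)^2}\bigr)$, and that is what your argument establishes. Second, your bookkeeping checks out: with $\E[e^{tY_i}]\le\exp(t^2(b-a)^2/8)$ the exponent $-tkN+Nt^2(b-a)^2/8$ is minimized at $t^*=4k/(b-a)^2$, giving $-2k^2N/(b-a)^2$, and the union bound over the two tails supplies the factor $2$.
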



\begin{definition}[Definition 2.5.6 in \cite{vershynin_2018}]
 A random variable $Y$ is called a \emph{sub-Gaussian random variable}  if there exists a constant $K \in \mathbb{R}$ such that $\E\big[ \exp \big(Y^2/K^2 \big) \big] \leq 2$. The \emph{sub-Gaussian norm} of a sub-Gaussian random variable $X$ is defined as
\[
\|Y\|_{\psi_2} = \inf \Big\{ t > 0 : \E\big[  \exp \big(Y^2/t^2\big) \big] \leq  2 \Big\}.
\]
\end{definition}


\begin{lemma}[Example 2.5.8 in \cite{vershynin_2018}]
 \label{lemma:subGaussianNorm2}
 Any bounded random variable $Y$ is sub-Gaussian with
 \[
 \|Y\|_{\psi_2} \leq \frac{1}{\sqrt{\ln(2)}} \|Y\|_\infty.
 \]
\end{lemma}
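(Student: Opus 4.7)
The plan is to invoke the definition of the sub-Gaussian norm directly and exhibit an explicit admissible value of $t$. Namely, $\|X\|_{\psi_2}$ is defined as the infimum of $t>0$ such that $\E[\exp(X^2/t^2)]\leq 2$ (reading the definition in the paper with the standard $\exp$), so it suffices to show that the particular choice $t_\star := \|X\|_\infty/\sqrt{\ln 2}$ lies in this feasible set.

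First I would use that $|X|\leq \|X\|_\infty$ almost surely, which yields the pointwise bound $X^2\leq \|X\|_\infty^2$ almost surely. Substituting the candidate $t_\star$ gives $X^2/t_\star^2 \leq \ln 2$ almost surely, and exponentiating preserves this inequality, so $\exp(X^2/t_\star^2)\leq 2$ almost surely. Taking expectations of both sides of this almost-sure bound then produces $\E[\exp(X^2/t_\star^2)]\leq 2$, so $t_\star$ is feasible. Passing to the infimum over all feasible $t$ concludes $\|X\|_{\psi_2}\leq t_\star = \|X\|_\infty/\sqrt{\ln 2}$.

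There is no real obstacle here: the argument is a one-line verification once one unwinds the definition, and the constant $1/\sqrt{\ln 2}$ appears solely because $\exp(\ln 2)=2$, which is precisely the threshold built into the sub-Gaussian norm. The only minor point worth stating explicitly in a formal write-up is that $\|X\|_\infty$ denotes the essential supremum, so all pointwise inequalities above hold in the almost-sure sense, which is exactly what is needed for the expectation step.
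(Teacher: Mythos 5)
Your proof is correct and is exactly the standard argument behind Example 2.5.8 in Vershynin, which the paper simply cites without reproving: exhibit $t_\star=\|X\|_\infty/\sqrt{\ln 2}$ as feasible via the almost-sure bound $\exp(X^2/t_\star^2)\le 2$ and take the infimum. You were also right to read the paper's Definition of $\|\cdot\|_{\psi_2}$ as the standard one with $\E[\exp(X^2/t^2)]\le 2$; the version printed in the appendix is a typo, and no alternative route is needed here.
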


\begin{definition}[Sub-Gaussian increments, Definition 8.1.1 in \cite{vershynin_2018}]
\label{def:subGaussian}
Consider a random process  $(Y_x)_{x \in \chi}$ on a metric space $(\chi,d)$. We say that the process has \emph{sub-Gaussian increments} if there exists a constant $K \geq 0$ such that
\[
\|Y_x -Y_{x'}\|_{\psi_2} \leq Kd(x,x')
\]
for all $x, x' \in \chi$.  
 We call $ (\|Y_x -Y_{x'}\|_{\psi_2})_{x,x' \in \chi}$ the sub-Gaussian increments of $(Y_x)_{x\in \chi}$.
\end{definition}

\begin{lemma}[Centering of sub-Gaussian random variables, Lemma 2.6.8 in \cite{vershynin_2018}]
\label{lemma:centering}
If $Y$ is a sub-Gaussian random variable, then so is $Y- \E[Y]$, and 
\[
\|Y - \E[Y] \|_{\psi_2} \leq \Big(\frac{2}{\ln(2)} +1 \Big) \|Y\|_{\psi_2}.
\]
\end{lemma}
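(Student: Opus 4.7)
The plan is to decompose $\|X - \E[X]\|_{\psi_2}$ using the triangle inequality for the Orlicz-type norm $\|\cdot\|_{\psi_2}$ and then bound the constant random variable $\E[X]$ by falling back on Lemma C.6. Concretely, I would first write
\[
\|X - \E[X]\|_{\psi_2} \leq \|X\|_{\psi_2} + \|\E[X]\|_{\psi_2},
\]
which is the triangle inequality for the Luxemburg norm associated with the convex Young function $\psi_2(y) = e^{y^2} - 1$. It suffices then to control the second summand by a multiple of $\|X\|_{\psi_2}$.

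For the second summand, since $\E[X]$ is deterministic, its supremum norm equals $|\E[X]|$, so Lemma C.6 gives
\[
\|\E[X]\|_{\psi_2} \leq \frac{|\E[X]|}{\sqrt{\ln 2}} \leq \frac{\E|X|}{\sqrt{\ln 2}}.
\]
To finish, I would bound $\E|X|$ in terms of $K := \|X\|_{\psi_2}$ as follows. From the definition $\E[\exp(X^2/K^2)] \leq 2$, Markov's inequality yields the sub-Gaussian tail $\mathbb{P}(|X| > t) \leq 2 e^{-t^2/K^2}$, and the layer-cake representation $\E|X| = \int_0^\infty \mathbb{P}(|X|>t)\,dt$ combined with splitting the integral at $t = K\sqrt{\ln 2}$ (so the first piece contributes $K\sqrt{\ln 2}$ and the tail piece is controlled via the elementary estimate $\int_a^\infty e^{-t^2/K^2}\,dt \leq \frac{K^2}{2a} e^{-a^2/K^2}$) yields an explicit bound of the form $\E|X| \leq \frac{2}{\sqrt{\ln 2}}\,\|X\|_{\psi_2}$. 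Substituting, we get $\|\E[X]\|_{\psi_2} \leq \frac{2}{\ln 2}\|X\|_{\psi_2}$, and combining with the triangle inequality delivers
\[
\|X - \E[X]\|_{\psi_2} \leq \left(1 + \frac{2}{\ln 2}\right)\|X\|_{\psi_2},
\]
which is the claim, up to rearranging the constant as $\frac{2}{\ln 2}+1$.

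The only genuinely subtle step is book-keeping to obtain exactly the constant $\frac{2}{\ln 2}+1$: the triangle inequality is standard (and may alternatively be invoked as Minkowski's inequality for the Luxemburg norm of the Young function $\psi_2$), and the sub-Gaussian tail bound is immediate from Markov. The real work is to choose the split point in the layer-cake integral so that the resulting estimate for $\E|X|$ combines with the Lemma C.6 factor $1/\sqrt{\ln 2}$ to produce precisely $2/\ln 2$; alternative routes (Jensen applied to $y \mapsto e^{y^2/K^2}$, or bounding $\E|X| \leq (\E X^2)^{1/2}$ via $e^x \geq 1+x$) yield valid but slightly different numerical constants, so care is needed in selecting the argument to match the stated form.
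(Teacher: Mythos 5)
Your proposal is correct, and it follows the same route as the source the paper cites for this lemma (Vershynin, Lemma 2.6.8): the paper itself offers no proof, so the only comparison is with that standard argument, which likewise uses the triangle inequality for the Luxemburg norm plus $\|\E[X]\|_{\psi_2}\lesssim \E|X|\lesssim \|X\|_{\psi_2}$. Your constant-tracking checks out: the split at $a=K\sqrt{\ln 2}$ gives $\E|X|\le K\bigl(\sqrt{\ln 2}+\tfrac{1}{2\sqrt{\ln 2}}\bigr)\le \tfrac{2}{\sqrt{\ln 2}}K$ since $\ln 2+\tfrac12\le 2$, and combining with $\|\E[X]\|_{\psi_2}=|\E[X]|/\sqrt{\ln 2}$ yields exactly the stated constant $\tfrac{2}{\ln 2}+1$.
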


\begin{lemma}[Proposition 2.6.1 in \cite{vershynin_2018}]
\label{lemma:subGaussianNorm1}
Let $Y_1, \ldots, Y_N$ be independent mean-zero sub-Gaussian random variables. Then, $\sum_{i=1}^N Y_i$ is also a sub-Gaussian random variable, and
\[  
\|\sum_{i=1}^N Y_i\|_{\psi_2}^2 \leq \frac{2}{\sqrt{2}} e 
\sum_{i=1}^N \|Y_i\|_{\psi_2}^2.
\]
\end{lemma}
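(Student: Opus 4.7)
The plan is to pass through the moment-generating function (MGF) characterization of sub-Gaussian random variables, since the $\psi_2$-norm itself is not additive under sums, whereas the log-MGF is linear in the contribution of each independent summand. Concretely, we exploit the standard equivalence: $X$ with $\E[X]=0$ and $\|X\|_{\psi_2}\leq K$ satisfies an MGF bound of the form $\E[\exp(\lambda X)] \leq \exp(c\lambda^2 K^2)$ for a universal $c>0$, and conversely, an MGF bound of this form implies a $\psi_2$-norm bound $\|X\|_{\psi_2} \leq c' K$ with another universal $c'>0$. Both implications are obtained by Taylor expanding $\exp$ and using the moment characterization $\|X\|_{L^p}\leq c''K\sqrt{p}$ that is equivalent to the $\psi_2$ definition already recalled in Definition \ref{def:subGaussian}.

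Starting from this, I would first show that each mean-zero summand $X_i$ obeys $\E[\exp(\lambda X_i)] \leq \exp(c\lambda^2 \|X_i\|_{\psi_2}^2)$, by expanding the exponential into its Taylor series, dominating $\E[X_i^k]$ using the $L^p$ characterization of the sub-Gaussian norm, and resumming. The mean-zero hypothesis is essential to kill the linear term in $\lambda$ so that the bound takes the pure quadratic form $\exp(c\lambda^2\|X_i\|_{\psi_2}^2)$ rather than $\exp(\lambda\E[X_i]+c\lambda^2 K^2)$.

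Next, I would use independence to multiply MGFs: setting $S=\sum_{i=1}^N X_i$, one has
\[
\E\bigl[\exp(\lambda S)\bigr] = \prod_{i=1}^N \E\bigl[\exp(\lambda X_i)\bigr] \leq \exp\!\Bigl(c\lambda^2 \sum_{i=1}^N \|X_i\|_{\psi_2}^2\Bigr).
\]
This is exactly the MGF bound that certifies $S$ as sub-Gaussian with parameter $\sigma^2 := \sum_i \|X_i\|_{\psi_2}^2$. Converting back, via the reverse implication, yields $\|S\|_{\psi_2}^2 \leq C\sum_{i=1}^N \|X_i\|_{\psi_2}^2$ for some universal constant $C>0$, which is the qualitative content of the lemma.

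The genuine obstacle is not the structural argument, which is routine, but nailing down the explicit constant $2e/\sqrt{2}$ stated in the lemma. To recover this specific value one must be careful at two places: (i) when passing from the $\psi_2$-norm bound on $X_i$ to its MGF bound, the sharp constant comes from optimizing the Taylor-series estimate $\E[X_i^{2k}]\leq 2k\cdot(\|X_i\|_{\psi_2}^2)^k$ (which itself follows from $\E[\exp(X_i^2/\|X_i\|_{\psi_2}^2)]\leq 2$ by expanding the exponential), and (ii) when translating the MGF bound on $S$ back into a $\psi_2$-norm, one again optimizes a Taylor expansion, now for $\E[\exp(S^2/t^2)]$, which produces the factor $e/\sqrt{2}$. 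Combining the two optimal constants gives the prefactor $2e/\sqrt{2}$ in the statement. I expect that most of the writing will consist of this bookkeeping; the probabilistic content is entirely in the MGF multiplication step enabled by independence and the mean-zero hypothesis.
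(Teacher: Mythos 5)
Your proposal follows the same route as the source the paper cites for this lemma (Vershynin, Proposition 2.6.1); the paper gives no proof of its own, and the textbook proof is exactly your argument: pass to the moment generating function, where the mean-zero hypothesis kills the linear term and independence makes the exponent additive, then convert the resulting MGF bound on the sum back into a $\psi_2$-norm bound. Structurally this is correct and complete. The one point that remains an IOU is the explicit prefactor $\tfrac{2}{\sqrt{2}}e$: Vershynin's statement carries only an unspecified absolute constant, so the specific value is the paper's own annotation, and your sketch of how to recover it is not yet a derivation. In particular, the moment bound you quote is misstated --- expanding $\E[\exp(X_i^2/\|X_i\|_{\psi_2}^2)]\le 2$ termwise gives $\E[X_i^{2k}]\le 2\,k!\,\|X_i\|_{\psi_2}^{2k}$, not $2k\,(\|X_i\|_{\psi_2}^2)^k$ --- and a straightforward tracking of constants through the two MGF conversions (the case split $|\lambda|\lessgtr 1/K$ giving $\E[e^{\lambda X_i}]\le e^{c\lambda^2 K_i^2}$ with $c=\tfrac14+\ln 2$, followed by the Gaussian-randomization step $\E[e^{S^2/t^2}]\le (1-4c\sigma^2/t^2)^{-1/2}$) yields a factor of roughly $16c/3\approx 5$, which is larger than $\sqrt{2}e\approx 3.84$. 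So if the exact constant matters for the downstream definition of $\zeta$ in the paper, you would need to exhibit the optimization explicitly rather than assert it; if only some universal constant is needed, your argument suffices as written.
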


\begin{theorem}[Dudley's Inequality, Theorem 8.1.6 in \cite{vershynin_2018}]
\label{thm:Dudleys}
Let $(Y_x)_x$ be a random process on a metric space $(\chi, d)$ with sub-Gaussian increments, i.e., there exists a $K\geq 0$ such that $
\|Y_x -Y_{x'}\|_{\psi_2} \leq Kd(x,x')
$
for all $x, x' \in \chi$. Then, for every $u \geq 0$, the event
\[
\sup_{x,x' \in \chi} |Y_x -Y_{x'}| \leq CK \Big(
\int_0^\infty \sqrt{\log \mathcal{C}(\chi, \varepsilon, d)} d\varepsilon +
u \mathrm{diam}(\chi)
\Big)
\]
holds with probability at least $1- 2 \exp(-u^2 )$, where $\mathcal{C}(\chi, \varepsilon, d)$ is defined in Definition \ref{def:epsCover} and $C$ is a universal constant, specified in \cite[Chapter 8]{vershynin_2018}. 
\end{theorem}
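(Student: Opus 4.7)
The plan is to prove this via the standard generic chaining (dyadic chaining) argument. First, I would set up a sequence of nested $\varepsilon$-nets: for each integer $k \geq 0$, let $\varepsilon_k = 2^{-k} \mathrm{diam}(T)$ and choose $T_k \subset T$ with $|T_k| \leq \mathcal{C}(T, d, \varepsilon_k)$ such that every $t \in T$ has some $\pi_k(t) \in T_k$ with $d(t, \pi_k(t)) \leq \varepsilon_k$. Note $T_0$ can be taken as a single point, and $d(\pi_k(t), \pi_{k-1}(t)) \leq \varepsilon_k + \varepsilon_{k-1} = 3\varepsilon_k$. Without loss of generality I would also arrange (by density of $\bigcup_k T_k$) the telescoping identity $X_t - X_{\pi_0(t)} = \sum_{k\geq 1} \bigl(X_{\pi_k(t)} - X_{\pi_{k-1}(t)}\bigr)$.

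Next, I would apply a union bound level by level. At level $k$, the increment $X_{\pi_k(t)} - X_{\pi_{k-1}(t)}$ ranges over at most $|T_k|\cdot|T_{k-1}| \leq |T_k|^2$ pairs, each with sub-Gaussian norm at most $3K\varepsilon_k$. Setting thresholds of the form $a_k = 3K\varepsilon_k \bigl(c_1 \sqrt{\log |T_k|} + c_2 u \cdot 2^{-k/2}\bigr)$ and using the sub-Gaussian tail
\[
\mathbb{P}\bigl(|X_s - X_t| \geq \lambda K d(s,t)\bigr) \leq 2\exp(-c\lambda^2),
\]
I would get that with probability at least $1 - 2\exp(-u^2) \cdot \sum_k 2^{-k} \cdot \text{(const)}$, the level-$k$ increment at every $t$ is bounded by $a_k$ simultaneously. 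The geometric series in the $u$-dependent piece keeps the exceptional probability $\lesssim 2\exp(-u^2)$, possibly after absorbing constants into $C$.

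Summing the levels yields
\[
\sup_t |X_t - X_{\pi_0(t)}| \;\leq\; CK \sum_{k \geq 1} \varepsilon_k \sqrt{\log \mathcal{C}(T, d, \varepsilon_k)} \;+\; CK u \sum_{k \geq 1} \varepsilon_k 2^{-k/2}.
\]
The first sum is estimated by the entropy integral via a Riemann-sum comparison: since $\varepsilon_{k-1} - \varepsilon_k = \varepsilon_k$, one has $\sum_k \varepsilon_k \sqrt{\log \mathcal{C}(T,d,\varepsilon_k)} \leq 2\int_0^{\mathrm{diam}(T)} \sqrt{\log \mathcal{C}(T,d,\varepsilon)}\, d\varepsilon \leq 2\int_0^\infty \sqrt{\log \mathcal{C}(T,d,\varepsilon)}\, d\varepsilon$ (the covering number is $1$ beyond $\mathrm{diam}(T)$). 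The second sum is bounded by a geometric series times $\mathrm{diam}(T)$. Finally, triangle inequality turns $\sup_t |X_t - X_{\pi_0(t)}|$ into $\sup_{s,t}|X_s - X_t|$ at the cost of a factor of $2$, again absorbed into $C$.

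The main obstacle I expect is the careful bookkeeping of the constants so that the exceptional probability is exactly $2\exp(-u^2)$: the per-level failure probabilities are roughly $2|T_k|^2 \exp(-u_k^2/2)$, and $u_k$ must be chosen large enough that $|T_k|^2 \exp(-u_k^2/2) \leq 2^{-k} \exp(-u^2)$, which forces $u_k^2 \gtrsim \log |T_k| + k + u^2$; the $\sqrt{k}$ contribution then has to be absorbed into the entropy integral, and the $u^2$ contribution gives the $u\,\mathrm{diam}(T)$ term after multiplication by $\varepsilon_k$ and summation. A secondary subtlety is justifying the telescoping (separability/continuity of the process), which is conventionally handled by restricting to a countable dense subset or assuming sample path continuity, and is standard enough that it can be cited rather than reproven here.
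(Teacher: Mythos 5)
The paper does not actually prove this statement: Theorem \ref{thm:Dudleys} is imported verbatim from Vershynin (Theorem 8.1.6) as background material in Appendix \ref{AppendixC}, so the only ``reference proof'' is the textbook's own chaining argument --- which is exactly the route you take. Your outline (dyadic nets $T_k$, telescoping along $\pi_k(t)$, level-wise union bound, Riemann-sum comparison of $\sum_k \varepsilon_k\sqrt{\log\mathcal{C}(T,d,\varepsilon_k)}$ with the entropy integral, and the separability caveat for the telescoping identity) is the standard and correct skeleton.

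One internal inconsistency needs fixing. The threshold you first propose, $a_k = 3K\varepsilon_k\bigl(c_1\sqrt{\log|T_k|} + c_2\,u\,2^{-k/2}\bigr)$, does not make the union bound close: the per-pair failure probability at level $k$ is then roughly $2\exp\bigl(-c\,c_1^2\log|T_k| - c\,c_2^2 u^2 2^{-k}\bigr)$, and since $\exp(-c\,c_2^2u^2 2^{-k})\to 1$ as $k\to\infty$, the sum over levels is not $O(e^{-u^2})$ --- there is no geometric decay in $k$ unless $|T_k|$ happens to grow geometrically. Your final paragraph silently replaces this with the correct requirement $u_k^2\gtrsim \log|T_k| + k + u^2$, i.e.\ $u_k\gtrsim\sqrt{\log|T_k|}+\sqrt{k}+u$ with \emph{no} decaying weight on $u$; that is the choice you should carry through the whole argument. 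With it, the $u$-piece contributes $Ku\sum_k\varepsilon_k\asymp Ku\,\mathrm{diam}(T)$ as claimed, and the extra $\sqrt{k}$-piece contributes $K\sum_k\varepsilon_k\sqrt{k}\asymp K\,\mathrm{diam}(T)$; absorbing the latter into the entropy integral requires the (easy but worth stating) observation that $\int_0^\infty\sqrt{\log\mathcal{C}(T,d,\varepsilon)}\,d\varepsilon\gtrsim\mathrm{diam}(T)$ whenever $T$ has at least two points, since $\mathcal{C}(T,d,\varepsilon)\ge 2$ for all $\varepsilon<\mathrm{diam}(T)/2$. With these two repairs the proof is complete.
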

\end{document}